\newcommand{\grad}{\nabla}
\newcommand{\frob}{{\mathrm{F}}}
\renewcommand{\tau}{t}
\newcommand{\T}{T}
\renewcommand{\t}{_T}
\newcommand{\tmo}{_{T-1}}
\newcommand{\tp}{_{\tau}}
\newcommand{\tpmo}{_{\tau-1}}
\newcommand{\zero}{_0}
\newcommand{\one}{_1}
\newcommand{\s}{s}
\newcommand{\sh}{\hat{s}}
\newcommand{\sopt}{\s^*}
\renewcommand{\S}{{\cal S}}
\newcommand{\z}{z}
\newcommand{\y}{y}
\newcommand{\hist}{h}
\newcommand{\f}{f}
\newcommand{\g}{g}
\newcommand{\fopt}{\f^*}
\newcommand{\foptcontext}{\f^+\t}
\newcommand{\K}{{\cal K}}
\newcommand{\aS}{{\cal A}}
\renewcommand{\a}{a}
\newcommand{\aopt}{\a^*}
\newcommand{\D}{D}
\newcommand{\A}{K}
\newcommand{\boundK}{D_{\K}}
\newcommand{\convA}{\overline{\cal A}}
\newcommand{\pol}{\overline{\pi}}
\newcommand{\muh}{\hat{\mu}}
\newcommand{\probxS}{P}
\newcommand{\x}{x}
\newcommand{\xS}{{\cal X}}
\renewcommand{\r}{r}
\newcommand{\rh}{\rho}
\newcommand{\regh}[1]{R_{#1}}
\newcommand{\innerreg}{\tilde{R}}
\newcommand{\boundfunction}[1]{\overline{B}(#1)}
\newcommand{\boundapprox}[2]{\overline{R}^{\rm scal}(#1, #2)}
\newcommand{\boundapproxgen}[2]{\overline{R}^{\rm gen}(#1, #2)}
\newcommand{\curv}[1]{\tilde{C}_{#1}}
\newcommand{\boundcurvdiff}[1]{\overline{C}(#1)}
\newcommand{\bounddiffopt}[1]{\overline{F}^*(#1)}
\newcommand{\uniformboundf}[1]{D(#1)}
\newcommand{\uniformf}{M_2}
\newcommand{\uniformdiff}{M_1}
\newcommand{\moreauf}[1]{\tilde{f}_{#1}}
\newcommand{\proxf}[1]{{\rm prox}_{#1}}
\newcommand{\lip}{L}
\newcommand{\smooth}{C}
\newcommand{\Z}[1]{\mathcal{Z}_{#1}}
\newcommand{\maxs}{\max_{\s\in\S}}
\newcommand{\maxa}{\max_{\a\in\aS}}
\newcommand{\maxyRD}{\max_{y\in\Re^\D}}
\newcommand{\argmaxyRD}{\argmax_{y\in\Re^\D}}
\newcommand{\maxpol}{\max_{\pol:\xS\rightarrow\convA}}
\newcommand{\expectx}[1]{\expect_{\x\sim\probxS}\big[#1\big]}
\newcommand{\expecttp}[1]{\expect\tp\big[#1\big]}
\newcommand{\argmaxs}{\argmax_{\s\in\S}}
\newcommand{\argmaxa}{\argmax_{\a\in\aS}}
\newcommand{\argmaxpol}{\argmax_{\pol:\xS\rightarrow\convA}}
\newcommand{\meanT}{\frac{1}{\T}\sum_{\tau=1}^\T}
\newcommand{\sumT}{\sum_{\tau=1}^\T}
\newcommand{\boundsumbyregret}{\rho}
\newcommand{\boundsumbyazuma}{\alpha}
\newcommand{\filt}{\mathfrak{F}}
\newcommand{\filtact}{\overline{\mathfrak{F}}}
\newcommand{\filttot}{\widetilde{\mathfrak{F}}}
\newcommand{\azumamarting}{X}
\newcommand{\Sseq}[1]{\mathcal{S}(\x_{1:#1})}
\newcommand{\w}{w}
\newcommand{\xseq}[1]{\x_{1:#1}}
\newcommand{\regalg}{{\rm RegSq}}
\newcommand{\RegSq}{R_{\rm oracle}}
\newcommand{\tRegSq}{R_{\rm oracle}'}
\newcommand{\mub}{\underline{\mu}}
\newcommand{\muhb}{\underline{\muh}}
\newcommand{\ab}{\underline{\a}}
\newcommand{\expectapt}[1]{\expect_{\a\sim\pr\tp}\big[#1\big]}
\newcommand{\sqloss}{\lambda}
\newcommand{\sqsum}{\Lambda}
\newcommand{\sqbound}{\overline{\Lambda}}
\newcommand{\clsF}{\Phi}
\newcommand{\F}{\phi}
\newcommand{\ubh}{\hat{u}}
\newcommand{\sxS}{\widetilde{\xS}}
\newcommand{\boundsxS}{D_{\sxS}}
\newcommand{\darlo}{\delta'}
\newcommand{\dazuma}{\delta}
\newcommand{\pr}{\mathfrak{A}}
\newcommand{\pup}{\mathfrak{U}}
\newcommand{\prd}[1]{\pr(\hist#1, \x#1, \darlo)}
\newcommand{\pupdtp}{\pup(\hist_{\tau+1}, \darlo)}
\newcommand{\rlo}{RLOO\xspace}
\newcommand{\cbcr}{\textsc{cbcr}\xspace}
\newcommand{\FW}{FW\xspace}
\newcommand{\bcr}{\textsc{bcr}\xspace}
\newcommand{\stheta}{\tilde{\theta}}
\newcommand{\smu}{\tilde{\mu}}
\newcommand{\shist}{\tilde{\hist}}
\newcommand{\sx}{\tilde{\x}}
\newcommand{\sr}{\tilde{\r}}
\newcommand{\sd}{{\tilde{\d}}}
\newcommand{\reghscal}[1]{\regh{#1}^{\rm scal}}
\newcommand{\reghgen}[1]{\regh{#1}^{\rm gen}}
\newcommand{\reghscalsm}[1]{\regh{#1}^{\rm scal, sm}}
\newcommand{\uvar}{\xi}
\newcommand{\updf}{\Lambda}
\newcommand{\abs}[1]{\left|#1\right|}
\newcommand{\C}{C}
\newcommand{\M}{m}
\newcommand{\m}{_m}
\newcommand{\I}{i}
\newcommand{\J}{j}
\renewcommand{\i}{_i}
\renewcommand{\j}{_j}
\newcommand{\useri}{_{\M+1}}
\newcommand{\Useri}{{\M+1}}
\newcommand{\rk}{_k}
\newcommand{\Rk}{k}
\newcommand{\MaxRk}{\overline{\Rk}}
\newcommand{\maxRk}{_{\overline{\Rk}}}
\renewcommand{\v}{v}
\newcommand{\vh}{\hat{v}}
\renewcommand{\u}{u}
\renewcommand{\d}{d}
\renewcommand{\b}{b}
\newcommand{\tpi}{_{\tau, \I}}
\newcommand{\tpuseri}{_{\tau, \Useri}}
\newcommand{\tpseqm}{_{\tau, 1:\M}}
\newcommand{\e}{e}
\renewcommand{\c}{c}
\newcommand{\sumI}{\sum_{\I=1}^\M}
\newcommand{\sumJ}{\sum_{\J=1}^\M}
\newcommand{\sumRk}{\sum_{\Rk=1}^\M}
\newcommand{\gini}{{\rm Gini}}
\newcommand{\like}{\texttt{like}\xspace}
\newcommand{\nolike}{\texttt{dislike}\xspace}
\newcommand{\V}{V}
\newcommand{\identity}[1]{\boldsymbol{I}_{#1}}
\newcommand{\zerov}[1]{\boldsymbol{0}_{#1}}
\newcommand{\thetah}{\hat{\theta}}
\newcommand{\ttheta}{\tilde{\theta}}
\newcommand{\topk}{\text{{\rm top}-$\MaxRk$}}
\newcommand{\partialdev}[2]{\frac{\partial #1}{\partial #2}}
\newcommand{\normtp}[1]{\norm{#1}_{\V\tpmo^{-1}}}
\newcommand{\boundtheta}{D_{\theta}}
\newcommand{\boundxS}{D_{\xS}}
\newcommand{\mat}[1]{\mathrm{mat}(#1)}
\newcommand{\tmpboundfirst}{A}
\newcommand{\tmpboundsecond}{B}
\newcommand{\ee}{\overline{\e}}
\newcommand{\welf}{{\it welf}\xspace}
\newcommand{\equalexpo}{{\it eq. exposure}\xspace}
\newcommand{\ggfgini}{{\it Gini}\xspace}
\newcommand{\patil}{{\it FairLearn}\xspace}
\newcommand{\patilparam}{{\it FairLearn$(c,\alpha)$}\xspace}
\newcommand{\mansoury}{{\it Unbiased-LinUCBRank}\xspace}
\newcommand{\pbmucb}{{\it LinUCBRank}\xspace}
\newcommand{\rankours}{{\it FW-LinUCBRank}\xspace}
\newcommand{\lastfmsmall}{{\it Lastfm-2k}\xspace}
\newcommand{\lastfmsmaller}{{\it Lastfm-50}\xspace}
\newcommand{\sumjm}{\sum\limits_{j=1}^m}
\newcommand{\itemobj}{f^\text{item}}
\newcommand{\htheta}{\hat{\theta}}
\DeclareMathOperator{\flatten}{flatten}
\newcommand{\sumdim}{\sum_{i=1}^D}
\newcommand{\randn}{\mathcal{N}}
\newtheorem{remark}{Remark}
\newtheorem{definition}{Definition}
\newtheorem{assumption}{Assumption}
\newtheorem{theorem}{Theorem}
\newtheorem{proposition}[theorem]{Proposition}
\newtheorem{lemma}[theorem]{Lemma}
\newtheorem{example}[definition]{Example}
\title{Contextual bandits with concave rewards,\\and an application to fair ranking}
\author{%
  Virginie Do \\
  PSL University \& Meta AI  \\
  \texttt{virginiedo@meta.com} \\
  \And Elvis Dohmatob, Matteo Pirotta, Alessandro Lazaric, Nicolas Usunier \\
    Meta AI \\
    \texttt{\{dohmatob,pirotta,lazaric,usunier\}@meta.com} \\
}
\begin{document}

\maketitle
\addtocontents{toc}{\protect\setcounter{tocdepth}{0}}

\begin{abstract}

We consider \textsc{c}ontextual \textsc{b}andits with \textsc{c}oncave \textsc{r}ewards (\cbcr), a multi-objective bandit problem where the desired trade-off between the rewards is defined by a known concave objective function, and the reward vector depends on an observed stochastic context. We present the first algorithm with provably vanishing regret for \cbcr without restrictions on the policy space, whereas prior works were restricted to finite policy spaces or tabular representations. Our solution is based on a geometric interpretation of \cbcr algorithms as optimization algorithms over the convex set of expected rewards spanned by all stochastic policies. Building on Frank-Wolfe analyses in constrained convex optimization, we derive a novel reduction from the \cbcr regret to the regret of a \emph{scalar-reward} bandit problem. We illustrate how to apply the reduction off-the-shelf to obtain algorithms for \cbcr with both linear and general reward functions, in the case of non-combinatorial actions. Motivated by fairness in recommendation, we describe a special case of \cbcr with rankings and fairness-aware objectives, leading to the first algorithm with regret guarantees for contextual combinatorial bandits with fairness of exposure.

\end{abstract}

\setcounter{tocdepth}{2}

\section{Introduction}

Contextual bandits are a popular paradigm for online recommender systems that learn to generate personalized recommendations from user feedback. These algorithms have been mostly developed to maximize a single scalar reward which measures recommendation performance for users. Recent fairness concerns have shifted the focus towards item producers whom are also impacted by the exposure they receive \citep{biega2018equity,geyik2019fairness}, leading to optimize trade-offs between recommendation performance for users and fairness of exposure for items \citep{singh2019policy,zehlike2020reducing}. More generally, there is an increasing pressure to insist on the multi-objective nature of recommender systems \citep{vamplew2018human,stray2021you}, which need to optimize for several engagement metrics and account for multiple stakeholders' interests \citep{mehrotra2020bandit,abdollahpouri2019beyond}. 
In this paper, we focus on the problem of contextual bandits with multiple rewards, where the desired trade-off between the rewards is defined by a known concave objective function, which we refer to as \emph{\textsc{c}ontextual} \emph{\textsc{b}andits with \textsc{c}oncave \textsc{r}ewards} (\cbcr). Concave rewards are particularly relevant to fair recommendation, where several objectives can be expressed as (known) concave functions of the (unknown) utilities of users and items \citep{do2021two}.

Our \cbcr problem is an extension of \textsc{b}andits with \textsc{c}oncave \textsc{r}ewards (\bcr) \citep{agrawal2014bandits} where the vector of multiple rewards depends on an observed stochastic context. We address this extension because contexts are necessary to model the user/item features required for personalized recommendation. Compared to \bcr, the main challenge of \cbcr is that optimal policies depend on the entire distribution of contexts and rewards. In \bcr, optimal policies are distributions over actions, and are found by direct optimization in policy space \citep{agrawal2014bandits,berthet2017fast}. In \cbcr, stationary policies are mappings from a continuous context space to distributions over actions. This makes existing \bcr approaches inapplicable to \cbcr because the policy space is not amenable to tractable optimization without further assumptions or restrictions. As a matter of fact, the only prior theoretical work on \cbcr is restricted to a finite policy set \citep{agrawal2016efficient}.

We present \emph{the first algorithms with provably vanishing regret} for \cbcr without restriction on the policy space. Our main theoretical result is a reduction where the \cbcr regret of an algorithm is bounded by its regret on a proxy bandit task with \emph{single} (scalar) reward. This reduction shows that it is straightforward to turn \emph{any} contextual (scalar reward) bandits into algorithms for \cbcr. We prove this reduction by first re-parameterizing \cbcr as an optimization problem in the space of feasible rewards, and then revealing connections between Frank-Wolfe (\FW) optimization in reward space and a decision problem in action space. This bypasses the challenges of optimization in policy space.

To illustrate how to apply the reduction, we provide two example algorithms for \cbcr with non-combinatorial actions, one for linear rewards based on LinUCB \citep{abbasi2011improved}, and one for \emph{general reward functions} based on the SquareCB algorithm \citep{foster2020beyond} which uses online regression oracles. In particular, we highlight that our reduction can be used together with any exploration/exploitation principle, while previous \FW approaches to \bcr relied exclusively on upper confidence bounds \citep{agrawal2014bandits,berthet2017fast,cheung2019regret}.

Since fairness of exposure is our main motivation for \cbcr, we show how our reduction also applies to the \emph{combinatorial} task of fair ranking with contextual bandits, leading to the \emph{first algorithm with regret guarantees} for this problem, and we show it is \emph{computationally efficient}. We compare the empirical performance of our algorithm to relevant baselines on a music recommendation task.



\textbf{Related work.} \citet{agrawal2016efficient} address a restriction of \cbcr to a finite set of policies, where explicit search is possible. \citet{cheung2019regret} use \FW for reinforcement learning with concave rewards, a similar problem to \cbcr. However, they rely on a tabular setting where there are few enough policies to compute them explicitly. Our approach is the only one to apply to \cbcr without restriction on the policy space, by removing the need for explicit representation and search of optimal policies.

Our work is also related to fairness of exposure in bandits. Most previous works on this topic either do not consider rankings \citep{celis2018algorithmic,wang2021fairness,patil2020achieving,chen2020fair}, or apply to combinatorial bandits without contexts \citep{xu2021combinatorial}. Both these restrictions are impractical for recommender systems. \citet{mansoury2021unbiased,jeunen2021top} propose heuristics with experimental support that apply to both ranking and contexts in this space, but they lack theoretical guarantees. We present the first algorithm with regret guarantees for fair ranking with contextual bandits. We provide a more detailed discussion of the related work in Appendix~\ref{sec:related_work}.

\section{Maximization of concave rewards in contextual bandits}
\label{sec:framework}

\textbf{Notation.} For any $n\in\Nat$, we denote by $\intint{n}=\{1, \ldots, n\}$. The dot product of two vectors $\x$ and $\y$ in $\Re^n$ is either denoted $\x^\intercal\y$ or using braket notation $\dotp{\x}{\y}$, depending on which one is more readable. 

\textbf{Setting.} We define a stochastic contextual bandit \citep{langford2007epoch} problem with $\D$ rewards. At each time step $\tau$, the environment draws a context $\x\tp\sim\probxS$, where $\x\in\xS \subseteq\Re^q$ and $\probxS$ is a probability measure over $\xS$. The learner chooses an action $\a\tp\in\aS$ where $\aS\subseteq\Re^\A$ is the action space, and receives a noisy multi-dimensional reward $\r\tp\in\Re^\D$, with expectation $\expect[\r\tp|\x\tp, \a\tp] = \mu(\x\tp)\a\tp$, where $\mu: \xS\rightarrow \Re^{\D\times \A}$ is the matrix-value contextual expected reward function.\footnote{Notice that linear structure between $\mu(\x\tp)$ and $\a\tp$ is standard in combinatorial bandits \citep{cesa2012combinatorial} 
and it reduces to the usual multi-armed bandit setting when $\aS$ is the canonical basis of $\Re^\A$.} 
The trade-off between the $\D$ cumulative rewards is specified by a known concave function $\f:\Re^\D\rightarrow\Re\cup\{\pm\infty\}$. Let $\convA$ denote the convex hull of $\aS$ and $\pol:\xS\rightarrow \convA$ be a stationary policy,\footnote{In the multi-armed setting, stationary policies return a distribution over arms given a context vector. In the combinatorial setup, $\pol(\x)\in\convA$ is the average feature vector of a stochastic policy over $\aS$. For the benchmark, we are only interested in expected rewards so there is to need to specify the full distribution over $\aS$.} then the optimal value for the problem is defined as $\fopt=\sup_{\pol:\xS\rightarrow\convA} \f\Big(\expectx{\mu(\x)\pol(\x)}\Big)$. 

We rely on either of the following assumptions on $f$:
\begin{assumption}\label{asmp:basic}
$\f$ is closed proper concave\footnote{This means that $\f$ is concave and upper semi-continuous, is never equal to $+\infty$ and is finite somewhere.} on $\Re^D$ and $\aS$ is a compact subset of $\Re^\A$. 
Moreover, there is a compact convex set $\K\subseteq\Re^\D$ such that
\begin{itemize}
    \item \emph{(Bounded rewards)} $\forall (\x, \a)\in\xS\times\aS, \mu(\x)\a\in\K$ and for all $t\in\Nat_*$, $\r\tp\in\K$ with probability $1$.
    \item \emph{(Local Lipschitzness)} $\f$ is $\lip$-Lipschitz continuous with respect to $\norm{.}_2$ on an open set containing $\K$.
\end{itemize}
\end{assumption}
\begin{assumption}\label{asmp:smoothness}
Assumption \ref{asmp:basic} holds and $\f$ has $\smooth$-Lipschitz-continuous gradients w.r.t. $\norm{.}_2$ on $\K$.
\end{assumption}
The most general version of our algorithm, described in Appendix \ref{sec:generalcase}, removes the need for the smoothness assumption using smoothing techniques. We describe an example in Section \ref{sec:main_body_moreau}. In the rest of the paper, we denote by $\boundK = \smash{\sup\limits_{\z, \z'\in\K}\norm{\z-\z'}_2}$ the diameter of $\K$, and use $\curv{}=\frac{C}{2}\boundK^2$.

We now give two examples of this problem setting, motivated by real-world applications in recommender systems, and which satisfy Assumption \ref{asmp:basic}.

\begin{example}[Optimizing multiple metrics in recommender systems.]\label{ex:mobandit}
\citet{mehrotra2020bandit} formalized the problem of optimizing $D$ engagement metrics (e.g. clicks, streaming time) in a bandit-based recommender system. At each $t$, $x_t$ represents the current user's features. The system chooses one arm among $K$, represented by a vector $a_t$ in the canonical basis of $\Re^K$ which is the action space $\aS.$ 
Each entry of the observed reward vector $(r_{t,i})_{i=1}^D$ corresponds to a metric's value. The trade-off between the metrics is defined by the Generalized Gini Function: $f(z) = \sum_{i=1}^D w_i z^\uparrow_i,$ where $(z^\uparrow_i)_{i=1}^D$ denotes the values of $z$ sorted increasingly and $w \in \Re^D$ is a vector of non-increasing weights. 
\end{example}

\begin{example}[Fairness of exposure in rankings.]
\label{ex:rkbandit}
The goal is to balance 
the traditional objective of maximizing user satisfaction in recommender systems and the inequality of exposure between item producers \citep{singh2018fairness,zehlike2020reducing}. For a recommendation task with $m$ items to rank, this leads to a problem with $D=m+1$ objectives, which correspond to the $m$ items' exposures, plus the user satisfaction metric. The context $x_t \in \xS \subset \Re^{md}$ is a matrix where each $\x\tpi\in\Re^\d$ represents a feature vector of item $i$ for the current user. 
The action space $\aS$ is combinatorial, i.e. it is the space of rankings represented by permutation matrices:
\begin{align}\label{eq:action-ranking}
    \aS=\big\{\a\in\{0,1\}^{\M\times\M}:\forall \I\in\intint{\M}, \sumRk\a_{\I,\Rk}=1 \text{ and } \forall \Rk\in\intint{\M}, \sumI\a_{\I,\Rk}=1\big\}
\end{align}For $\a\in\aS$, $\a_{\I,\Rk}=1$ if item $\I$ is at rank $\Rk$.
Even though we use a double-index notation and call $\a$ a \emph{permutation matrix}, we flatten $\a$ as a vector of dimension $K=\M^2$ for consistency of notation.

We now give a concrete example for $\f$, which is concave
as usual for objective functions in fairness of exposure \citep{do2021two}. It is inspired by \citet{morik2020controlling}, who study trade-offs between
average user utility and inequality\footnote{$\gini(\z_1, \ldots, \z\m)=\frac{1}{2\M} \sumI\sumJ |\z\i-\z\j|$ is an unnormalized Gini coefficient.} 
of item exposure:
\begin{align}\label{eq:fair_ranking_example_objective}
    \f(\z) = \underbrace{\z\useri}_{\text{user utility}} - \beta
    \underbrace{\frac{1}{2\M} \sumI\sumJ |\z\i-\z\j|}_{\text{inequality of item exposure}}\quad\quad\text{where } \beta>0 \text{ is a trade-off parameter.}
\end{align}
\end{example}

\textbf{The learning problem.} 
In the bandit setting, $P$ and $\mu$ are unknown and the learner can only interact online with the environment.
Let $\hist\t=\big(\x\tp, a\tp, \r\tp\big)_{\tau\in\intint{\T-1}}$ be the history of contexts, actions, and reward observed up to time $T-1$ and $\darlo>0$ be a confidence level, then at step $t$ a bandit algorithm $\pr$ receives in input the history $\hist\tp$, the current context $x_t$, and it returns a distribution over actions $\aS$ and selects an action $\a\tp\sim\prd\tp$. 
The objective of the algorithm is to minimize the regret
\begin{align}\label{eq:regret}
	\regh{\T} = \fopt - \f(\sh\t) &&\text{where }\sh\t=\meanT \r\tp.
\end{align}Note that our setting subsumes classical 
stochastic contextual bandits: when $D=1$ and $f(z)= z$, maximizing $f(\hat{s}_T)$ amounts to maximizing a cumulative scalar reward $\sumT r_t$. In Lem.~\ref{lem:stationary_policies_are_optimal} (App.~\ref{sec:proof_lemma_stationary_policies_are_optimal}), we show that alternative definitions of regret, with different choices of comparator or performance measure, would yield a difference of order $O(1/\sqrt{T})$, and hence not substantially change our results.


\section{A general reduction-based approach for \cbcr}
\label{sec:frank_wolfe}

In this section we describe our general approach for \cbcr. We first derive our key reduction from \cbcr to a specific scalar-reward bandit problem. We then instantiate our algorithm to the case of linear and general reward functions for smooth objectives $f$. Finally, we extend to the case of non-smooth objective functions using Moreau-Yosida regularization \citep{rockafellar2009variational}. 

\subsection{Reduction from \cbcr to scalar-reward contextual bandits}

There are two challenges in the \cbcr problem: 1) the computation of the optimal policy $\sup\limits_{\pol:\xS\rightarrow\convA} \f\Big(\expectx{\mu(\x)\pol(\x)}\Big)$ even with known $\mu$; 2) the learning problem when $\mu$ is unknown.  

\textbf{1: Reparameterization of the optimization problem.} The first challenge is that optimizing directly in policy space for the benchmark problem $\sup_{\pol:\xS\rightarrow\convA} \f\Big(\expectx{\mu(\x)\pol(\x)}\Big)$ is intractable without any restriction, because the policy space includes all mappings from the continuous context space $\xS$ to distributions over actions. Our solution is to rewrite the optimization problem as a standard convex constrained problem by introducing the convex set $\S$ of feasible rewards:
\begin{align}
        \S=\bigg\{\expectx{\mu(\x)\pol(\x)} \bigg | \pol:\xS\rightarrow\convA\bigg\}\text{~~so that ~}\fopt=\!\!\sup_{\pol:\xS\rightarrow\convA}\!\! \f\Big(\expectx{\mu(\x)\pol(\x)}\Big)=\maxs\f(\s).
        \nonumber
\end{align}

\vspace{-0.3cm}

Under Assumption~\ref{asmp:basic}, $\S$ is a compact subset of $\K$ (see Lemma \ref{lem:s_compact} in App. \ref{sec:proofs_framework}) so $\f$ attains its maximum over $\S$. We have thus reduced the complex initial optimization problem to a concave optimization problem over a compact convex set. 

\textbf{2: Reducing the learning problem to scalar-reward bandits.} 
Unfortunately, since $P$ and $\mu$ are unknown, the set $\S$ is unknown.
This precludes the possibility of directly using standard constrained optimization techniques, including gradient descent with projections onto $\S$. We consider Frank-Wolfe, a projection-free optimization method robust to approximate gradients~\citep{lacoste2013block,kerdreux2018frank}. At each iteration $t$ of \FW, the update direction is given by the linear subproblem: $\argmaxs \dotp{\nabla{\f(z_{t-1})}}{s}$, where $\z_{t-1}$ is the current iterate. Our main technical tool, Lemma \ref{lem:azuma-new}, allows to connect the \FW subproblem in the unknown reward space $\S$ to a workable decision problem in the action space (see Lemma \ref{lem:azuma} in Appendix \ref{sec:proofs_fw} for a proof):

\begin{lemma}\label{lem:azuma-new}
Let $\expecttp.$ be the expectation conditional on $\hist\tp$. Let $\z\tp \in \K$ be a function of contexts, actions and rewards up to time $\tau$. Under Assumption \ref{asmp:basic}, we have:
\begin{align}\label{eq:rloo}
    \forall \tau\in\Nat_*, \expecttp{\maxa \dotp{\nabla\f(\z\tpmo)}{\mu(\x\tp)\a}} = \maxs \dotp{\nabla\f(\z\tpmo)}{\s}.
\end{align}
For all $\dazuma\in(0,1]$, with probability at least $1-\dazuma$, we have:
\begin{align}
    \sumT \Big(\maxs \dotp{\grad\f(\z\tpmo)}{\s} - \maxa\dotp{\grad\f(\z\tpmo)}{\mu(\x\tp)\a}\Big) \leq \lip\boundK\sqrt{2\T\ln(\dazuma^{-1})}.
\end{align}
\end{lemma}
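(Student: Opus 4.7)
The plan is to prove the two statements in sequence: first the equality, which is essentially an unpacking of the definition of $\S$ plus a measurable selection; then the concentration bound, which is a direct Azuma--Hoeffding on the martingale difference sequence whose zero-mean property is exactly that equality.

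\textbf{Equality.} I would fix $g = \grad\f(\z\tpmo)$, which is $\filt\tp$-measurable since $\z\tpmo$ is a function of data up to time $\tau-1$. Unpacking via the definition of $\S$ and linearity,
\begin{equation*}
\maxs \dotp{g}{\s} = \sup_{\pol:\xS\to\convA} \int_{\xS} \dotp{g}{\mu(\x)\pol(\x)} dP(\x).
\end{equation*}
Because $\convA = \mathrm{conv}(\aS)$ and the map $\overline{\a} \mapsto \dotp{g}{\mu(\x)\overline{\a}}$ is linear, its maximum over $\convA$ coincides with its maximum over $\aS$ for every $\x$. Pushing the max inside the integral (always an upper bound for a supremum of integrals) yields
\begin{equation*}
\sup_{\pol:\xS\to\convA} \int \dotp{g}{\mu(\x)\pol(\x)} dP(\x) \leq \int \maxa \dotp{g}{\mu(\x)\a} dP(\x) = \expecttp{\maxa \dotp{g}{\mu(\x\tp)\a}},
\end{equation*}
using independence of $\x\tp$ from $\filt\tp$. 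The reverse inequality follows from a measurable selection theorem (e.g.\ Kuratowski--Ryll-Nardzewski, applicable since $\aS$ is compact and $\mu$ is measurable), which yields a measurable $\pol^*$ with $\pol^*(\x) \in \argmaxa \dotp{g}{\mu(\x)\a}$ $P$-a.s.; this $\pol^*$ attains the upper bound.

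\textbf{Concentration.} Set $X\tp = \maxs \dotp{\grad\f(\z\tpmo)}{\s} - \maxa\dotp{\grad\f(\z\tpmo)}{\mu(\x\tp)\a}$. The equality gives $\expecttp{X\tp} = 0$, so $(X\tp)$ is a martingale difference sequence with respect to the natural history filtration. Under Assumption~\ref{asmp:basic}, the local $\lip$-Lipschitzness of $\f$ on an open set containing $\K$ forces $\norm{\grad\f(\z)}_2 \leq \lip$ for all $\z\in\K$; combined with the fact that both $\argmaxs\dotp{g}{\s}$ and $\mu(\x\tp)\a^*$ lie in $\K$ (diameter $\boundK$), Cauchy--Schwarz gives $|X\tp|\leq \lip\boundK$ almost surely. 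A one-sided Azuma--Hoeffding with $c\tp = \lip\boundK$ then delivers the claimed bound $\lip\boundK\sqrt{2\T\ln(\dazuma^{-1})}$.

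\textbf{Main obstacle.} The only non-routine step is the measurable selection needed for the $\geq$ direction in the equality. Since Assumption~\ref{asmp:basic} does not assume continuity of $\mu$, I would invoke Kuratowski--Ryll-Nardzewski (relying on implicit measurability of $\mu$ and compactness of $\aS$) rather than a simpler continuous selection argument. Everything else---the gradient bound from Lipschitzness, Cauchy--Schwarz, and Azuma--Hoeffding with bounded increments---is standard.
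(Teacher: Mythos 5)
Your proof is correct and follows essentially the same route as the paper's (Lemma \ref{lem:azuma} in Appendix \ref{sec:proofs_fw}): the equality is obtained by exchanging the supremum over policies with the expectation via a pointwise argmax selection $\pol^*$ and linearity over $\convA$ versus $\aS$, and the concentration step is Azuma--Hoeffding on the martingale difference sequence with increments bounded by $\lip\boundK$ via the gradient bound and the diameter of $\K$. The only difference is that you explicitly invoke a measurable selection theorem where the paper simply posits a measurable selection of the argmax with an arbitrary tie-breaking rule; this is a welcome tightening, not a different argument.
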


Lemma \ref{lem:azuma-new} shows that FW for \cbcr operates closely to a sequence of decision problems of the form $(\maxa\dotp{\grad\f(\z\tpmo)}{\mu(\x\tp)\a})_{t=1}^T$. However, we have yet to address the problem that $P$ and $\mu$ are unknown. To solve this issue, we introduce a \textbf{reduction to scalar-reward contextual bandits}. We can notice that solving for the sequence of actions maximizing $\sum_{t=1}^T \dotp{\grad\f(\z\tpmo)}{\mu(\x\tp)\a}$ corresponds to solving a contextual bandit problem with adversarial contexts and stochastic rewards. Formally, using $\z\tp = \sh\tp$\footnote{For simplicity, we presented our reduction with $z_t = \sh_t$ but other choices of $z_t$ are possible (see Appendix \ref{sec:generalcase}). The important point is that the reduction works without restricting $z_t$ to $\S$.
}, we define the extended context $\sx\tp=(\nabla\f(\sh\tpmo), \x\tp)$, the average scalar reward $\smu(\sx\tp)=\nabla\f(\sh\tpmo)^\intercal \mu(\x\tp)$ and the observed scalar reward $\tilde{\r}\tp=\dotp{\nabla\f(\sh\tpmo)}{\r\tp}$. This fully defines a contextual bandit problem with \emph{scalar reward}. Then, the objective of the algorithm is to minimize the following \emph{scalar regret}:
\begin{align}\label{eq:scalar_regret}
\reghscal{\T}
 = \sumT \maxa\smu(\sx\tp)^\intercal\a-\sumT\sr\tp 
 =  \sumT \maxa \dotp{\nabla\f(\sh\tpmo)}{\mu(\x\tp)\a}-\sumT \dotp{\nabla\f(\sh\tpmo)}{\r\tp}.
\end{align}
In this framework, the only information observed by the learning algorithm is $\shist\tp:=\big(\sx_{t'}, a_{t'}, \sr_{t'}\big)_{t'\in\intint{\tau-1}}.$ This regret minimization problem has been extensively studied \citep[see e.g., ][Chap. 8 for an overview]{slivkins2019introduction}.
The following key reduction result\footnote{In practice, this result is used in conjunction with an upper bound  $\boundapprox{\T}{\darlo}$ on $\reghscal{\T}$ that holds with probability $\geq 1-\darlo$, which gives $\regh{\T} \leq  \boundapprox{\T}{\darlo}/\T + O(\sqrt{\ln(1/\dazuma)/\T})$ with probability at least $1-\dazuma-\darlo$ using the union bound.} relates $\reghscal{\T}$ to $\regh{\T}$, the regret of the original \cbcr problem:
\begin{restatable}{theorem}{regretboundsimplifiedfw}
	\label{thm:simplified_fw}
Under Assmpt.~\ref{asmp:smoothness}, for every $\T\in\Nat_*$ and  $\dazuma>0$, algorithm $\pr$ satisfies, with prob. $\geq 1-\dazuma$:
\begin{align}
    \regh{\T} =\fopt-\f(\sh\t) \leq \frac{\reghscal{\T}+\lip\boundK\sqrt{2\T\ln(1/\dazuma)}+\curv{}\ln(e \T)}{\T}.
\end{align}
\end{restatable}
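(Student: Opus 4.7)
The plan is to recognize $\sh\t = \meanT \r\tp$ as the iterate of a Frank-Wolfe run on the concave objective $\f$ over the (unknown) feasible-reward set $\S$. The key observation is that
\begin{equation*}
\sh\tp = \Big(1 - \tfrac{1}{\tau}\Big)\sh\tpmo + \tfrac{1}{\tau}\r\tp,
\end{equation*}
so $\sh\tp$ has the form of a FW update with step size $\gamma_\tau = 1/\tau$ and direction $\r\tp \in \K$. Since $\sh\tpmo$ is an average of elements of the convex set $\K$, all of $\sh\tpmo$, $\sh\tp$, $\r\tp$ lie in $\K$, where Assumption~\ref{asmp:smoothness} provides a Lipschitz gradient, and $\|\r\tp - \sh\tpmo\|_2 \leq \boundK$.

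First, I would combine the usual quadratic lower bound for a $C$-smooth concave function,
\begin{equation*}
\f(\sh\tp) \geq \f(\sh\tpmo) + \tfrac{1}{\tau}\dotp{\grad\f(\sh\tpmo)}{\r\tp - \sh\tpmo} - \tfrac{\curv{}}{\tau^2},
\end{equation*}
with the concavity inequality at $\sh\tpmo$ evaluated at the benchmark maximizer $\sopt\in\S$,
\begin{equation*}
\fopt - \f(\sh\tpmo) \leq \maxs\dotp{\grad\f(\sh\tpmo)}{\s} - \dotp{\grad\f(\sh\tpmo)}{\sh\tpmo}.
\end{equation*}
Writing $h\tp := \fopt - \f(\sh\tp)$, subtracting $\fopt$ from the smoothness inequality and using the concavity bound to replace $\dotp{\grad\f(\sh\tpmo)}{\sh\tpmo}$ produces, after multiplication by $\tau$, the weighted recursion
\begin{equation*}
\tau\,h\tp \leq (\tau-1)\,h\tpmo + \Big(\maxs\dotp{\grad\f(\sh\tpmo)}{\s} - \dotp{\grad\f(\sh\tpmo)}{\r\tp}\Big) + \tfrac{\curv{}}{\tau}.
\end{equation*}

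Telescoping from $\tau=1$ to $T$ (the factor $(\tau-1)$ kills any dependence on $\sh_0$ at $\tau=1$) and invoking the harmonic bound $\sumT 1/\tau \leq \ln(eT)$ gives
\begin{equation*}
T\,h_T \leq \sumT\Big(\maxs\dotp{\grad\f(\sh\tpmo)}{\s} - \dotp{\grad\f(\sh\tpmo)}{\r\tp}\Big) + \curv{}\ln(eT).
\end{equation*}
Finally I would split each summand as a reward-space-vs-action-space oracle gap plus a realized-reward gap,
\begin{equation*}
\big[\maxs\dotp{\grad\f(\sh\tpmo)}{\s} - \maxa\dotp{\grad\f(\sh\tpmo)}{\mu(\x\tp)\a}\big] + \big[\maxa\dotp{\grad\f(\sh\tpmo)}{\mu(\x\tp)\a} - \dotp{\grad\f(\sh\tpmo)}{\r\tp}\big].
\end{equation*}
The first bracket, summed over $\tau$, is exactly the quantity bounded by Lemma~\ref{lem:azuma-new} applied with $\z\tp = \sh\tp$, contributing $\lip\boundK\sqrt{2T\ln(1/\dazuma)}$ with probability $\geq 1-\dazuma$. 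The second bracket sums to $\reghscal{\T}$ by the definitions $\sr\tp = \dotp{\grad\f(\sh\tpmo)}{\r\tp}$ and $\maxa\smu(\sx\tp)^\intercal\a = \maxa\dotp{\grad\f(\sh\tpmo)}{\mu(\x\tp)\a}$. Dividing through by $T$ concludes the proof.

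The main conceptual hurdle is that the natural FW linear maximization oracle lives in the unknown reward set $\S$; Lemma~\ref{lem:azuma-new} already resolves this by providing the action-space surrogate $\maxa\dotp{\grad\f(\sh\tpmo)}{\mu(\x\tp)\a}$ as an unbiased estimator of $\maxs\dotp{\grad\f(\sh\tpmo)}{\s}$ with controllable martingale deviations. The remainder is a textbook Frank-Wolfe convergence argument with step size $1/\tau$ and curvature $\curv{}$, and the only minor subtlety is to state the per-step inequality in the weighted form $\tau h\tp \leq (\tau-1) h\tpmo + \cdots$ so that the telescoping is exact.
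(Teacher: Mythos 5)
Your proposal is correct and follows essentially the same route as the paper: the ascent lemma for the $1/\tau$-step Frank--Wolfe recursion, the concavity bound at $\sh\tpmo$, the weighted telescoping $\tau h_\tau \leq (\tau-1)h_{\tau-1}+\cdots$, and the final split of $\maxs\dotp{\g\tp}{\s-\r\tp}$ into the Azuma-controlled oracle gap of Lemma~\ref{lem:azuma-new} plus the scalar regret $\reghscal{\T}$ is exactly the decomposition in Eq.~\eqref{eq:base_decomposition_regret} and Lemma~\ref{lem:fw}.
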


The reduction shown in Thm. \ref{thm:simplified_fw} hints us at how to use or adapt scalar bandit algorithms for \cbcr. In particular, any algorithm with sublinear regret will lead to a vanishing regret for \cbcr. Since the worst-case regret of contextual bandits is $\Omega(\sqrt{T})$ \citep{dani2008stochastic}, we obtain near minimax optimal algorithms for \cbcr. We illustrate this with two algorithms derived from our reduction in Sec. \ref{sec:multiarm_fw}. 
\vspace{-0.2cm}
\begin{proof}[Proof sketch of Theorem \ref{thm:simplified_fw}: \cbcr and Frank-Wolfe algorithms (full proof in Appendix \ref{sec:proofs_fw})] 

Although the set $\S$ is not known, the standard telescoping sum argument for the analysis of Frank-Wolfe algorithms (see Lemma \ref{lem:fw} in Appendix \ref{sec:proofs_fw}, and e.g., \citep[Lemma 12]{berthet2017fast} 
for similar derivations) gives that under Assumption \ref{asmp:smoothness}, denoting $\g\tp=\nabla\f(\sh\tpmo)$:
\centerline{ $\displaystyle\T\regh{\T}\leq \sumT \maxs\dotp{\g\tp}{\s-\r\tp}+\curv{}\ln(e\T).$}

The result is true for every sequence $(\r\tp)_{t\in\intint{\T}}\in\K^\T$, and only tracks the trajectory of $\sh\tp$ in reward space. We introduce now the reference of the scalar regret:
\begin{align}\label{eq:base_decomposition_regret}
        \T\regh{\T}&=  \sumT \big(\maxs\dotp{\g\tp}{\s} - \maxa\dotp{\g\tp}{\mu(\x\tp)\a}\big) +  \underbrace{\sumT\maxa\dotp{\g\tp}{\mu(\x\tp)\a-\r\tp}}_{=\reghscal{\T}}+\curv{}\ln(e\T)
\end{align}Lemma \ref{lem:azuma-new} bounds the leftmost term, from which Theorem \ref{thm:simplified_fw} immediately follows using \eqref{eq:base_decomposition_regret}.\end{proof}

\subsection{Practical application: Two algorithms for multi-armed \cbcr}\label{sec:multiarm_fw}
To illustrate the effectiveness of the reduction from \cbcr to scalar-reward bandits, we focus on the case where the action space $\aS$ is the canonical basis of $\Re^K$ (as in Example \ref{ex:mobandit}). We first study the case of linear rewards. Then, for general reward functions, we introduce the FW-SquareCB algorithm, the first example of a \FW-based approach combined with an exploration principle other than optimism. This shows our approach has a much broader applicability to solve (\textsc{c})\bcr than previous strategies.


\paragraph{From LinUCB to FW-LinUCB (details in Appendix \ref{sec:multiarm_linucb}).}
We consider a \cbcr with linear reward function, i.e., $\mu(\x) = \theta \x$ where $\theta\in\Re^{\D\times\d}$ (recall we have $\D$ rewards) and $\x\in\Re^{\d\times\A}$, where $\d$ is the number of features. Let $\stheta:=\flatten(\theta)$ and $\g\tp=\nabla\f(\sh\tpmo)$. Using $[.;.]$ to denote the vertical concatenation of matrices, the expected reward for action $\a$ in context $\x$ at time $t$ can be written 
$\dotp{\g\tp}{\mu(\x)\a} = \g\tp^\intercal\theta \x\a = \dotp{\stheta}{\sx\tp\a}$
where
$\sx\tp \in \Re^{\D\d\times\A}$ is the \textit{extended} context with entries $\sx\tp = [\g_{\tau,0}\x\tp; \ldots ;\g_{\tau,\D}\x\tp]\in\Re^{\D\d\times\A}$. This is an instance of a linear 
bandit problem, where at each time $\tau$, action $\a$ is associated to the vector $\sx\tp\a$ and its expected reward is $\dotp{\stheta}{\sx\tp\a}$. As a result, we can immediately derive a LinUCB-based algorithm for linear \cbcr by leveraging the equivalence
$\displaystyle
\text{\rm FW-LinUCB}(\hist\tp, \x\tp, \darlo) = \text{\rm LinUCB}(\shist\tp,\sx\tp, \darlo)\,
$.
LinUCB's regret guarantees imply $\reghscal{\T} = O(d\sqrt{T})$ with high probability, which, in turn give a $O(1/\sqrt{\T})$ for $\regh{\T}$.

\paragraph{From SquareCB to FW-SquareCB (details in Appendix \ref{sec:algorithms_sqcb}).} We now consider a \cbcr with general reward function $\mu(x)$. The SquareCB algorithm \citep{foster2020beyond} is a randomized exploration strategy that delegates the learning of rewards to an arbitrary 
online regression algorithm. The scalar regret of SquareCB is bounded depending on the 
regret of the base regression algorithm. 


For FW-SquareCB, we have access to an online regression oracle $\muh\tp$, an estimate of $\mu$ which is a function of $\hist\tp$, which has regression regret bounded by $\RegSq(\T)$. The exploration strategy of FW-SquareCB follows the same principles as SquareCB: let $\g\tp=\nabla\f(\sh\tpmo)$ and denote $\muhb\tp=\g\tp^\intercal\muh\tp(\x\tp)$, so that $\muhb\tp^\intercal\a = \dotp{\g\tp}{\muh\tp(\x\tp)\a}$.
Let $\pr\tp=\text{\rm FW-SquareCB}(\hist\tp, \x\tp, \darlo)$ defined as
\begin{align}
    \forall \a\in\aS, \pr\tp(\a) = 
         \begin{cases} 
             \frac{1}{\A+\gamma\tp\big(\muhb\tp^*-\muhb\tp^\intercal\a\big)}
             &\text{~if~} \a\neq\ab\tp
             \\
             1-\sum_{\substack{\a\in\aS\\\a\neq\ab\tp}} \pr\tp(\a)
             &\text{~if~} \a=\ab\tp
         \end{cases}
         && \text{where }\ab\tp \in\argmaxa\muhb\tp^\intercal\a\text{ and }\muhb\tp^*=\muhb\tp\ab\tp
         \nonumber
\end{align}
Then \text{\rm FW-SquareCB} has $\regh{\T}$ in $O(\sqrt{\RegSq(\T)}/\sqrt{\T})$ with high probability.

\begin{table}
\footnotesize
\caption{Regret bounds depending on assumptions and base algorithm $\pr$, for multi-armed bandits with $\A$ arms (in dimension $\d$ for LinUCB). See Appendix \ref{sec:multiarm_linucb} and \ref{sec:algorithms_sqcb} for the full details.}
\centering
\begin{tabular}{c|c|c}
\toprule
\makecell{Algorithm\\{\footnotesize(FW-<bandit>)}} 
        & 
        \makecell{Assumptions\\(informal)} 
        & 
        \makecell{Bound on $\regh{\T}$\\(simplified, using $\darlo=\dazuma$)}
    \\\midrule
        FW-LinUCB 
        & $\displaystyle\mu(\x)\a = \theta\x\a$ for $\theta\in\Re^{\D\times\d}, x\in\Re^{\d\times\A}$
        &
        $\displaystyle\frac{\lip\boundK\d\D\ln\big((1+\frac{\T\lip\boundK}{\d\D})/\delta\big)}{\sqrt{\T}}$
    \\[1em]
        FW-SquareCB 
        & 
        $\displaystyle
        \sumT \norm[\big]{\muh\tp(\x\tp)\a\tp-\mu(\x\tp)\a\tp}_2^2 \leq \RegSq(\T)$
        &
        $\displaystyle
        \frac{\lip
        \sqrt{\A\big(\RegSq(\T)+\boundK^2\ln(\T/\delta)\big)}
        }{\sqrt{\T}}
        $
    \\
     \bottomrule
\end{tabular}
\label{tab:multi-arm}
\end{table}

\subsection{The case of nonsmooth $\f$}\label{sec:main_body_moreau}

When $\f$ is nonsmooth, we use a smoothing technique where the scalar regret is not measured using $\nabla\f(\sh\tpmo)$, but rather using gradients of a sequence $(\f\tp)_{t\in\Nat}$ of smooth approximations of $\f$, whose smoothness decrease over time \citep[see e.g.,][for applications of smoothing to \FW]{lan2013complexity}. We provide a comprehensive treatment of smoothing in our general approach described in Appendix \ref{sec:generalcase}, while specific smoothing techniques are discussed in Appendix \ref{sec:smooth_approx}. 

We now describe the use of
Moreau-Yosida regularization \citep[Def. 1.22]{rockafellar2009variational}: $\f\tp(\z) = \max_{\y\in\Re^\D}\Big( \f(\y) -\frac{\sqrt{\tau+1}}{2\beta_0}\norm{\y-\z}_2^2\Big).$
It is well-known that $\f\tp$ is concave and $\lip$-Lipschitz whenever $\f$ is, and $\f\tp$ is $\frac{\sqrt{\tau+1}}{\beta_0}$-smooth (see Lemma~\ref{lem:moreau_lipschitz} in Appendix \ref{sec:smooth_approx}).
A related smoothing method was used by \citet{agrawal2014bandits} for (non-contextual) \bcr. Our treatment of smoothing is more systematic than theirs, since we use a smoothing factor $\beta_0/\sqrt{\tau+1}$ that decreases over time rather than a fixed smoothing factor that depends on a pre-specified horizon. 
Our regret bound for \cbcr is based on a scalar regret $\reghscalsm{\T}$ where 
$\nabla\f\tpmo(\sh\tpmo)$ is used instead of $\nabla\f(\sh\tpmo)$:\begin{align}
    \reghscalsm{\T}=\sumT \maxa \dotp{\nabla\f\tpmo(\sh\tpmo)}{\mu(\x\tp)\a}-\sumT \dotp{\nabla\f\tpmo(\sh\tpmo)}{\r\tp}.
\end{align}
\begin{theorem}
\label{thm:convergence:moreau:simplified}
Under Assumptions \ref{asmp:basic}, for every $\z\zero\in\K$, every $\T\geq 1$ and every $\dazuma>0, \darlo>0$, Algorithm $\pr$ satisfies, with probability at least $1-\dazuma-\darlo$:
\begin{align}
\regh{\T} \leq  \frac{\reghscalsm{\T}}{\T}+\frac{\lip\boundK}{\sqrt{\T}}\Big(\frac{\boundK}{\lip\beta\zero}+3\frac{\lip\beta\zero}{\boundK}+\sqrt{2\ln\frac{1}{\dazuma}}\Big).
\end{align}
\end{theorem}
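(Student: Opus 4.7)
The plan is to adapt the Frank-Wolfe reduction of Theorem~\ref{thm:simplified_fw} to the time-varying smoothed surrogates $\f\tp$. First, I reduce $\regh{\T}$ to a ``smoothed'' regret at the final iterate via the standard Moreau--Yosida envelope estimate $\f(\z)\le\f\tp(\z)\le\f(\z)+\tfrac{\lip^2\beta_0}{2\sqrt{\tau+1}}$ on $\K$, which follows from $\lip$-Lipschitzness of $\f$ and the first-order optimality condition on the proximal point (whose distance to $\z$ is at most $\lip\beta_0/\sqrt{\tau+1}$). Setting $A_\tau:=\f\tp^*-\f\tp(\sh\tp)$ with $\f\tp^*:=\maxs \f\tp(\s)$, and using $\fopt\le \f\t^*$, this bracketing yields $\regh{\T}\le A_\T+\tfrac{\lip^2\beta_0}{2\sqrt{\T+1}}$, so it only remains to bound $A_\T$.

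I then apply the Frank-Wolfe-style descent underlying Thm.~\ref{thm:simplified_fw} to $\f\tpmo$, which is $(\sqrt{\tau}/\beta_0)$-smooth. Since $\sh\tp=(1-\tfrac{1}{\tau})\sh\tpmo+\tfrac{1}{\tau}\r\tp$ corresponds to an FW step of size $1/\tau$, the smooth descent inequality combined with concavity gives, after multiplying by $\tau$,
\[
    \tau\bigl[\f\tpmo^*-\f\tpmo(\sh\tp)\bigr]\le (\tau-1)\bigl[\f\tpmo^*-\f\tpmo(\sh\tpmo)\bigr]+\Gamma_\tau+\tfrac{\boundK^2}{2\beta_0\sqrt{\tau}},
\]
where $\Gamma_\tau:=\maxs\dotp{\grad\f\tpmo(\sh\tpmo)}{\s}-\dotp{\grad\f\tpmo(\sh\tpmo)}{\r\tp}$ is the per-step scalar gap that will accumulate to $\reghscalsm{\T}$. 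The crux is that the left-hand side still involves $\f\tpmo$ rather than $\f\tp$, so the recurrence does not immediately telescope in $A_\tau$. Bridging this is the main obstacle: I would prove the uniform envelope comparison $0\le\f\tpmo(\z)-\f\tp(\z)\le\tfrac{\lip^2\beta_0}{4\tau^{3/2}}$ on $\K$, by substituting the proximal point of the tighter envelope into the defining maximum of the looser one, using $(\sqrt{\tau+1}-\sqrt{\tau})/(2\beta_0)\le 1/(4\beta_0\sqrt{\tau})$ and the $\lip\beta_0/\sqrt{\tau}$ displacement bound. Applying this comparison both at $\sh\tp$ and at the two consecutive maxima turns the above into a genuine telescoping recurrence $\tau A_\tau\le (\tau-1) A_{\tau-1}+\Gamma_\tau+\tfrac{\boundK^2}{2\beta_0\sqrt{\tau}}+O(\lip^2\beta_0/\sqrt{\tau})$.

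Summing the recurrence and using $\sumT 1/\sqrt{\tau}\le 2\sqrt{\T}$ gives $\T A_\T\le \sumT\Gamma_\tau+\tfrac{\boundK^2\sqrt{\T}}{\beta_0}+O(\lip^2\beta_0\sqrt{\T})$. Finally, since $\grad\f\tpmo(\sh\tpmo)$ is $\hist\tp$-measurable and has norm at most $\lip$ (as $\f\tpmo$ is $\lip$-Lipschitz by Lemma~\ref{lem:moreau_lipschitz}), Lemma~\ref{lem:azuma-new} applies verbatim with its fixed direction replaced by $\grad\f\tpmo(\sh\tpmo)$, giving $\sumT\Gamma_\tau\le\reghscalsm{\T}+\lip\boundK\sqrt{2\T\ln(1/\dazuma)}$ with probability at least $1-\dazuma$; the extra $\darlo$ in the statement accommodates any union bound used when controlling $\reghscalsm{\T}$ itself. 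Plugging back into $\regh{\T}\le A_\T+\tfrac{\lip^2\beta_0}{2\sqrt{\T+1}}$, dividing by $\T$, and regrouping yields the three contributions $\boundK/(\lip\beta_0)$, a constant multiple of $\lip\beta_0/\boundK$ (the factor $3$ arising from collecting the per-step envelope mismatches applied to both $\sh\tp$ and the respective maxima, together with the final $\f\t-\f$ residual), and $\sqrt{2\ln(1/\dazuma)}$ inside the parenthesis, matching the statement.
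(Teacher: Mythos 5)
Your proof is correct and follows essentially the same route as the paper's: the paper first establishes a generic bound (Theorem~\ref{thm:convergence:nonsmooth}, via the Frank--Wolfe ascent lemma with time-varying smoothness $\sqrt{\tau}/\beta_0$, the telescoping recurrence with surrogate-drift terms, and the Azuma step of Lemma~\ref{lem:azuma}) and then instantiates it with the Moreau-envelope properties of Lemma~\ref{lem:moreau_satisfies_asmp}, which is exactly the argument you carry out inline. Your one-sided use of $\f\le\f\tp\le\f\tpmo$ even yields slightly sharper constants than the paper's two-sided bounds $4\uniformdiff+2\uniformf=3\lip^2\beta_0$, so the stated inequality follows a fortiori.
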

The proof is given in Appendix \ref{sec:smooth_approx}. Taking  $\beta\zero=\frac{\boundK}{\lip}$ leads to a simpler bound where $\frac{\boundK}{\lip\beta\zero}+3\frac{\lip\beta\zero}{\boundK}=4$.

\section{Contextual ranking bandits with fairness of exposure}\label{sec:ranking}
\begin{algorithm}
    \caption{FW-LinUCBRank: linear contextual bandits for fair ranking.\label{alg:ranking_fw_smooth}}
    \DontPrintSemicolon
     \SetKwInOut{Input}{input}
     \Input{$\darlo>0, \lambda>0, \sh\zero\in\K$
     $\V\zero = \lambda \identity{\d}, \y\zero=\zerov{\d}, \thetah\zero = \zerov{\d}$}
     \For{$\tau=1, \ldots$}{
     Observe context $\x\tp\sim\probxS$\;

     $\forall\I, \vh\tpi \gets \thetah\tpmo ^ \intercal\x\tpi + \alpha\tp\big(\frac{\darlo}{3}\big) \normtp{\x\tpi}$
     \tcp*[r]{UCB on $\v\i(\x\tp)$ (def. of $\alpha\tp$ in Lem. \ref{lem:confidence-ellipsoid}, App. \ref{sec:details_ranking_linucb})}
     
     $\a\tp \gets \topk \{\partialdev{\f}{\z\useri}(\sh\tpmo)\vh\tpi +  \partialdev{\f}{\z\i}(\sh\tpmo)\}_{\I=1}^\M$ \tcp*[r]{FW linear optimization step}
     
     Observe exposed items $\e\tp \in \{0,1\}^\M$ and user feedback $\c\tp\in\{0,1\}^\M$\;
     
     Update $\sh\tp \gets \sh\tpmo + \frac{1}{\tau} ( \r\tp-\sh\tpmo)$
     
     $\displaystyle\vphantom{\sum^\M}\smash{\V\tp \gets \V\tpmo + \sumI \e\tpi \x\tpi\x\tpi^\intercal}$,~~ $\displaystyle\smash{\y\tp \gets \y\tpmo+ \sumI \c\tpi\x\tpi}$ and 
     $\displaystyle\smash{\thetah\tp \gets \V\tp^{-1} \y\tp}$\tcp*[r]{ regression}
     } 
\end{algorithm}


In this section, we apply our reduction to the combinatorial bandit task of fair ranking, and obtain the first algorithm with regret guarantees in the contextual setting. This task is described in Example \ref{ex:rkbandit} (Sec. \ref{sec:framework}). We remind that there is a fixed set of $m$ items to rank at each timestep $t$, and that actions are flattened permutation matrices ($\aS$ is defined in Ex. \ref{ex:rkbandit}, Eq. \eqref{eq:action-ranking}). The context $\x\tp\sim\probxS$ is a matrix $\x\tp=(\x\tpi)_{i\in\intint{\M}}$ where each $\x\tpi\in\Re^\d$ represents a feature vector of item $i$ for the current user.


\textbf{Observation model.} The \emph{user utility} $\u(\x\tp)$ is given by a position-based model with position weights $\b(\x\tp)\in[0,1]^\M$ and expected value for each item $\v(\x\tp)\in[0,1]^\M$. Denoting $\u(\x\tp)$ the flattened version of $\v(\x\tp)\b(\x\tp)^\intercal\in\Re^{\M\times\M}$, the user utility is \citep{lagree2016multiple,singh2018fairness}:

\centerline{$\displaystyle\dotp{\u(\x\tp)}{\a}=\sumI \v\i(\x\tp)\sumRk \a_{\I,\Rk}\b\rk(\x\tp).$}

In this model, $\b\rk(\x\tp)\in[0,1]$ is the probability that the user observes the item at rank $\Rk$. The quantity $\sumRk \a_{\I,\Rk}\b\rk(\x\tp)$ is thus the probability that the user observes item $\I$ given ranking $\a$. We denote $\MaxRk=\max_{\x\in\xS} \norm{\b(\x)}_0\leq \M$ the maximum rank that can be exposed to any user. In most practical applications, $\MaxRk\ll\M$. As formalized in Assumption \ref{asmp:ranking} below, the position weights $\b\rk(\x)$ are always non-increasing with $k$ since the user browses the recommended items in order of their rank.
We use a linear assumption for item values, where $\boundxS$ and $\boundtheta$ are known constants:
\begin{assumption}\label{asmp:linear_bandit}
$\displaystyle\sup_{\x\in\xS} \norm{\x}_2\leq \boundxS$ and
$\displaystyle
\exists\theta\in\Re^\d, \norm{\theta}_2\leq \boundtheta$ s.t. $\displaystyle\forall\x\in\xS,\forall\I\in\intint{\M}, \v\i(\x)=\theta^\intercal\x\i$.
\end{assumption}We propose an observation model where values $\v\i(\x)$ \emph{and} position weights $\b(\x)$ are \emph{unknown}. However, we assume that at each time step $\tau$, after computing the ranking $\a\tp$, we have two types of feedback: first, $\e\tpi\in\{0,1\}$ is $1$ if item $\I$ has been exposed to the user, and $0$ otherwise. Second $\c\tpi \in\{0,1\}$ which represents a binary \like/\nolike feedback from the user. We have 
\begin{align}
    \expect[\e\tpi\big|\x\tp, \a\tp\big]=
    \sumRk \a_{\tp,\i,\rk}\b\rk(\x\tp)
    && 
    \expect\big[\c\tpi|\x\tp,  \e\tpi]=
    \begin{cases}
    \v\i(\x\tp) &\text{if } \e\tpi=1\\
    0&\text{if } \e\tpi=0
    \end{cases}
\end{align}
This observation model captures well applications such as newsfeed ranking on mobile devices or dating applications where only one post/profile is shown at a time. 
What we gain with this model is that $\b(\x)$ can \emph{depend arbitrarily on the context $\x$}, while previous work on bandits in the position-based model assumes $\b$ known and context-independent \citep{lagree2016multiple}.\footnote{When $\b$ is unknown, depends on the context $\x$, and we do not observe $\e\tp$, several approaches have been proposed to estimate the position weights \citep[see e.g.,][]{fang2019intervention}. Incorporating these approaches in contextual bandits for ranking is likely feasible but out of the scope of this work.}

\textbf{Fairness of exposure.} There are $\D=\M+1$ rewards, i.e., $\mu(\x)\in\Re^{(\M+1)\times\M^2}$. Denoting $\mu\i(\x)$ the $\I$th-row of $\mu(\x)$, seen as a column vector, each of the $\M$ first rewards is the exposure of a specific item, while the $\M+1$-th reward is the user utility:\begin{align}
\forall \I\in\intint{\M}, \dotp{\mu\i(\x)}{\a} = \sumRk \a_{\I,\Rk}\b\rk(\x) && \text{and} && \mu\useri(\x)=\u(\x)
\end{align}The observed reward vector 
$\r\tp\in\Re^\D$ is defined by
$
    \forall \I\in\intint{\M}, \r\tpi=\e\tpi
$
and
$
\r\tpuseri=\sumI 
\c\tpi.
$
Notice that $\expect\big[\r\tpuseri\big|\x\tp\big] = \u(\x\tp)$. Let $\K$ be the convex hull of $\{{z\in\{0,1\}^{m+1}: \sumI z_i \leq \MaxRk ~\text{ and }~ z_{m+1} \leq \sumI z_i}\}$, we have $\boundK\leq \smash{\sqrt{\MaxRk}\sqrt{\MaxRk+2}}\leq \MaxRk+1$ and $\r\tp\in\K$ with probability $1$. The objective function $\f:\Re^\D\rightarrow\Re$ makes a trade-off between average user utility and inequalities in item exposure (we gave an example in Eq. \eqref{eq:fair_ranking_example_objective}). 
%
The remaining assumptions of our framework are that the objective function is non-decreasing with respect to average user utility. This is not required but it is natural (see Example \ref{ex:rkbandit}) and slightly simplifies the algorithm.
\begin{assumption}\label{asmp:ranking}
The assumptions of the framework described above hold, as well as Assumption \ref{asmp:smoothness}.
Moreover, $\forall \z\in\K$ $\frac{\partial\f}{\partial\z\useri}(\z)>0$, and $\forall\x\in\xS, 1\geq \b_1(\x)\geq\ldots\geq \b\maxRk(\x)=\ldots=\b\m(\x)=0$.
\end{assumption}

\textbf{Algorithm and results.} We present the algorithm in the setting of linear contextual bandits, using LinUCB \citep{abbasi2011improved,li2010contextual} as scalar exploration/exploitation algorithm in Algorithm \ref{alg:ranking_fw_smooth}. It builds reward estimates based on Ridge regression with regularization parameter $\lambda.$ As in the previous section, we focus on the case where $\f$ is smooth but the extension to nonsmooth $\f$ is straightforward, as described in Section \ref{sec:frank_wolfe}. Appendix \ref{sec:details_ranking_linucb} provides the analysis for the general case. 

As noted by \citet{do2021two}, Frank-Wolfe algorithms are particularly suited for fair ranking in the position-based model. This is illustrated by line 4 of Alg. \ref{alg:ranking_fw_smooth}, where for $\tilde{\u}\in\Re^\M$, $\topk(\tilde{\u})$ outputs a permutation (matrix) of $\intint{\M}$ that sorts the top-$\MaxRk$ elements of $\tilde{\u}.$ Alg. \ref{alg:ranking_fw_smooth} is thus \emph{computationally fast}, with a cost dominated by the top-$\MaxRk$ sort. It also has an intuitive interpretation as giving items an adaptive bonus depending on $\nabla f$ (e.g., boosting the scores of items which received low exposure in previous steps). The following result is a consequence of \citep[Theorem 1]{do2021two}:
\begin{proposition}\label{lem:argsort}
Let $\tau\in\Nat_*$ and $\muh\tp$ such that $\forall\I\in\intint{\M}, \muh\tpi=\mu\i(\x\tp)$ and $\muh\tpuseri=\vh\tp\b(\x\tp)^\intercal$ viewed as a column vector, with $\vh$ defined in line 3 of Algorithm \ref{alg:ranking_fw_smooth}. Then, under Assumption \ref{asmp:ranking}, $\a\tp$ defined on line 4 of Algorithm \ref{alg:ranking_fw_smooth} satisfies:
$\displaystyle
     \dotp{\nabla\f(\sh\tpmo)}{\muh\tp\a\tp} = \argmaxa\dotp{\nabla\f(\sh\tpmo)}{\muh\tp\a}
$.
\end{proposition}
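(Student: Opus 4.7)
The plan is to unpack the inner product $\dotp{\nabla\f(\sh\tpmo)}{\muh\tp\a}$ using the specific block structure of $\muh\tp$, show that the resulting objective is exactly a weighted assignment problem with the scores sorted by line 4, and then invoke the ranking lemma of \citet{do2021two} to conclude optimality of the top-$\MaxRk$ sort.

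First, I would split the inner product according to the $D = m+1$ coordinate blocks of $\muh\tp$:
\begin{align}
\dotp{\nabla\f(\sh\tpmo)}{\muh\tp\a} = \sumI \partialdev{\f}{\z\i}(\sh\tpmo)\,\dotp{\mu\i(\x\tp)}{\a} + \partialdev{\f}{\z\useri}(\sh\tpmo)\,\dotp{\vh\tp\b(\x\tp)^\intercal}{\a}.
\end{align}
Substituting $\dotp{\mu\i(\x\tp)}{\a} = \sumRk \a_{\I,\Rk}\b\rk(\x\tp)$ and $\dotp{\vh\tp\b(\x\tp)^\intercal}{\a}=\sumI\vh\tpi\sumRk \a_{\I,\Rk}\b\rk(\x\tp)$, and regrouping by item index $\I$, the inner product collapses to
\begin{align}
\sumI \w\tpi \sumRk \a_{\I,\Rk}\b\rk(\x\tp), \qquad \w\tpi := \partialdev{\f}{\z\i}(\sh\tpmo) + \partialdev{\f}{\z\useri}(\sh\tpmo)\,\vh\tpi,
\end{align}
which are precisely the per-item scores on which line 4 of Algorithm~\ref{alg:ranking_fw_smooth} applies its top-$\MaxRk$ sort.

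Second, I would recognize this as the standard position-based assignment problem over the permutation polytope $\aS$, with item weights $\w\tpi$ and position weights $\b\rk(\x\tp)$ which, by Assumption~D, are non-increasing in $\Rk$ and vanish beyond rank $\MaxRk$. A classical rearrangement argument — formalized as \citep[Theorem~1]{do2021two} in the fair-ranking position-based-model context — shows that any maximizer assigns the $\MaxRk$ items with the largest $\w\tpi$ to ranks $1, \ldots, \MaxRk$ in decreasing order of $\w\tpi$, and is indifferent to the placement of the remaining items since $\b\rk(\x\tp)=0$ there. By construction, the permutation $\a\tp$ returned on line 4 satisfies exactly this property, so $\dotp{\nabla\f(\sh\tpmo)}{\muh\tp\a\tp} = \maxa \dotp{\nabla\f(\sh\tpmo)}{\muh\tp\a}$.

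The only subtlety, and thus the main obstacle, is that the scores $\w\tpi$ need not be non-negative: the partial derivative $\partialdev{\f}{\z\i}$ is unsigned, and $\vh\tpi$ is only an upper confidence estimate. A straightforward rearrangement exchange argument handles this — if fewer than $\MaxRk$ items have positive scores, the top-$\MaxRk$ sort still places the \emph{least} negative scores at positive-exposure ranks, which is optimal because any alternative would push a strictly smaller $\w_j$ into a rank with $\b\rk(\x\tp)>0$ while sending the larger $\w\tpi$ to a zero-exposure rank. This sign-insensitive version of the rearrangement inequality is exactly what is established in \citep[Theorem~1]{do2021two}, so the proof reduces to citing that result once the scalar reduction above is in hand.
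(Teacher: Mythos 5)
Your proof is correct and follows essentially the same route as the paper, which states the proposition as a direct consequence of \citep[Theorem 1]{do2021two} without further detail: you simply make explicit the block decomposition of $\dotp{\nabla\f(\sh\tpmo)}{\muh\tp\a}$ into the per-item scores of line 4 and then invoke the same sign-insensitive rearrangement argument. The only cosmetic point is that the displayed conclusion should read $\a\tp\in\argmaxa\dotp{\nabla\f(\sh\tpmo)}{\muh\tp\a}$ (or equate the left-hand side to the $\max$ rather than the $\argmax$), which your final sentence in fact already states correctly.
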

The proposition says that even though computing $\a\tp$ as in line 4 of Alg. \ref{alg:ranking_fw_smooth} does not require the knowledge of $\b(\x\tp)$, we still obtain the optimal update direction according to $\muh\tp$. Together with the usage of the observed reward $\r\tp$ in \FW iterates (instead of e.g., $\muh\tp\a\tp$ as would be done by \citet{agrawal2014bandits}), this removes the need for explicit estimates of $\mu(\x\tp)$. This is how our algorithm works \emph{without knowing the position weights} $\b(\x\tp)$, which are then allowed to depend on the context.

The usage of $\vh\tp$ to compute $\a\tp$ follows the usual confidence-based approach to explore/exploitation principles for linear bandits, which leads to the following result (proven in Appendix  \ref{sec:details_ranking_linucb}):
\begin{restatable}{theorem}{regretlinucbfw}
Under Assumptions \ref{asmp:smoothness}, \ref{asmp:linear_bandit} and  \ref{asmp:ranking}, for every $\darlo>0$, every $\T\in\Nat_*$, every $\lambda\geq \boundxS^2\MaxRk$, with probability at least $1-\darlo$, Algorithm \ref{alg:ranking_fw_smooth} has scalar regret bounded by 
\begin{align}
\reghscal{\T}=O\bigg(\lip\sqrt{\T\MaxRk}\sqrt{\d\ln(\T/\darlo)}\Big(\sqrt{\d\ln(\T/\darlo)}+\boundtheta\sqrt{\lambda}+\sqrt{\MaxRk/\d}\Big)\bigg).
\end{align}
Thus, considering only $\d, \T, \MaxRk$ and $\dazuma=\darlo$ Alg. \ref{alg:ranking_fw_smooth} has regret $\regh{\T}\leq O\big(\frac{\d\MaxRk\ln(\T/\delta)}{\sqrt{\T}}\big)$ w.p. at least $1-\delta$.
\end{restatable}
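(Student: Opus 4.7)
The plan is to decompose the scalar regret into an optimism term and a martingale term, and then bound the optimism term using a LinUCB-style confidence ellipsoid argument combined with an elliptical potential tailored to the combinatorial observation model. Throughout, let $\g\tp=\nabla\f(\sh\tpmo)$, let $\a^*_\tau\in\argmaxa\dotp{\g\tp}{\mu(\x\tp)\a}$, and let $\muh\tp$ be the matrix defined in Proposition~\ref{lem:argsort}, that is $\muh\tpi=\mu\i(\x\tp)$ for $\I\in\intint{\M}$ and $\muh\tpuseri=\vh\tp\b(\x\tp)^\intercal$ (as a flattened matrix). By Proposition~\ref{lem:argsort}, the line 4 update is equivalent to $\a\tp\in\argmaxa\dotp{\g\tp}{\muh\tp\a}$, so the algorithm behaves as if it knew the unknown $\b(\x\tp)$.

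First I would write the standard telescoping regret decomposition
\begin{align}
\reghscal{\T} &= \sumT\dotp{\g\tp}{\mu(\x\tp)\a^*_\tau-\mu(\x\tp)\a\tp} + \sumT\dotp{\g\tp}{\mu(\x\tp)\a\tp-\r\tp}.\nonumber
\end{align}
The second sum is a bounded martingale difference sequence in $[-\lip\boundK,\lip\boundK]$, so Azuma--Hoeffding gives a contribution of order $\lip\boundK\sqrt{\T\ln(1/\darlo)}=O(\lip\MaxRk\sqrt{\T\ln(1/\darlo)})$, matching the $\sqrt{\MaxRk/\d}$ summand in the theorem after factoring out $\sqrt{\T\MaxRk}\sqrt{\d\ln(\T/\darlo)}$. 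For the first sum, I would use optimism: on the confidence event of Lemma~\ref{lem:confidence-ellipsoid} (App.~\ref{sec:details_ranking_linucb}), $\vh\tpi\geq \v\i(\x\tp)$, and since $\partial\f/\partial\z\useri\geq 0$ by Assumption~\ref{asmp:ranking}, we have $\dotp{\g\tp}{\muh\tp\a}\geq\dotp{\g\tp}{\mu(\x\tp)\a}$ for all $\a$. Combining with the choice of $\a\tp$ yields the per-step bound
\begin{align}
\dotp{\g\tp}{\mu(\x\tp)\a^*_\tau-\mu(\x\tp)\a\tp}\leq\dotp{\g\tp}{\muh\tp\a\tp-\mu(\x\tp)\a\tp}=\tfrac{\partial\f}{\partial\z\useri}(\sh\tpmo)\sumI\big(\vh\tpi-\v\i(\x\tp)\big)\,p\tpi,\nonumber
\end{align}
where $p\tpi=\sumRk\a_{\tp,\I,\Rk}\b\rk(\x\tp)=\expect[\e\tpi\mid\x\tp,\a\tp]$.

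Next I would apply the confidence ellipsoid $|\vh\tpi-\v\i(\x\tp)|\leq 2\alpha\tp(\darlo/3)\norm{\x\tpi}_{\V\tpmo^{-1}}$ valid with probability $\geq 1-\darlo/3$, with $\alpha\tp(\cdot)=O(\sqrt{\d\ln(\T/\darlo)}+\boundtheta\sqrt{\lambda})$. Summing over $\tau$, it remains to bound $\sumT\sumI p\tpi\norm{\x\tpi}_{\V\tpmo^{-1}}$. I would replace expectations by realizations via a second Azuma/Freedman-type argument on the martingale $\sumI(p\tpi-\e\tpi)\norm{\x\tpi}_{\V\tpmo^{-1}}$ (whose increments are bounded since $\norm{\x\tpi}_{\V\tpmo^{-1}}\leq\boundxS/\sqrt{\lambda}$), at a cost of another $O(\sqrt{\MaxRk\T\ln(1/\darlo)})$ term. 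Then Cauchy--Schwarz gives
\begin{align}
\sumT\sumI\e\tpi\norm{\x\tpi}_{\V\tpmo^{-1}}\leq\sqrt{\sumT\sumI\e\tpi}\,\sqrt{\sumT\sumI\e\tpi\norm{\x\tpi}_{\V\tpmo^{-1}}^2}\leq\sqrt{\MaxRk\T}\cdot\sqrt{2\d\ln\bigl(1+\tfrac{\T\MaxRk\boundxS^2}{\d\lambda}\bigr)},\nonumber
\end{align}
where the first factor uses $\sumI\e\tpi\leq\MaxRk$ and the second is the standard elliptical potential lemma applied to the Ridge update $\V\tp=\V\tpmo+\sumI\e\tpi\x\tpi\x\tpi^\intercal$ (Lemma~11 in \citet{abbasi2011improved}, adapted to the per-round rank-$\MaxRk$ update via $\lambda\geq\boundxS^2\MaxRk$, which keeps $\norm{\x\tpi}_{\V\tpmo^{-1}}^2\leq 1$ so that $\log\det$ telescoping still applies). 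Multiplying everything together and absorbing lower-order terms yields the stated bound on $\reghscal{\T}$.

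The last step is to plug this into Theorem~\ref{thm:simplified_fw} (setting $\dazuma=\darlo$ and using the union bound over the two confidence events), which converts scalar regret divided by $\T$ into the $\cbcr$ regret $\regh{\T}$, and adds the $\lip\boundK\sqrt{\ln(1/\dazuma)/\T}+\curv{}\ln(e\T)/\T$ correction; all three summands inside the parenthesis of the scalar bound divided by $\T$ are dominated by $\d\MaxRk\ln(\T/\darlo)/\sqrt{\T}$ (using $\boundK=O(\MaxRk)$ and $\sqrt{\MaxRk}\leq\MaxRk$), giving the final rate. The main obstacle I anticipate is the $\e\tpi$-weighted elliptical potential: one must verify that the Ridge design matrix accumulates only exposed items, so the potential argument sees at most $\MaxRk$ rank-one updates per round (not $\M$), which is what ultimately yields the $\sqrt{\MaxRk}$ rather than $\sqrt{\M}$ dependence and justifies the choice $\lambda\geq\boundxS^2\MaxRk$.
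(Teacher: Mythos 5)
Your proposal is correct and follows essentially the same route as the paper's proof (Lemma~\ref{lem:Ebar-ranking} in Appendix~\ref{sec:details_ranking_linucb}): the same optimism argument via Proposition~\ref{lem:argsort} and the confidence ellipsoid of Lemma~\ref{lem:confidence-ellipsoid}, the same Azuma step converting expected exposures $\ee\tpi$ to realized $\e\tpi$, and the same Cauchy--Schwarz plus $\e\tpi$-weighted elliptical-potential bound (Eq.~\eqref{eq:bound_on_det}, taken from \citet{li2016contextual}) before plugging into the Frank--Wolfe reduction. The only cosmetic difference is that you fold the paper's terms $\tmpboundfirst\tp$ and $\tmpboundsecond\tp$ into a single optimism bound rather than splitting them, which changes nothing in substance.
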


\section{Experiments}\label{sec:xps}

We present two experimental evaluations of our approach, which are fully detailed in App. \ref{sec:details_xps}.

\subsection{Multi-armed \cbcr: Application to multi-objective bandits}

\begin{figure}[t]
    \centering
    \includegraphics[width=\linewidth]{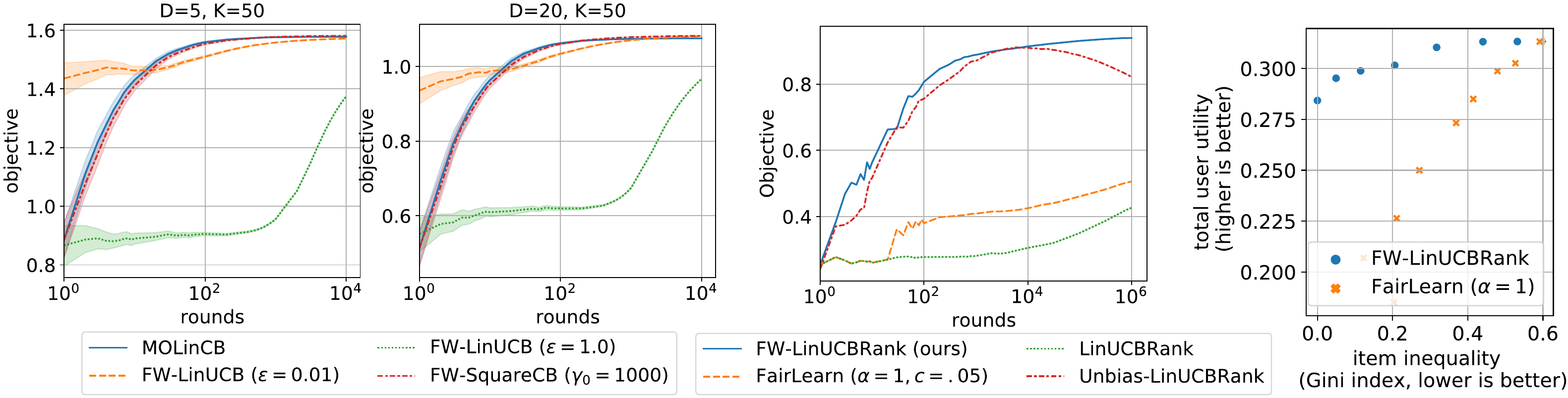}
    \caption{\textbf{(left)} Multi-armed \cbcr: Objective values on environments from \citep{mehrotra2020bandit}. \textbf{(middle)}  Ranking \cbcr: Fairness objective value over timesteps on Last.fm data. \textbf{(right)} Ranking \cbcr: Trade-off between user utility and item inequality after $5 \times 10^6$ iterations on Last.fm data.}
    \label{fig:all-main}
\end{figure}

We first focus on the multi-objective recommendation task of Example \ref{ex:mobandit} where $f(z) = \sum_{i=1}^D w_i z^\uparrow_i$.

\textbf{Algorithms.} We evaluate our two instantiations presented in Sec. \ref{sec:multiarm_fw} with the Moreau-Yosida smoothing technique of Sec. \ref{sec:main_body_moreau}: (i) FW-SquareCB with Ridge regression and (ii) FW-LinUCB, where exploration is controlled by a scaling variable $\epsilon$ on the exploration bonus of each arm. We compare them to MOLinCB from \citep{mehrotra2020bandit}.

\textbf{Environments.} We reproduce the synthetic environments of \citet{mehrotra2020bandit}, where the context and reward parameters are generated randomly, and $w_i = \frac{1}{2^{i-1}}$. We set $K=50$ and $D \in \{5,20\}$ (we also vary $K$ in App. \ref{sec:details_xps}). Each simulation is repeated with 100 random seeds. 


\textbf{Results.} 
Following \citep{mehrotra2020bandit}, we evaluate the algorithms' performance by measuring the value of $f(\meanT \mu(\x\tp) \a\tp)$ over time. Our results are shown in Figure \ref{fig:all-main} (left).
We observe that our algorithm FW-SquareCB obtains comparable performance with the baseline MOLinCB. These algorithms converge after $\approx 100$ rounds. In this environment from \citep{mehrotra2020bandit}, only little exploration is needed, hence FW-LinUCB obtains better performance when $\epsilon$ is smaller ($\epsilon=0.01).$ The advantage of using an FW instantiation for the multi-objective bandit optimization task is that unlike MOLinCB, its convergence is also supported by our theoretical regret guarantees.

\vspace{-0.1cm}

\subsection{Ranking \cbcr: Application to fairness of exposure in rankings}

We now tackle the ranking problem of Section \ref{sec:ranking}. We show how FW-LinUCBRank allows to fairly distribute exposure among items on a music recommendation task with bandit user feedback.

\textbf{Environment.} Following \citep{patro2020fairrec}, we use the Last.fm music dataset from \citep{Cantador:RecSys2011}, from which we extract the top 50 users and items with the most listening counts. We use a protocol similar to \cite{li2016contextual} to generate context and rewards from those. We use $\bar{k}=10$ ranking slots, and exposure weights  $\b_k(\x) = \frac{\log(2)}{1 + \log(k)}.$ Simulations are repeated with $10$ seeds.

\textbf{Algorithms.} Our algorithm is FW-LinUCBRank with the nonsmooth objective $f$ of Eq. \eqref{eq:fair_ranking_example_objective}, which trades off between user utility and item inequality. We study other fairness objectives in App. \ref{sec:details_xps}. Our first baseline is \pbmucb \citep{ermis2020learning}, designed for ranking without fairness. Then, we study two baselines with amortized fairness of exposure criteria. \citet{mansoury2021unbiased} proposed a fairness module for UCB-based ranking algorithms, which we plug into \pbmucb. We refer to this baseline as \mansoury. Finally, the \patilparam algorithm \citep{patil2020achieving} enforces as fairness constraint that the pulling frequency of each arm be $\geq c$, up to a tolerance $\alpha$. We implement as third baseline a simple adaptation of \patil to contextual bandits and ranking.

\textbf{Dynamics.} Figure \ref{fig:all-main} (middle) represents the values of $f$ over time achieved by the competing algorithms, for fixed $\beta=1.$ As expected, compared to the fairness-aware and -unaware baselines, our algorithm FW-LinUCBRank reaches the best values of $f$. Interestingly, \mansoury also obtains high values of $f$ on the first $10^4$ rounds, but its performance starts decreasing after more iterations. This is because \mansoury is not guaranteed to converge to an optimal trade-off between user fairness and item inequality.

\textbf{At convergence.} We analyse the trade-offs achieved after $5\cdot 10^6$ rounds between user utility and item inequality measured by the Gini index. We vary $\beta$ in the objective $\f$ of Eq. \eqref{eq:fair_ranking_example_objective} for FW-LinUCBRank and the strength $c$ in \patilparam, with tolerance $\alpha=1$. In Fig. \ref{fig:all-main} (right), we observe that compared to \patil, FW-LinUCBRank converges to much higher user utility at all levels of inequality among items. In particular, it achieves zero-unfairness at little cost for user utility.

\vspace{-0.1cm}

\section{Conclusion}
We presented the first general approach to contextual bandits with concave rewards. To illustrate the usefulness of the approach, we show that our results extend randomized exploration with generic online regression oracles to the concave rewards setting, and extend existing ranking bandit algorithms to fairness-aware objective functions. The strength of our reduction is that it can produce algorithms for \cbcr from any contextual bandit algorithm, including recent extensions of SquareCB to infinite compact action spaces \citep{zhu2022contextualcontinuous,zhu2022contextualpractical} and future ones.

In our main application to fair ranking, the designer sets a fairness trade-off $f$ to optimize. In practice, they may choose $f$ among a small class by varying hyperparameters (e.g. $\beta$ in Eq. \eqref{eq:fair_ranking_example_objective}). 
An interesting open problem is the integration of recent elicitation methods for $f$ \citep[e.g.,][]{lin2022preference} in the bandit setting. Another interesting issue is the generalization of our framework to include constraints \citep{agrawal2016linear}. 
Finally, we note that the deployment of our algorithms requires to carefully design the whole machine learning setup, including the specification of reward functions \citep{stray2021you}, the design of online experiments \citep{bird2016exploring}, while taking feedback loops into account \citep{bottou2013counterfactual,jiang2019degenerate,dean2022preference}.

\subsubsection*{Acknowledgments}
The authors would like to thank Clément Vignac, Marc Jourdan, Yaron Lipman, Levent Sagun and the anonymous reviewers for their helpful comments.

\bibliography{references} 
\bibliographystyle{iclr2023_conference}


\newpage
\addtocontents{toc}{\protect\setcounter{tocdepth}{2}}
\appendix
    
    

\tableofcontents

\section{Related work}\label{sec:related_work}

The non-contextual setting of bandits with concave rewards (\bcr) has been previously studied by \citet{agrawal2014bandits}, and by \citet{busa2017multi} for the special case of Generalized Gini indices. In \bcr, policies are distributions over actions. These approaches perform a direct optimization in policy space, which is not possible in the contextual setup without restrictions or assumptions on optimal policies. \citet{agrawal2016efficient} study a setting of \cbcr where the goal is to find the best policy in a finite set of policies. Because they rely on explicit search in the policy space, they do not resolve the main challenge of the general \cbcr setting we address here.  \citet{cheung2019regret,siddique2020learning,mandal2022socially,geist2021concave} address multi-objective reinforcement learning with concave aggregation functions, a problem more general than stochastic contextual bandits. In particular, \citet{cheung2019regret} use a \FW approach for this problem. However, these works rely on a tabular setting (i.e., finite state and action sets) and explicitly compute policies, which is not possible in our setting where policies are mappings from a continuous context set to distributions over actions. Our work is the only one amenable to contextual bandits with concave rewards by removing the need for an explicit policy representation. Finally, compared to previous \FW approaches to bandits with concave rewards, e.g. \citep{agrawal2014bandits,berthet2017fast}, our analysis is not limited to confidence-based exploration/exploitation algorithms.

\cbcr is also related to the broad literature on bandit convex optimization (BCO) \citep{flaxman2004online,agarwal2011stochastic,hazan2016introduction,shalev2012online}. In BCO, the goal is to minimize a cumulative loss of the form $\sum_{t=1}^T \ell_t(\pi_t)$, where the convex loss function $\ell_t$ is \emph{unknown} and the learner only observes the value  $\ell_t(\pi_t)$ of the chosen parameter $\pi_t$ at each timestep. Existing approaches to BCO perform gradient-free optimization in the parameter space. While \bcr considers global objectives rather than cumulative ones, similar approaches have been used in non-contextual \bcr \citep{berthet2017fast} where the parameter space is the convex set of distributions over actions. As we previously highlighted, such parameterization does not apply to \cbcr because direct optimization in policy space is infeasible.

\cbcr is also related to multi-objective optimization \citep{miettinen2012nonlinear,drugan2013designing}, where the goal is to find all Pareto efficient solutions. (C)\bcr, focuses on one point of the Pareto front determined by the concave aggregation function $f$, which is more practical in our application settings where the decision-maker is interested in a specific (e.g., fairness) trade-off.

In recent years, the question of fairness of exposure attracted a lot of attention, and has been mostly studied in a static ranking setting \citep{geyik2019fairness,beutel2019fairness,yang2017measuring,singh2018fairness,patro2022fair,zehlike2021fairness,kletti2022introducing,diaz2020evaluating,do2022sigir,wu2022joint}. Existing work on fairness of exposure in bandits focused on local exposure constraints on the probability of pulling an arm at each timestep, either in the form of lower/upper bounds  \citep{celis2018algorithmic} or merit-based exposure targets \citep{wang2021fairness}. In contrast, we consider amortized exposure over time, in line with prior work on fair ranking \citep{biega2018equity,morik2020controlling,usunier2022fast}, along with fairness trade-offs defined by concave objective functions which are more flexible than fairness constraints \citep{zehlike2020reducing,do2021two,usunier2022fast}. Moreover, these works \citep{celis2018algorithmic,wang2021fairness} do not address combinatorial actions, while ours applies to ranking in the position-based model, which is more practical for recommender systems \citep{lagree2016multiple,singh2018fairness}. The methods of \citep{patil2020achieving,chen2020fair} aim at guaranteeing a minimal cumulative exposure over time for each arm, but they also do not apply to ranking. In contrast, \citep{xu2021combinatorial,li2019combinatorial} consider combinatorial bandits with fairness, but they do not address the contextual case, which limits their practical application to recommender systems. \citep{mansoury2021unbiased,jeunen2021top} propose heuristic algorithms for fairness in ranking in the contextual bandit setting, highlighting the problem's importance for real-world recommender systems, but they lack theoretical guarantees. Using our \FW reduction with techniques from contextual combinatorial bandits \citep{lagree2016multiple,li2016contextual,qin2014contextual}, we obtain the first principled bandit algorithms for this problem with provably vanishing regret.




\section{More on experiments}\label{sec:details_xps}

Our experiments are fully implemented in Python 3.9.

\subsection{Ranking \cbcr: Application to fairness of exposure in rankings with bandit feedback}

\subsubsection{Details of the environment and algorithms}

\paragraph{Environment} Following \citep{patro2020fairrec} who also address fairness in recommender systems, we use the Last.fm music dataset\footnote{\url{https://www.last.fm}, the dataset is publicly available for non-commercial use.} from \citep{Cantador:RecSys2011}, which includes the listening counts of $1,892$ users for the tracks of $17,632$ artists, which we identify as the items. For the first environment, which we presented in Section \ref{sec:xps} and which we call \lastfmsmaller here, we extract the top $n=50$ users and $m=50$ items having the most interactions. In order to examine algorithms at larger scale, we also design another environment, \lastfmsmall, where we keep all $n=1.9k$ users and the top $m=2.5k$ items having the most interactions. In both cases, to generate contexts and rewards, we follow a protocol similar to other works on linear contextual bandits \citep{garcelon2020adversarial,li2016contextual}. Using low-rank matrix factorization with $d'$ latent factors\footnote{Using the Python library Implicit, MIT License: \url{https://implicit.readthedocs.io/}}, we obtain user factors $u_j \in \Re^{d'}$ and item factors $v_i \in \Re^{d'}$ for all $j,i \in \intint{n} \times \intint{m}.$ We design the context set as $\mathcal{X} = \{ \flatten(u_j v_i^\intercal) : j,i \in \intint{n} \times \intint{m}\} \subset \Re^d,$ where $d=d'^2.$ At each time step $t,$ the environment draws a user $j_t$ uniformly at random from $\intint{n}$ and sends context $x_t = \flatten(u_{j_t} v_i^\intercal).$ Given a context $x_t$ and item $i,$ clicks are drawn from a Bernoulli distribution: $c_{t,i} \sim \mathcal{B}(u_{j_t}^\intercal  v_i).$

We set $\bar{k} = 10$, and for the position weights, we use
the standard weights of the discounted cumulative gain (DCG): $\forall k \in \intint{\bar{k}}, b_k = \frac{1}{\log_2(1+k)}$ and $b_{\bar{k}+1}, \ldots, b_m = 0.$


\begin{figure}[t]
    \centering
    \includegraphics[width=0.8\linewidth]{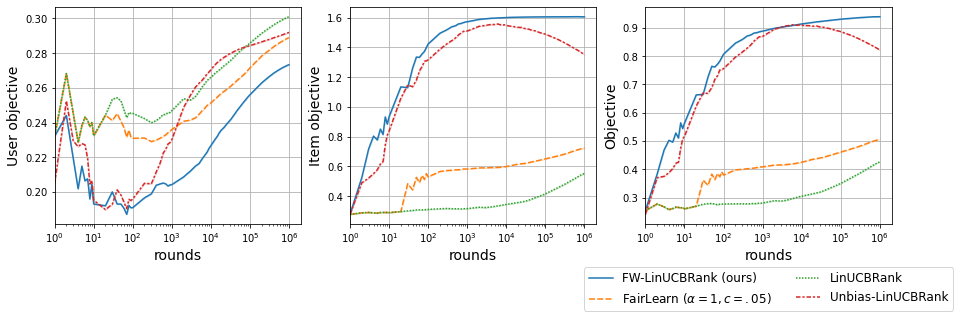}
    \quad\quad \includegraphics[width=0.8\linewidth]{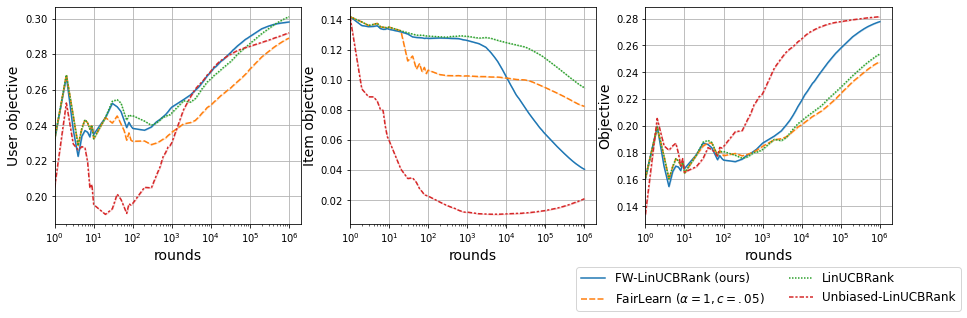}
    \quad\quad \includegraphics[width=0.8\linewidth]{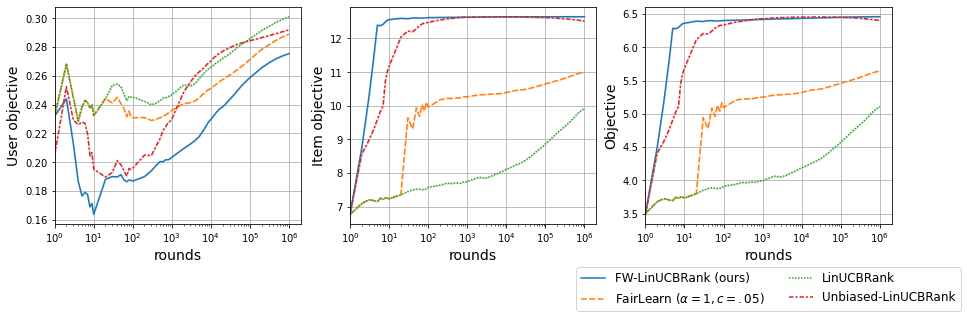}
    \caption{\lastfmsmaller: Objective values over time for (top) \ggfgini, (middle) \equalexpo, (bottom) \welf.}
    \label{fig:all-obj-lastfmsmaller}
\end{figure}

\begin{figure}[t]
    \centering
    \includegraphics[width=0.8\linewidth]{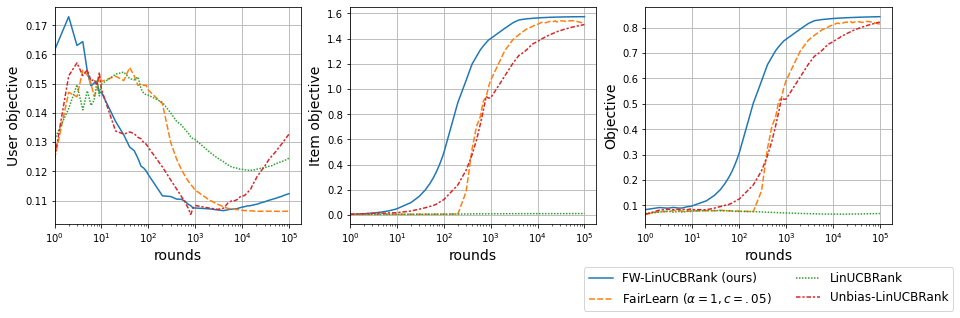}
    \quad\quad \includegraphics[width=0.8\linewidth]{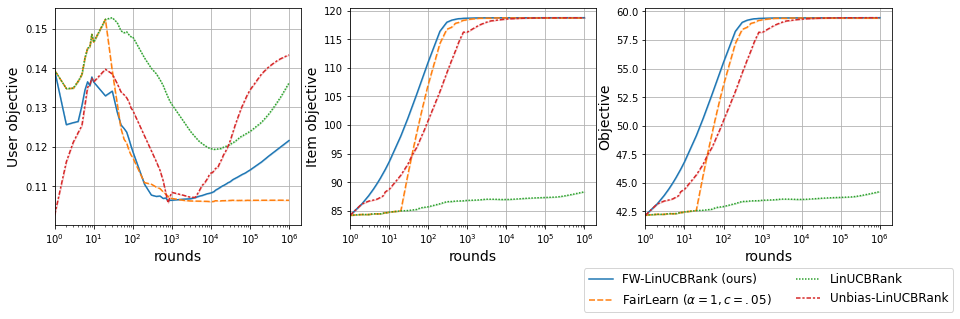}
    \caption{\lastfmsmall: Objective values over time for (top) \ggfgini, (bottom) \welf.}
    \label{fig:all-obj-lastfmsmall}
\end{figure}

\begin{figure}[t]
    \centering
    \includegraphics[width=0.35\linewidth]{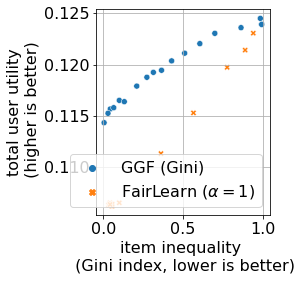}
    \caption{Trade-offs between user utility and inequality on \lastfmsmall, after $T=10^6$ rounds.}
    \label{fig:pareto-lastfm2k}
\end{figure}

\paragraph{Details of the algorithms} For all algorithms, the regularization parameter of the Ridge regression is set to $\lambda=0.1.$

The first baseline we consider is the algorithm \pbmucb\footnote{\pbmucb appears under various names in the literature, including PBMLinUCBRank \citep{ermis2020learning} and CascadeLinUCB \citep{kveton2015cascading}.} of \citep{ermis2020learning}, which is a $\topk$ ranking bandit algorithm without fairness. It is equivalent to using FW-LinUCBRank with $f(s) = s_{m+1},$ which corresponds to the usual $\topk$ ranking objective without item fairness.
More precisely, at each timestep, the algorithm produces a $\topk$ ranking of $\left( \htheta_{t-1} ^ \intercal x_{t,i} + \alpha_t(\frac{\delta'}{3})\norm{x_{t,i}}_{V_{t-1}^{-1}}\right)_{i=1}^m.$ 

We also consider as baselines two bandit algorithms with amortized fairness of exposure criteria.
First, \citet{mansoury2021unbiased} proposed a fairness module for cascade ranking bandits, which can be easily adapted to the position-based model (PBM). Their goals include reducing inequality in exposure between items, measured by the Gini index of exposures in their experiments. While they measure the exposure of an item as their recommendation frequency over time, we adapt their module to the PBM by using the observation frequency, i.e. $\sum_{t'=1}^t e_{t',i}$ for item $i$ at time $t.$ Transposed to our setting, their module consists in a simple modification of \pbmucb by multiplying the exploration bonus of each item $i$ by a factor:
\begin{align}
  \eta_{t,i} = 1 - \frac{\sum_{t'=1}^{t-1} e_{t',i} }{\sum_{t'=1}^{t-1} \frac{1}{\MaxRk 
  }\sum_{i'=1}^m e_{t',i'}}.  
\end{align}
More precisely, at each timestep, the algorithm produces a $\topk$ ranking of $\left( \htheta_{t-1} ^ \intercal x_{t,i} + \eta_{t,i} \times \alpha_t(\frac{\delta'}{3})\norm{x_{t,i}}_{V_{t-1}^{-1}}\right)_{i=1}^m.$ Following \citep{mansoury2021unbiased}, we call this baseline \mansoury.

Our second baseline with fairness is the \patilparam algorithm of \citet{patil2020achieving} for stochastic bandits with a fairness constraint on the pulling frequency $N_{t,i}$ of each arm $i$ at each timestep $t$. The constraint is parameterized by a variable $c$ and a tolerance parameter $\alpha$: $\lfloor ct \rfloor - N_{t,i} \leq \alpha.$ We adapt \patilparam to ranking by applying the algorithm sequentially for each recommendation slot, while constraining the algorithm not to choose the same item twice for a given ranked list. We also adapt \patil to contextual bandits by using LinUCB as underlying learning algorithm. More precisely, for the current timestep and slot, if the constraint is not violated, then the algorithm plays the item with the highest LinUCB upper confidence bound. 

\paragraph{Objectives} To illustrate the flexibility of our approach, we use algorithm FW-LinUCBRank to optimize three existing objectives which trade off between user utility and item fairness, in the form: $f(s) = s_{m+1} + \beta \itemobj(s_{1:m}).$ \ggfgini measures item inequality by the Gini index, as in \citep{biega2018equity,morik2020controlling,do2022sigir}, and \equalexpo uses the standard deviation \citep{do2021two}:
\begin{align}
    \textit{(Gini)} ~~ \itemobj(s) = \sumjm \frac{m - j + 1}{m} s^\uparrow_j  ~~\quad\quad~~ \textit{(eq. expo)} ~~ \itemobj(s) =- \frac{1}{m} \sqrt{\sumjm \left(s_j - \frac{1}{m} \sum_{j'=1}^m s_{j'}\right)^2} 
\end{align}
Since \ggfgini is nonsmooth, we apply the FW-LinUCBRank algorithm for nonsmooth $f$ with Moreau-Yosida regularization, presented in Section \ref{sec:main_body_moreau} and detailed in Appendix \ref{sec:smooth_approx:moreau} (we use $\beta_0=1$ in our experiments). To compute the gradient of the Moreau envelope $f_t,$ we use the algorithm of \citet{do2022sigir} which specifically applies to generalized Gini functions and $\topk$ ranking.

We also study additive concave welfare functions \citep{do2021two,moulin2003fair} where $\alpha$ is a parameter controlling the degree of redistribution of exposure to the worse-off items:
\begin{align}
    \textit{(Welf)} ~~ \itemobj(s) = \sumjm s_j^\alpha\,, ~~ \alpha > 0
\end{align}

\subsubsection{Additional results}

We now present additional results, which are obtained by repeating each simulation with 10 different random seeds.

\paragraph{Dynamics} For the three objectives described, Figure \ref{fig:all-obj-lastfmsmaller} represents the values of the user and item objectives (left and middle), and the value of the objective $f$ (right) over time, achieved by the competing algorithms on \lastfmsmaller. We set $\beta=0.5$ for all objectives and for \welf, we set $\alpha=0.5.$ We observe that with this value of $\beta,$ the item objective $\itemobj$ is given more importance in $f$ than the user utility. 

We observe that for \ggfgini and \welf, \rankours achieves the highest value of $f$ across timesteps. This is because unlike \pbmucb, it accounts for the item objective $\itemobj.$ In both cases, \mansoury achieves a high value of $f$ over time but starts decreasing, after $10^4$ iterations for \ggfgini and $5.10^5$ iterations for \welf. This is because \mansoury is not designed to converge towards an optimum of $f$. For \equalexpo, when $\beta=0.5,$ \mansoury obtains surprisingly better values of $f$ than \rankours. Therefore, depending on the objective to optimize and the timeframe, \mansoury can be chosen as an alternative to \rankours. However, due to its lack of theoretical guarantees, it is more difficult to understand in which cases it may work, and for how many iterations. Furthermore, unlike \mansoury, \rankours can be chosen to optimise a wide variety of functions by varying the tradeoff parameter $\beta$ in all objectives, and $\alpha$ in \welf
to control the degree of redistribution. \mansoury does not have such controllability and flexibility.

Figure \ref{fig:all-obj-lastfmsmall} shows the objective values for \ggfgini and \welf on \lastfmsmall. We observe similar results where \rankours converges more quickly than its competitors ($\approx 5,000$ iterations for \ggfgini and $\approx 500$ iterations for \welf) and obtains the highest values of $f.$ For the first $10^5$ iterations of optimizing \ggfgini, \mansoury obtains significantly lower values than \rankours on \welf.

\paragraph{Fairness trade-off for fixed $\T$} On the larger \lastfmsmall dataset, we study the tradeoffs between user utility and item inequality obtained by \rankours and \patil on Figure \ref{fig:pareto-lastfm2k} after $T=10^6$ rounds. The Pareto frontiers are obtained as follows: \rankours optimises for \ggfgini, in which we vary $\beta$, and for \patil we vary the constraint value $c$ at fixed $\alpha=1$. Figure \ref{fig:all-main} in Section \ref{sec:xps} of the main paper illustrated the same Pareto frontier but for $5\times$ more iterations and on the smaller \lastfmsmaller dataset. Although the algorithms might not have converged for this larger dataset, we observe that \rankours obtains better trade-offs than \patil, achieving higher user utility at all levels of inequality. We conclude that even in a setting with more items and shorter learning time, \rankours effectively reduces item inequality, at lower cost for user utility than the baseline.

\begin{figure}[t]
    \centering
    \includegraphics[width=0.95\linewidth]{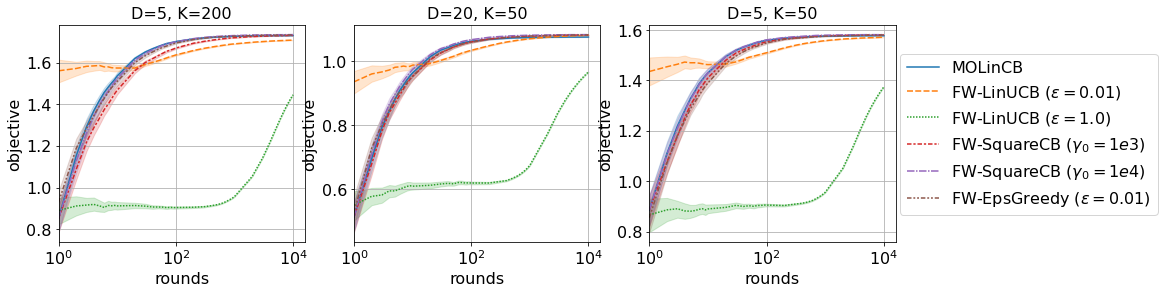}
    \caption{Multi-objective bandits: GGF value achieved on various synthetic environments.}
    \label{fig:ggf-regret}
\end{figure}

\subsection{Multi-armed \cbcr: Application to multi-objective bandits with generalized Gini function}

We provide the details and additional simulations on the task of optimizing the Generalized Gini aggregation Function (GGF) in multi-objective bandits \citep{busa2017multi,mehrotra2020bandit}. We remind that the goal is to maximize a GGF of the $D$-dimensional rewards, which is a nonsmooth concave aggregation function parameterized by nonincreasing weights $w_1 = 1 \geq \ldots \geq w_D \geq 0$: $f(s) = \sumdim w_i s^\uparrow_i, $ where $(s^\uparrow_i)_{i=1}^D$ denotes the values of $s$ sorted in increasing order.

\citet{mehrotra2020bandit} study the contextual bandit setting, motivated by music recommendation on Spotify with multiple metrics. They consider atomic actions $a_t\in \aS$ (i.e., $\aS$ is the canonical basis of $\Re^K$) and a linear reward model: $\forall i \in \intint{D},\,\exists \theta_i \in  \Re^{d}, \, \expect_t [r_{t,i}] = \theta_i^\intercal x_t^\intercal a_t.$ These are the same assumptions as described in Table \ref{tab:multi-arm} of Section \ref{sec:multiarm_fw} and in Appendix \ref{sec:multiarm_linucb}.

GGFs are concave functions, but they are nondifferentiable. Therefore, we use the variant of our FW approach for nonsmooth $f$ (see Section \ref{sec:main_body_moreau}), where we smooth the objective via Moreau-Yosida regularization with parameter $\beta_0=0.01,$ using the algorithm of \citep{do2022sigir} to compute the gradients of the smooth approximations $f_t.$

\paragraph{Algorithms} In the main body, we evaluated two instantiations of our FW meta-algorithm, namely FW-LinUCB and FW-SquareCB. The level of exploration in FW-LinUCB is controlled by a variable $\epsilon$. More precisely, the exploration bonus is multiplied by $\sqrt{\epsilon}$, i.e. the UCBs are calculated as:
$\htheta_{t-1,i} ^ \intercal x_{t,k} + \sqrt{\epsilon} \, \alpha_t(\delta)\norm{x_{t,k}}_{V_{t-1}^{-1}}.$ In FW-Square-CB, as detailed in Appendix \ref{sec:algorithms_sqcb}, the exploration is controlled by a sequence $(\gamma_t)_{t\geq 1},$ growing as $\sqrt{t}$ (higher $\gamma_t$ means less exploration). We set it to $\gamma_t = \gamma_0 \sqrt{t}$ with $\gamma_0 \in \{10^3,10^4\}$.

In addition to the two algorithms presented in Section \ref{sec:xps}, to show the flexibility of our FW approach, we also implement FW-$\epsilon$-greedy, another instantiation of our FW algorithm which uses $\epsilon-$greedy as scalar bandit algorithm. 

We compare our algorithms with MOLinCB of \citet{mehrotra2020bandit}, an online gradient descent-style algorithm which was designed for this task, but was introduced without theoretical guarantees, as an extension of the MO-OGDE algorithm of \citet{busa2017multi} who study the non-contextual problem. We use the default parameters of MOLinCB recommended by \cite{mehrotra2020bandit}.

\paragraph{Environments} Since the Spotify dataset of \citet{mehrotra2020bandit} is not publicly available, we only focus on their simulated, controlled environments. We reproduced these environments exactly as described in Appendix A of their paper. For completeness, we restate the protocol here: we draw a hidden parameter $\theta \in \Re^{D\times d}$ uniformly at random in $[0,1],$ and each element of a context-arm vector $x_{t,k}$ is drawn from $\randn(\frac{1}{d}, \frac{1}{d^2}).$ Given a context $x_t$ and arm $k_t,$ the $D$-dimensional reward is generated as a draw from $\randn(\theta x_{t,k_t}, 0.01 (\theta x_{t,k_t})^2).$ We choose $d=10$ in the data generation and $\lambda=0.1$ in the Ridge regression, as recommended by \citet{mehrotra2018towards}.

In Section \ref{sec:xps} of the main body, we varied the number of objectives $D \in \{5,20\}$ and set $K=50$. Here we also experiment with $K=200$ to see the effect of varying the number of arms. The GGF weights are set to $w_j = \frac{1}{2^{j-1}}.$ Each simulation is repeated with 100 different random seeds. 

\paragraph{Results} The extended results, with more arms and algorithms, are depicted in Figure \ref{fig:ggf-regret}. We observe that FW-$\epsilon$-greedy achieves similar performance to the baseline MOLinCB, with small exploration $\epsilon=0.01$. FW-SquareCB also achieves comparable performance to MOLinCB when there is little exploration, i.e. with $\gamma_0 = 10^4$ rather than $10^3.$ This is coherent with our observation in Section \ref{sec:xps} that FW-LinUCB obtains better performance when there is very little exploration on this environment from \citet{mehrotra2018towards}. Note that there is no forced exploration in their algorithm MOLinCB. Overall, we obtain qualitatively similar results when $K=200$ compared to $K=50.$

\section{Proofs of Section \protect\ref{sec:framework}}

In this section we give the missing details of Section \ref{sec:framework}. For completeness, we remind the definitions of Lipschitz-continuity and super-gradients in the next subsection. Then, we start in Section \ref{sec:studyS} the analysis of the structure of the set $\S$ defined in Section \ref{sec:frank_wolfe} of the main paper, and more precisely its support function $\g\mapsto\maxs\g^\intercal\s$. This contains new lemmas that are fundamental for the analysis throughout the paper, in particular in the proof of Lemma \ref{lem:stationary_policies_are_optimal}, which is given in Section \ref{sec:proof_lemma_stationary_policies_are_optimal}.

\label{sec:proofs_framework}
\subsection{Brief reminder on Lipschitz functions and super-gradients}
\label{sec:notation}

We  remind the following definitions. Let $\D$ and $\D'$ be two integers, and $\f$ a function $\f:\Re^\D\rightarrow\Re^{\D'}$. We have: 
\begin{itemize}
    \item \emph{(Lipschitz continuity)}
$f$ is $\lip$-Lipschitz continuous with respect to $\norm{.}_2$ on a set $\Z{}\subseteq\Re^\D$ if
\begin{align}
    \forall \z, \z'\in\Z{}, \norm{\f(\z)-\f(\z')}_2 \leq \lip\norm{\z-\z'}_2.
\end{align}
\item \emph{(super-gradients)} If $\f:\Re^\D\rightarrow \Re\cup\{\pm\infty\}$, a super-gradient of $\f$ at a point $\z\in\Re^\D$ where $\f(\z)\in\Re$ is a vector $\g$ such that for all $\z'\in\Re^\D, \f(\z') \leq \f(\z) + \dotp{\g}{\z'-\z}$.
\end{itemize}

We remind the following results when $\f:\Re^\D\rightarrow \Re\cup\{\pm\infty\}$ is a proper closed concave function:
\begin{itemize}
    \item $\f$ has non-empty set of super-gradients at every point $\z$ where $\f(\z)\in\Re$,
    \item if $f$ is $\lip$-Lipschitz on $\Z{}\subseteq\Re^\D$ and $\Z{}$ is open, then for every $\z\in\Z{}$ and every super-gradient $\g$ of $\f$ at $\z$, we have $\norm{\g}_2\leq \lip$.
\end{itemize}
The assumption of Lipschitz-continuity of $\f$ on a set $\Z{}$ implicitly implies the assumption that $\Z{}$ is in the domain of $\f$.

\begin{remark}[About our Lipschitzness assumptions]
We use Lipschitzness over an open set containing $\K$ in Assumption \ref{asmp:basic} because we use boundedness of the super-gradients of $\f$. In fact, a more precise alternative would be to require that super-gradients are bounded uniformly on $\K$ by $\lip$. We choose the Lipschitz formulation because we believe it is more natural. 

As a side note, in assumption \ref{asmp:smoothness}, we use Lipschitzness of the gradients on $\K$, not on an open set containing $\K$. This is because smoothness in used in the ascent lemma (see Eq.~\ref{eq:lemma:main:notation}), which uses Inequality 4.3 of \citet{bottou2018optimization}, the proof of which directly uses Lipschitz-continuity of the gradients on $\K$ \citep[Appendix B]{bottou2018optimization}, without relying on an argument of boundedness of gradients.
\end{remark}

\subsection{Preliminaries: the structure of the set $\S$}\label{sec:studyS}
We denote by $\xseq{\T}=(\x\one, \ldots, \x\t)$ a sequence of contexts of length $\T$. Let
\begin{align}
    \S&=\bigg\{\expectx{\mu(\x)\pol(\x)} \bigg | \pol:\xS\rightarrow\convA\bigg\}\\
    \forall \xseq{\T}\in\xS^\T, \Sseq{\T} &= \bigg\{\meanT{\mu(\x\tp)\pol(\x\tp)} \bigg | \pol:\xS\rightarrow\convA\bigg\}
\end{align}
It is straightforward to show that $ \Sseq{\T} = \bigg\{\meanT \mu(\x\tp)\pi\tp \bigg | (\pi\one, \ldots,\pi\t)\in\convA^\T\bigg\}$. These sets are particularly relevant because of the following equality, for every $\f:\Re^\D\rightarrow\Re\cup\{\pm\infty\}$:
\begin{align}\label{eq:optim_policies_optim_over_S}
    &&\fopt& = 
    \sup_{\pol:\xS\rightarrow\convA}\f\Big(\expectx{\mu(\x)\pol(\x)}\Big)=\sup_{s\in\S} \f(\s)  \\
    \text{and}&& \foptcontext& = \sup_{(\pi\tp)_{\tau\in\intint{\T}}\in\convA^\T} \f\Big(\meanT \mu(\x\tp)\pi\tp\Big)=\sup_{\s\in\Sseq{\T}}\f(\s).
\end{align}
We study in this section the structure of these sets. We provide here the part of Assumption \ref{asmp:basic} that is relevant to this section:
\begin{manualassumption}{$\tilde{\text{\ref{asmp:basic}}}$}
\label{asmp:basic_without_f}
$\aS$ is a compact subset of $\Re^\A$ and there is a compact convex set $\K\subseteq\Re^\D$ such that $\forall (\x, \a)\in\xS\times\aS, \mu(\x)\a\in\K$.
\end{manualassumption}
We remind the following basic results from convex sets in Euclidian spaces that we use throughout the paper without reference:
\begin{lemma}\label{lem:max_over_compact}
Let $\aS$ be a compact subset of $\Re^\A$. We have:
\begin{itemize}
    \item \citep[Corollary 2.30]{rockafellar2009variational} The convex hull $\convA$ of $\a$, denoted by $\convA$, is compact.
    \item For every $\displaystyle\w\in\Re^\A, \maxa\w^\intercal\a = \max_{\a\in\convA}\w^\intercal\a$.
\end{itemize}
\end{lemma}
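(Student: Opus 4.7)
The first bullet is cited directly from Rockafellar--Wets, so there is nothing to do there beyond appealing to the reference. The content of the lemma that actually needs a short argument is the second bullet, namely the equality of the suprema of a linear functional over $\aS$ and over its convex hull $\convA$.

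My plan is to prove the two inequalities separately. For $\maxa \w^\intercal\a \leq \max_{\a\in\convA} \w^\intercal\a$, I will simply note that $\aS\subseteq\convA$, and that both maxima are attained: $\aS$ is compact by assumption and $\convA$ is compact by the first bullet (Rockafellar--Wets, Cor.~2.30), while $\a\mapsto\w^\intercal\a$ is continuous, so the extreme value theorem guarantees attainment on both sets. The inclusion then gives the inequality.

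For the reverse inequality $\max_{\a\in\convA}\w^\intercal\a \leq \maxa \w^\intercal\a$, I will pick any $\a\in\convA$ and use the definition of the convex hull in $\Re^\A$: by Carath\'eodory's theorem (or simply by definition of $\convA$), there exist $(\a_i)_{i=1}^n \in \aS^n$ and nonnegative weights $(\lambda_i)_{i=1}^n$ summing to $1$ such that $\a=\sum_{i=1}^n \lambda_i \a_i$. By linearity of the inner product,
\begin{align*}
\w^\intercal\a = \sum_{i=1}^n \lambda_i \w^\intercal\a_i \leq \Big(\sum_{i=1}^n \lambda_i\Big)\max_{i\in\intint{n}} \w^\intercal\a_i \leq \maxa \w^\intercal\a,
\end{align*}
and taking the supremum of the left-hand side over $\a\in\convA$ finishes the argument. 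Combining the two inequalities yields the claimed equality.

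There is no substantive obstacle here: the statement is purely a consequence of the definition of the convex hull together with linearity of $\a\mapsto \w^\intercal\a$ and compactness (only needed to guarantee that the suprema are maxima, which is how the lemma is phrased). The only mild care point is to make sure both maxima are well-defined before writing them, which is why I explicitly invoke compactness of $\convA$ via the first bullet.
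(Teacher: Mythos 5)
Your argument is correct: the paper itself states this lemma without proof (it is listed as a "basic result from convex sets used throughout the paper without reference"), and your two-inequality argument — inclusion $\aS\subseteq\convA$ plus attainment via compactness for one direction, and linearity over a convex combination for the other — is exactly the standard justification one would supply. No gaps.
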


The following lemma allows us to use maxima instead of suprema over $\S$ and $\Sseq{\T}$. The proof of this lemma is deferred to Appendix \ref{sec:s_compact}.
\begin{restatable}{lemma}{siscompact}\label{lem:s_compact}
Under Assumption \ref{asmp:basic_without_f}, $\S$ is compact and $\forall\T\in\Nat_*$, $\forall \xseq{\T}\in\xS^\T, \Sseq{\T}$ is compact.
\end{restatable}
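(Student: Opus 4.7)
The claim naturally splits into two parts, which I would handle separately. For $\Sseq{\T}$, I reduce to a finite-dimensional argument: with the contexts $\xseq{\T}$ fixed, $\Sseq{\T}$ is the image of $\convA^\T$ under the linear map $(\pi\one, \ldots, \pi\t) \mapsto \frac{1}{\T}\sumT \mu(\x\tp)\pi\tp$. By Lemma~\ref{lem:max_over_compact}, $\convA$ is compact, hence so is the product $\convA^\T$; continuous (linear) images of compact sets are compact, giving the result.

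For $\S$, I would first dispatch boundedness and convexity. For any policy $\pol:\xS\to\convA$ and any $\x\in\xS$, $\pol(\x)$ is a convex combination of elements of $\aS$, so $\mu(\x)\pol(\x)$ is a convex combination of elements of $\mu(\x)\aS\subseteq\K$; by convexity of $\K$, $\mu(\x)\pol(\x)\in\K$, and taking expectation preserves $\K$, so $\S\subseteq\K$ is bounded. Convexity of $\S$ is inherited from that of $\convA$: any convex combination of two policies is again a policy into $\convA$, and its expected reward is the corresponding convex combination of the two expected rewards.

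The main obstacle is closedness, since the policy space is infinite-dimensional and one must extract a limiting object. My plan is to work with the induced reward functions rather than the policies themselves. Given $s_n \to s$ in $\S$ with $s_n = \expectx{\mu(\x)\pol_n(\x)}$, set $r_n(\x) := \mu(\x)\pol_n(\x)\in\K$. Since $\K$ is bounded, $(r_n)$ is bounded in $L^2(\probxS;\Re^\D)$, so Banach--Alaoglu in this reflexive space yields a weakly convergent subsequence $r_{n_k}\rightharpoonup r^*$. The set $\mathcal{R}:=\{r\in L^2 : r(\x)\in\mu(\x)\convA \text{ a.e.}\}$ is fiberwise convex hence convex, and norm-closed (pointwise a.e. limits along subsequences preserve membership in the closed set $\mu(\x)\convA$), so by Mazur's lemma $\mathcal{R}$ is weakly closed; thus $r^*\in\mathcal{R}$. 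Weak convergence tested against the constant function $\mathbf{1}\in L^2(\probxS;\Re^\D)$ gives $s_{n_k}=\int r_{n_k}\,d\probxS \to \int r^*\,d\probxS$, so $s=\int r^*\,d\probxS$. A measurable-selection theorem (e.g.\ Kuratowski--Ryll-Nardzewski, applicable since $\x\mapsto\mu(\x)\convA$ has closed convex values) then yields a measurable $\pol^*:\xS\to\convA$ with $\mu(\x)\pol^*(\x)=r^*(\x)$ a.e., so $s=\expectx{\mu(\x)\pol^*(\x)}\in\S$. Combined with boundedness, this gives compactness in $\Re^\D$.
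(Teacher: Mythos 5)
Your proof of the compactness of $\Sseq{\T}$ is the same as the paper's (continuous image of the compact set $\convA^\T$). For $\S$, however, you take a genuinely different route. The paper identifies $\S$ with the Aumann integral $\int_{\xS} G\,\mathrm{d}P$ of the correspondence $G(\x)=\mu(\x)\convA$ and invokes Diestel's theorem to get weak compactness of the set of integrable selections in $L^1$, then pushes this forward through the weakly continuous map $g\mapsto\int g\,\mathrm{d}P$. You instead argue directly in $\Re^\D$: boundedness from $\S\subseteq\K$, and closedness via a sequential argument in the Hilbert space $L^2(\probxS;\Re^\D)$ --- extract a weakly convergent subsequence of the reward functions $r_n(\x)=\mu(\x)\pol_n(\x)$, use Mazur's lemma to show the fiberwise-constrained set $\mathcal{R}$ is weakly closed, pass the integral to the limit, and recover a policy by measurable selection. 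Both arguments are correct. Yours is more elementary and self-contained (it only uses weak sequential compactness of bounded sets in a Hilbert space plus Mazur), and it makes explicit the measurable-selection step that the paper's identification of $\S$ with the Aumann integral actually also requires but leaves implicit; it additionally yields convexity of $\S$ as a by-product. The paper's route is shorter once Diestel's theorem is taken as given, since weak compactness of the selection set is delivered in one stroke. Two small points to tighten in your write-up: to conclude $s=\int r^*\,\mathrm{d}\probxS$ you should test weak convergence against each constant coordinate function $e_i$ rather than the single function $\mathbf{1}$; and the selection theorem should be applied to the correspondence $\x\mapsto\{v\in\convA:\mu(\x)v=r^*(\x)\}$ (nonempty closed values for a.e.\ $\x$), whose measurability rests on the implicit measurability of $\mu$ that the whole framework assumes.
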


The next result regarding the support functions of $\S$ and $\Sseq{\T}$ is the key to our approach:
\begin{lemma} 
\label{lem:convergence_St_to_S}
Let $\w\in\Re^\D$ and $\T\in\Nat_*$. Under Assumption \ref{asmp:basic_without_f}, we have
\begin{align}
    \expect_{\xseq{\T}\sim\probxS^\T}\Big[\max_{s\in\Sseq{\T}} \w^\intercal\s\Big] = \max_{s\in\S} \w^\intercal\s.
\end{align}
Moreover, for every $\delta\in(0,1]$, we have with probability at least $1-\delta$:
\begin{align}
    \max_{s\in\Sseq{\T}} \w^\intercal\s \leq \max_{s\in\S} \w^\intercal\s + \ \norm{w}_2\boundK\sqrt{\frac{2\ln\delta^{-1}}{\T}}.
\end{align}
The inequality $\displaystyle\max_{s\in\S}\w^\intercal\s \leq \max_{s\in\Sseq{\T}}  \w^\intercal\s + \ \norm{w}_2\boundK\sqrt{\frac{2\ln\delta^{-1}}{\T}}$ also holds with probability $1-\delta$.
\end{lemma}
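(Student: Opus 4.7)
The plan is to express both support functions as the mean and the empirical average of a common i.i.d.\ bounded random variable, and then invoke Hoeffding's inequality. Concretely, I would first apply Lemma~\ref{lem:max_over_compact} to replace $\convA$ by $\aS$ in each maximization, and use the fact that the outer maximizations decouple across contexts, in order to derive the two identities
\begin{align*}
\max_{\s \in \Sseq{\T}} \w^\intercal \s &= \frac{1}{\T} \sum_{\tau=1}^{\T} \maxa \w^\intercal \mu(\x\tp) \a, \\
\max_{\s \in \S} \w^\intercal \s &= \expect_{\x \sim \probxS}\big[\maxa \w^\intercal \mu(\x) \a\big].
\end{align*}
The first identity is immediate: for a fixed realization $\xseq{\T}$ one may choose the argmax action independently at each $\tau$ with no measurability constraint. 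For the second, the direction ``$\leq$'' comes from pointwise bounding $\w^\intercal \mu(\x)\pol(\x) \leq \maxa \w^\intercal \mu(\x)\a$ and integrating, while ``$\geq$'' is obtained by plugging in a measurable selector $\pol^*(\x) \in \argmaxa \w^\intercal \mu(\x)\a$, whose existence follows from compactness of $\aS$ and continuity of $\a \mapsto \w^\intercal \mu(\x)\a$.

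Setting $Y_\tau := \maxa \w^\intercal \mu(\x\tp) \a$, the sequence $(Y_\tau)_{\tau \leq \T}$ is i.i.d.\ by the i.i.d.\ assumption on $\x\tp$, with mean $\expect[Y_\tau] = \max_{\s \in \S} \w^\intercal \s$ and with $\max_{\s \in \Sseq{\T}} \w^\intercal \s = \frac{1}{\T}\sum_{\tau=1}^{\T} Y_\tau$. Taking expectation over $\xseq{\T}$ in the latter and invoking linearity yields the equality-in-expectation claim of the lemma.

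For the deviation bounds, I observe that $Y_\tau$ takes values in $\{\w^\intercal \z : \z \in \K\}$, which by Cauchy--Schwarz and the definition of $\boundK$ is contained in an interval of length at most $\norm{\w}_2 \boundK$. Applying Hoeffding's inequality in each direction to the i.i.d.\ bounded sequence $(Y_\tau)$ yields the two one-sided tail bounds; the constant $\sqrt{2\ln(\delta^{-1})/\T}$ arises from using the loose symmetric range $[-\norm{\w}_2\boundK, \norm{\w}_2\boundK]$ for the centred variables $Y_\tau - \expect[Y_\tau]$. The only genuinely non-routine ingredient is the measurable-selection step behind the second identity; once that is granted the entire argument reduces to a textbook application of Hoeffding.
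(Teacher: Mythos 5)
Your proposal is correct and follows essentially the same route as the paper's proof: both reduce the two support functions to the expectation and the empirical average of the pointwise maximum $\maxa \w^\intercal\mu(\x)\a$ (via a measurable selector and Lemma~\ref{lem:max_over_compact}), and then apply Hoeffding's inequality in each direction to this i.i.d.\ sequence bounded through the diameter $\boundK$. Your remark that the factor $\sqrt{2}$ comes from the symmetric range of the centred variables matches the constant in the paper.
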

\begin{proof}
The first result is a direct consequence of the maximization of linear functions over the simplex. Using \eqref{eq:optim_policies_optim_over_S} with $\f(\s) = \w^\intercal \s$ and the linearity of expectations, we have
\begin{align}
    \max_{s\in\S} \w^\intercal\s = \max_{\pol:\xS\rightarrow\convA} \expectx{\w^\intercal\mu(\x)\pol(\x)}.
\end{align}

The optimal policy given $\w$, denoted by $\pol^\w$ is thus obtained by optimizing for every $\x$ the dot product between $\w^\intercal\mu(\x)\in\Re^\A$ and $\pol(\x)\in\convA\subseteq\Re^\A$. Since, for each $\x$, it is a linear optimization, we can find an optimizer in $\aS$ (see Lemma \ref{lem:max_over_compact}), which gives:
\begin{align}
    \max_{s\in\S} \w^\intercal\s = \expectx{\underbrace{\w^\intercal\mu(\x)\pol^\w(\x)}_{\eta^\w(\x)}} &&\text{where } \pol^\w(\x)\in\argmaxa \w^\intercal\mu(\x)\a,
\end{align}
where in the equation above we mean that $\pol^\w$ is a measurable selection of $\x\mapsto\argmaxa \w^\intercal\mu(\x)\a$.
For the same reason, we have
$\displaystyle
    \max_{s\in\Sseq{\T}} \w^\intercal\s = \meanT \eta^\w(\x\tp).
$
We obtain
\begin{align}
     \expect_{\xseq{\T}\sim\probxS^\T}\Big[\max_{s\in\Sseq{\T}} \w^\intercal\s\Big] = \expect_{\xseq{\T}\sim\probxS^\T} \Big[\meanT \eta^\w(\x\tp)\Big] = \expectx{\eta^\w(\x)} = \max_{s\in\S} \w^\intercal\s.
\end{align}
which is the first equality.

For the high-probability inequality, let $\azumamarting\tp=\eta^\w(\x\tp)-\expectx{\eta^\w(\x)}$. Since the $(\x\tp)_{\tp\in\intint{\T}}$ are independent and identically distributed (i.i.d.), the variables $(\azumamarting\tp)_{\tp\in\intint{\T}}$ are also i.i.d., and we have
\begin{align}
    |\azumamarting\tp| \leq \w^\intercal\Big(\underbrace{\mu(\x\tp)\pol^w(\x\tp)}_{\in\K}-\underbrace{\expectx{\mu(\x)\pol^\w(\x)}}_{\in\K}\Big)\leq \norm{\w}_2\boundK && \text{and } \expect\big[\azumamarting\tp\big]=0.
\end{align} 
Given $\delta\in(0,1]$, Hoeffding's inequality applied to $\meanT\azumamarting\tp$ gives, with probability at least $1-\delta$:
\begin{align}
    \max_{s\in\Sseq{\T}} \w^\intercal\s - \max_{s\in\S} \w^\intercal\s = \meanT \azumamarting\tp \leq \ \norm{w}_2\boundK\sqrt{\frac{2\ln\delta^{-1}}{\T}}.
\end{align}
The reverse equation is obtained by applying Hoeffding's inequality to $-\meanT\azumamarting\tp$.
\end{proof}

\subsection{Proof of Lemma \ref{lem:stationary_policies_are_optimal}}
\label{sec:proof_lemma_stationary_policies_are_optimal}

\begin{restatable}{lemma}{stationarypoliciesoptimal}
\label{lem:stationary_policies_are_optimal}
Under Assumption \ref{asmp:basic}, $\forall\T\in\Nat_*, \forall \delta\in(0,1]$, we have, with probability at least $1-\delta$:
\begin{align}
    \Big|\foptcontext - \fopt\Big|\leq 
    \lip\boundK\sqrt{\frac{2\ln\frac{4e^2}{\delta}}{\T}}
    &&\text{where }\foptcontext = \max_{(\pi\one,\ldots,\pi\t)\in\convA^\T} \f\Big(\meanT \mu(\x\tp)\pi\tp\Big)
    \nonumber
\end{align}
We also have, with probability $1-\delta$ over contexts, actions, and rewards:
\begin{align}
    \big|\f(\s\t)- \f(\sh\t)\big| \leq 
    \lip\boundK\sqrt{\frac{2\ln(2e^2\delta^{-1})}{\T}}
    \quad\text{where } \s\t=\meanT \mu(\x\tp)\a\tp.
    \nonumber
\end{align}
\end{restatable}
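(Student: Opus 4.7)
My plan is to split the first inequality into its two directions, control each with probability at least $1-\dazuma/2$, and close by a union bound; the second inequality follows by the same Lipschitz argument together with a single vector-concentration step applied to the martingale $\sh\t-\s\t$.

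\emph{Upper direction $\foptcontext\leq\fopt+\varepsilon$.} Compactness of $\S$ (Lemma~\ref{lem:s_compact}) ensures the supremum defining $\fopt$ is attained at some $\sopt\in\S$; let $\g$ be a super-gradient of $\f$ at $\sopt$, so that $\norm{\g}_2\leq\lip$ by Assumption~\ref{asmp:basic}. I would first note that the optimality of $\sopt$ on $\S$ forces $\g^\intercal\sopt=\maxs\g^\intercal\s$: for any $\s\in\S$, combining $\f(\s)\leq\f(\sopt)$ with the super-gradient inequality yields $\g^\intercal(\sopt-\s)\geq 0$. Applying the super-gradient inequality at any $\s^+\in\argmax_{\s\in\Sseq{T}}\f(\s)$ then gives $\foptcontext-\fopt\leq\max_{\s\in\Sseq{T}}\g^\intercal\s-\maxs\g^\intercal\s$, and since $\g$ is \emph{deterministic}, Lemma~\ref{lem:convergence_St_to_S} applied with $w=\g$ bounds the right-hand side by $\lip\boundK\sqrt{2\ln(2/\dazuma)/\T}$ with probability $\geq 1-\dazuma/2$.

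\emph{Lower direction $\foptcontext\geq\fopt-\varepsilon$.} Let $\pol^*:\xS\to\convA$ attain $\fopt$ and set $\s_T':=\meanT\mu(\x\tp)\pol^*(\x\tp)\in\Sseq{T}$, which is the empirical mean of i.i.d.\ $\K$-valued random vectors with expectation $\sopt$. A dimension-free Hoeffding inequality for vectors in a Hilbert space (Hayes' inequality, which furnishes exactly the $e^2$ factor appearing in the statement) gives $\norm{\s_T'-\sopt}_2\leq\boundK\sqrt{2\ln(4e^2/\dazuma)/\T}$ with probability $\geq 1-\dazuma/2$; combining with $\foptcontext\geq\f(\s_T')$ and Lipschitzness of $\f$ yields the matching lower bound, and a union bound with the upper direction closes Part~1. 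For Part~2, Lipschitzness gives $\abs{\f(\s\t)-\f(\sh\t)}\leq\lip\norm{\s\t-\sh\t}_2$, and $\sh\t-\s\t=\meanT(\r\tp-\mu(\x\tp)\a\tp)$ is a martingale with respect to the history filtration whose increments lie in $\K-\K$, hence have $\ell_2$-norm at most $\boundK$; another application of the same vector Hoeffding inequality delivers $\norm{\s\t-\sh\t}_2\leq\boundK\sqrt{2\ln(2e^2/\dazuma)/\T}$ with probability $\geq 1-\dazuma$, from which the Part~2 bound is immediate.

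The main obstacle is the lower direction of Part~1: unlike the upper direction, no \emph{deterministic} super-gradient is available to collapse the concentration problem onto the scalar support-function control of Lemma~\ref{lem:convergence_St_to_S}, so a genuinely vector-valued concentration inequality is needed. A naive coordinate-wise Hoeffding combined with a union bound would introduce a spurious $\sqrt{\ln\D}$ factor, whereas the target bound is dimension-free; this is precisely what forces the use of a Hilbert-space Hoeffding inequality of Hayes' type, and it is the source of the $e^2$ constant in the final statement.
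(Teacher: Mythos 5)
Your proof follows the paper's argument essentially step for step: the same split of the first inequality into its two directions closed by a union bound, the same supergradient-at-the-maximizer argument combined with Lemma~\ref{lem:convergence_St_to_S} for the direction $\foptcontext\le\fopt+\varepsilon$, the same i.i.d.\ construction $\meanT\mu(\x\tp)\pol^*(\x\tp)\in\Sseq{\T}$ with Hayes' dimension-free vector concentration for the reverse direction, and the same martingale argument (Lemma~\ref{lem:azuma_norm} in the paper) for the second inequality. Your diagnosis that a coordinate-wise Hoeffding bound would cost a spurious $\sqrt{\ln\D}$ factor, which is what forces the Hilbert-space inequality and produces the $2e^2$ prefactor, is exactly right.

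One sub-step is justified incorrectly, although the conclusion survives. You claim that for a supergradient $\g$ of $\f$ at $\sopt$, ``combining $\f(\s)\le\f(\sopt)$ with the super-gradient inequality yields $\g^\intercal(\sopt-\s)\ge0$.'' This implication is false: the supergradient inequality $\f(\s)\le\f(\sopt)+\dotp{\g}{\s-\sopt}$ and the bound $\f(\s)\le\f(\sopt)$ are both \emph{upper} bounds on $\f(\s)$ and together impose no sign constraint on $\dotp{\g}{\s-\sopt}$. Indeed the claim fails for an arbitrary supergradient: take $\f(z)=-|z|$, $\S=[0,1]$, $\sopt=0$ and $\g=1$, which is a supergradient of $\f$ at $0$, yet $\max_{\s\in\S}\dotp{\g}{\s-\sopt}=1>0$. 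What is true --- and what the paper's own proof also uses, silently, when it annotates $\max_{\s\in\S}\dotp{\g^*}{\s-\s^*}\le0$ ``by definition of $\s^*$'' --- is that the first-order optimality condition for maximizing a concave function over the convex set $\S$ guarantees the \emph{existence} of a supergradient $\g^*$ at $\sopt$ satisfying $\max_{\s\in\S}\dotp{\g^*}{\s-\sopt}\le0$ (this uses that $\f$ is finite and Lipschitz on an open set containing $\K\supseteq\S$). Running your argument with that particular $\g^*$, which still satisfies $\norm{\g^*}_2\le\lip$, repairs the step; so the fix is a one-line change in how $\g$ is chosen, not a change of approach.
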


The first statement shows that the performance of the optimal non-stationary policy over $T$ steps converges to $\fopt$ at a rate $O(1/\sqrt{\T})$. Furthermore, measuring the algorithm's performance by expected rewards instead of observed rewards would also amount to a difference of order $O(1/\sqrt{\T})$. This choice would lead to what is commonly referred to as a \textit{pseudo}-regret.  Since the worst-case regret of \bcr is $\Omega(1/\sqrt{\T})$ \citep{bubeck2012regret}, the previous lemma shows that the alternative definitions of regret would not substantially change our results. 

\begin{proof}
We start with the first inequality.

We first prove that w.p. greater than $1-\delta/2$, we have $\foptcontext\leq \fopt+\lip\boundK\sqrt{\frac{2\ln\frac{2}{\delta}}{\T}}$.

Since $\f$ is continuous on $\K$ and since $\S\subseteq\K$ and $\S$ is compact by Lemma \ref{lem:s_compact}, there is $\s^*\in\S$ such that $\fopt=\f(\s^*)$. Similarly, since $\Sseq{\T}$ is compact, there is $\s^*\t$ such that $f(\s^*\t) = \max_{\s\in\Sseq{\T}}\f(\s)$. Using \eqref{eq:optim_policies_optim_over_S}, we need to prove that with probability at least $1-\delta/2$, we have $f(\s^*\t) \leq \f(\s^*)+\lip\boundK\sqrt{\frac{2\ln\frac{2}{\delta}}{\T}}$.

Using the concavity of $\f$, let $\g^*$ be a supergradient of $\f$ at $\s^*$. We have
\begin{align}
&&\f(\s^*\t) &\leq \f(\s^*) + \dotp{\g^*}{\s^*\t-\s^*}\\
&&& \leq \f(\s^*) + \max_{\s\in\Sseq{\T}} \dotp{\g^*}{\s-\s^*}\\
\implies \text{w.p. }\geq 1-\delta/2:
&& 
\f(\s^*\t) 
& \leq \f(\s^*) + \underbrace{\max_{\s\in\S} \dotp{\g^*}{ \s-\s^*}}_{\leq 0 \text{ by def. of }\s^*} + \norm{\g^*}_2\boundK\sqrt{\frac{2\ln\frac{2}{\delta}}{\T}} \tag*{\text{(by Lemma \ref{lem:convergence_St_to_S})}}\\
&&& \leq \f(\s^*) + \lip\boundK\sqrt{\frac{2\ln\frac{2}{\delta}}{\T}}\tag*{\text{(by the Lipschitz assumption)}}.
\end{align}

We now prove $\fopt \leq \foptcontext+\lip\boundK\sqrt{\frac{2\ln\frac{4e^2}{\delta}}{\T}}$ with probability at least $1-\delta/2$.

Let $\pol^*\in\argmaxpol \f\Big(\expectx{\mu(\x)\pol(\x)}\Big)$ (an optimal policy exists by Lemma \ref{lem:s_compact}). Denote by $(\azumamarting\tp=\mu(\x\tp)\pol^*(\x\tp))_{\tau\in\intint{\T}}$ a sequence of independent and identically distributed random variables obtained by sampling $\x\tp\sim\probxS$. 

We have $|\azumamarting\tp-\expect{\azumamarting\tp}|\leq \boundK$ and $\expect{\azumamarting\tp}=\sopt$. By the Lipschitz property of $\f$, we obtain
\begin{align}
    \f(\sopt) \leq \f(\meanT \azumamarting\tp) + \lip\norm[\Big]{\meanT \azumamarting\tp-\sopt}_2.
\end{align}
Using the version of Azuma's inequality for vector-valued martingale with bounded increments of \citet[Theorem 1.8]{hayes2005large} to obtain, for every $\epsilon>0$:
\begin{align}
    \prob\Big(\frac{1}{\boundK}\norm[\Big]{\meanT\azumamarting\tp-\sopt}_2 \geq \epsilon\Big) \leq 2e^2e^{-\T\epsilon^2/2}.
\end{align}

Setting $\frac{\delta}{2}=2e^2e^{-\T\epsilon^2/2}$ and solving for $\epsilon$ gives, with probability at least $1-\delta/2$:
\begin{align}
    \fopt &\leq \f(\meanT \azumamarting\tp)+\lip\boundK\sqrt{\frac{2\ln\frac{4e^2}{\delta}}{\T}}
    \leq \foptcontext+\lip\boundK\sqrt{\frac{2\ln\frac{4e^2}{\delta}}{\T}}.
\end{align}

For the second inequality: using $\lip$-Lipschitzness of $\f$, the inequality is a direct consequence of the lemma below, which is itself a direct consequence of \citep[Theorem 1.8]{hayes2005large}.
\end{proof}

In the following lemma and its proof, we use the two following filtrations:
\begin{itemize}
    \item $\filt=(\filt\tp)_{\tp\in\Nat_*}$ where $\filt\tp$ is the $\sigma$-algebra generated by $(\x\one, \a\one, \r\one, \ldots, \x\tpmo, \a\tpmo, \r\tpmo, \x\tp)$,
    \item $\filtact=(\filtact\t)_{\t\in\Nat_*}$ where $\filtact\t$  is the $\sigma$-algebra generated by $(\x\one, \a\one, \r\one, \ldots, \x\tpmo, \a\tpmo, \r\tpmo, \x\tp, \a\tp)$.
\end{itemize}
Our setup implies that the process $(\a\tp)_{\tp\in\Nat_*}$ is adapted to $\filt$ while $(\r\tp)_{\tp\in\Nat_*}$ is adapted to $\filtact$. 
\begin{lemma}\label{lem:azuma_norm}
Under Assumption \ref{asmp:basic}, if the actions $(\a\one, \ldots, \a\t)$ define a process adapted to $(\filt\t)_{\t\in\Nat}$, then, for every $\T\in\Nat$, for every $\delta$, with probability $1-\delta$, we have:
\begin{align}
    \norm{\s\t-\sh\t}_2 \leq \boundK\sqrt{\frac{2\ln\frac{2e^2}{\delta}}{\T}}
\end{align}
\end{lemma}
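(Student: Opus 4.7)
The plan is a direct application of the vector-valued Azuma inequality of Hayes (the same tool cited right above in the excerpt) to the martingale difference sequence formed by observed versus expected rewards. Concretely, I would set $\azumamarting\tp = \r\tp - \mu(\x\tp)\a\tp$, so that $\sh\t - \s\t = \frac{1}{\T}\sumT \azumamarting\tp$, and check that $(\azumamarting\tp)_{\tau\in\intint{\T}}$ is a martingale difference sequence with respect to the filtration $\filtact$.

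First I would verify the martingale property. Since $(\a\tp)$ is adapted to $\filt$ by assumption and $\x\tp$ is $\filt\tp$-measurable, the vector $\mu(\x\tp)\a\tp$ is $\filtact\tp$-measurable. The setting specifies $\expect[\r\tp\mid \x\tp, \a\tp] = \mu(\x\tp)\a\tp$, so $\expect[\azumamarting\tp\mid \filtact\tp] = 0$. Next I would bound $\norm{\azumamarting\tp}_2$: by the bounded rewards part of Assumption~\ref{asmp:basic}, both $\r\tp$ and $\mu(\x\tp)\a\tp$ lie in $\K$ almost surely, hence $\norm{\azumamarting\tp}_2 \leq \boundK$.

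Then I would apply \citet[Theorem 1.8]{hayes2005large} to the normalized sequence $\azumamarting\tp/\boundK$, which gives, for every $\epsilon>0$,
\begin{align*}
    \prob\Big(\norm[\big]{\tfrac{1}{\T}\sumT \azumamarting\tp}_2 \geq \epsilon \boundK\Big) \leq 2e^2 e^{-\T\epsilon^2/2}.
\end{align*}
Setting the right-hand side equal to $\delta$ and solving for $\epsilon$ yields $\epsilon = \sqrt{2\ln(2e^2/\delta)/\T}$, which gives the claimed bound on $\norm{\sh\t - \s\t}_2$ after dividing by $\T$.

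There is no real obstacle here — the only point requiring care is the choice of filtration (using $\filtact$ rather than $\filt$, since $\azumamarting\tp$ depends on $\a\tp$ through $\mu(\x\tp)\a\tp$), and verifying that the Hayes inequality's hypotheses are met by our martingale difference sequence. Everything else is a direct computation mirroring the use of the same inequality in the proof of Lemma~\ref{lem:stationary_policies_are_optimal} above.
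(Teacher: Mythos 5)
Your proposal is correct and follows essentially the same route as the paper: the paper also defines the martingale $\azumamarting\t=\sumT \big(\r\tp-\mu(\x\tp)\a\tp\big)$ adapted to $\filtact$, bounds its increments by $\boundK$, applies \citet[Theorem 1.8]{hayes2005large}, and solves for $\epsilon$. The only difference is cosmetic (you phrase it in terms of the difference sequence rather than the cumulative martingale), so there is nothing to add.
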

\begin{proof}
Let $\azumamarting\t=\sumT \r\tp-\mu(\x\tp)\a\tp$. We have $\norm{\azumamarting\t-\azumamarting\tmo}_2 \leq \boundK$, and $(\azumamarting\t)_{\t\in\Nat}$ is a martingale adapted to $(\filtact\t)_{\t\in\Nat}$ satisfying $\azumamarting\zero=0$. We can then use the version of Azuma's inequality for vector-valued martingale with bounded increments of \citet[Theorem 1.8]{hayes2005large} to obtain, for every $\epsilon>0$:
\begin{align}
    \prob\Big(\norm[\Big]{\frac{\azumamarting\t}{\boundK}}_2 \geq \epsilon\Big) \leq 2e^2e^{-\epsilon^2/(2\T)}.
\end{align}
Solving for $\epsilon$ gives the desired result.
\end{proof}
\section{The general template Frank-Wolfe algorithm}\label{sec:generalcase}
\begin{algorithm}
\caption{Generic Frank-Wolfe algorithm for \cbcr.\label{alg:general_fw}}
    \DontPrintSemicolon
     \SetKwInput{Input}{input}
    \SetKwInput{Assumption}{assmp.}
    \SetKwInput{Guarantee}{regret}

     \Input{initial point $\z\zero\in\K$, Approx. \rlo confidence parameter $\darlo$}
     \For{$\tau=1\ldots \T$}{
     Observe $\x\tp\sim\probxS$\;

    Pull $\displaystyle\a\tp\sim\prd\tp$\tcp*[r]{Explore/exploit step}

    Observe reward $\r\tp \in \K$, update temporal average of observed rewards $\sh\tp$\; 
    
    Let $\rh\tp = \pupdtp$\tcp*[r]{Generic Frank-Wolfe update}
    
    Update $\z\tp=\z\tpmo+\frac{1}{\tau}\big(\rh\tp-\z\tpmo\big)$
     }
\end{algorithm}

\paragraph{A more general framework}
The analysis of the next sections is done within a more general famework than that of the main paper, which is described in Algorithm \ref{alg:general_fw}. Similarly to the main paper, the action is drawn according to $\a\tp\sim\prd\tp$ (Line 3 of Alg. \ref{alg:general_fw}). However, we allow for a generic choice of Frank-Wolfe iterate with respect to which we compute (an extension of) the scalar regret (presented in \eqref{eq:def_scalar_regret_general} below). The \emph{update direction} is denoted by $\rh\tp$ and is chosen according to a function $\pupdtp$, a companion function from $\prd\tp$. Note that the update direction is chosen given $\hist_{\tau+1} = (\hist\tp, (\x\tp, \a\tp,\r\tp))$, the history after the actions and rewards have been taken.

\textbf{The proofs of the main paper apply to the special case of Alg. \ref{alg:general_fw} where $\forall \tau\geq 1, \rh\tp=\r\tp$.} We then have the \FW iterate $\z\tp$ in Line 6 of the algorithm satisfy $\forall \tau\geq 1, \z\tp=\sh\tp$.

The reason we study this generalization is to show how our analysis applies in cases where the \FW iterate is not the observed reward. In prior work on (non-contextual) \bcr, \citet[Algorithm 4]{agrawal2014bandits} use an upper-confidence approach and use the upper confidence on the expected reward as update direction. The generalization made by introducing $\pupdtp$ compared to the main paper allows for our analysis to encompass their approach.

We need to update Assumptions $\ref{asmp:basic}$ and $\ref{asmp:smoothness}$ to account for the fact that $\rh\tp$ is used in place of $\r\tp$.

\begin{manualassumption}{$\text{\ref{asmp:basic}}^\prime$}
\label{asmp:basic_general_case}
$\f$ is closed proper concave on $\Re^D$ and $\aS$ is a compact subset of $\Re^\A$. 
Moreover, there is a compact convex set $\K\subseteq\Re^\D$ such that
\begin{itemize}
    \item \emph{(Bounded rewards and iterates)} For all $t\in\Nat_*$, $\r\tp\in\K$ and $\rh\tp\in\K$ with probability $1$.
    \item \emph{(Local Lipschitzness)} $\f$ is $\lip$-Lipschitz continuous with respect to $\norm{.}_2$ on an open set containing $\K$.
\end{itemize}
\end{manualassumption}
\begin{manualassumption}{$\text{\ref{asmp:smoothness}}^\prime$}
\label{asmp:smoothness_general_case}
Assumption \ref{asmp:basic_general_case} holds and $\f$ has $\smooth$-Lipschitz-continuous gradients w.r.t. $\norm{.}_2$ on $\K$.
\end{manualassumption}
In Assumption \ref{asmp:basic} we added $\mu(\x\tp)\a\tp\in\K$ for clarity, but it is not necessary since $\mu(\x\tp)\a\tp\in\K$ with probability $1$ is implied by $\r\tp\in\K$ with probability $1$. The difference between Assumption \ref{asmp:basic_general_case} and Assumption \ref{asmp:basic} is to make sure that the updates $\rh\tp$, and thus the iterates $\z\tp$ belong to $\K$ and are in the domain of definition of $\f$. Notice that in the special case of $\rh\tp=\r\tp$, Assumption \ref{asmp:basic_general_case} reduces to Assumption \ref{asmp:basic} and, similarly, Assumption \ref{asmp:smoothness} reduces to Assumption \ref{asmp:smoothness_general_case}. We use the term \emph{smooth} as a synonym of Lipschitz-continuous gradients.

\paragraph{Analysis for (possibly) non-smooth objective functions}
We are going to present a single analysis that encompasses both the case where $\f$ is smooth (Assumption \ref{asmp:smoothness} of the main paper), and the case where $\f$ may not be smooth, which we briefly discussed in Section \ref{sec:main_body_moreau}. In order for our analysis to be agnostic to the type of smoothing used and to also encompass the case where $\f$ is smooth, we propose the following assumption, where $(\f\tp)_{\tau\in\Nat}$ is a sequence of smooth approximations of $\f$:
\begin{assumption}\label{asmp:smoothapprox}
Assumption \ref{asmp:basic_general_case} holds and $\exists (\beta_0,\lip,\uniformdiff,\uniformf)\in\Re_+^4$ such that $(\f\tp)_{\tau\in\Nat}$ satisfy:
\begin{enumerate}
    \item $\forall \tau\in\Nat, \f\tp:\Re^\D\rightarrow \Re\cup\{\pm\infty\}$ is proper closed concave on $\Re^\D$,
    \item $\forall \tau\in\Nat, \f\tp$ is differentiable on $\K$ with $\sup_{\z\in\K}\norm{\nabla\f\tp(\z)}_2\leq \lip$, and $\f\tp$ is  $\frac{\sqrt{\tau+1}}{\beta_0}$-smooth on $\K$,
    \item  $\forall \tau \in\Nat_*, \forall \z\in\K, |\f\tp(\z)-\f\tpmo(\z)|\leq \frac{\uniformdiff}{\tau\sqrt{\tau}}$ and
    $|\f\tp(\z)-\f(\z)|\leq \frac{\uniformf}{\sqrt{\tau}}$.
\end{enumerate}
\end{assumption}
Notice that any function $\f$ satisfying Assumption \ref{asmp:smoothness} with coefficient of smoothness $\smooth$ satisfies Assumption \ref{asmp:smoothapprox} with $\beta_0=1/\smooth$, $\uniformdiff=\uniformf=0$. Regarding non-smooth $\f$, we discuss in more details in Appendix \ref{sec:smooth_approx} specific methods to perform this smoothing, including the Moreau envelope used in Section \ref{sec:main_body_moreau}. 

The generalization of the scalar regret takes into account both the approximation functions $(\f\tp)_{\tau\in\Nat}$ and the general update $\z\tp$:
\begin{align}\label{eq:def_scalar_regret_general}
    \reghgen{\T} = \sumT \maxa \dotp{\nabla\f\tpmo(\z\tpmo)}{\mu(\x\tp)\a}-\sumT \dotp{\nabla\f\tpmo(\z\tpmo)}{\rh\tp}+\lip\T\norm{\z\t-\sh\t}_2.
\end{align}
The general regret bound then takes the following form, where we distinguish between smooth and non-smooth $\f$. Recall that $\curv{}=\smooth\boundK^2/2$.
\begin{restatable}{theorem}{convergencesmooth}\label{thm:convergence:smooth}
Under Assumptions \ref{asmp:smoothness_general_case}, using $\forall \T\in\Nat, \f\t=\f$.

For every $\T\in\Nat$, every $\z\zero\in\K$, every $\dazuma>0$, Algorithm \ref{alg:general_fw} satisfies, with probability at least $1-\dazuma$:
\begin{align}\label{eq:lemma_bound:smooth}
\regh{\T} \leq  \frac{\reghgen{\T} +\lip\boundK\sqrt{2\T\ln\frac{1}{\dazuma}}+\curv{}\ln(e \T)}{\T}
\end{align}
\end{restatable}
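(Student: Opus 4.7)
The plan is to reduce this statement to a slight generalization of the proof sketch of Theorem \ref{thm:simplified_fw} given in the main body, adapted to two features of Algorithm \ref{alg:general_fw} that differ from the simplified version: (i) the \FW iterate $\z\tp$ is updated using a generic direction $\rh\tp$, instead of the observed reward $\r\tp$, so $\z\t\neq\sh\t$ in general; and (ii) the target of the \cbcr regret is still $\f(\sh\t)$, not $\f(\z\t)$.

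First I would pay the Lipschitz price to move from $\z\t$ to $\sh\t$: since $\f$ is $\lip$-Lipschitz on an open set containing $\K$ by Assumption \ref{asmp:basic_general_case}, and $\z\t,\sh\t\in\K$,
\begin{align*}
    \T\bigl(\fopt - \f(\sh\t)\bigr) \leq \T\bigl(\fopt - \f(\z\t)\bigr) + \lip\T\norm{\z\t-\sh\t}_2.
\end{align*}
The last term is exactly the residual absorbed into the definition of $\reghgen{\T}$. Second, I would apply the standard \FW telescoping argument to the iterates $(\z\tp)_{\tau\in\intint{T}}$ produced by Line~6 of Algorithm \ref{alg:general_fw} with step size $1/\tau$. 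Under Assumption \ref{asmp:smoothness_general_case}, $\f$ is $\smooth$-smooth on $\K$, and $\rh\tp\in\K$ by Assumption \ref{asmp:basic_general_case}, so the descent lemma together with the telescoping sum (this is essentially Lemma \ref{lem:fw} from Appendix \ref{sec:proofs_fw}) yields, with $\g\tp=\nabla\f(\z\tpmo)$,
\begin{align*}
    \T\bigl(\fopt - \f(\z\t)\bigr) \leq \sumT \maxs \dotp{\g\tp}{\s-\rh\tp} + \curv{}\ln(e\T).
\end{align*}

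Third, I would introduce the scalar-bandit benchmark and split the \FW gap along the lines of equation~\eqref{eq:base_decomposition_regret}:
\begin{align*}
    \sumT \maxs \dotp{\g\tp}{\s-\rh\tp} = \sumT \Bigl(\maxs\dotp{\g\tp}{\s}-\maxa\dotp{\g\tp}{\mu(\x\tp)\a}\Bigr) + \sumT \Bigl(\maxa\dotp{\g\tp}{\mu(\x\tp)\a}-\dotp{\g\tp}{\rh\tp}\Bigr).
\end{align*}
The second sum plus the Lipschitz remainder recovered in the first step is precisely $\reghgen{\T}$ as defined in \eqref{eq:def_scalar_regret_general}.

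Finally, for the first sum I would invoke Lemma \ref{lem:azuma-new}, which is the key probabilistic tool: since $\z\tpmo\in\K$ is measurable with respect to $\hist\tp$ (it is built from $\rh_1,\dots,\rh_{\tau-1}$, themselves functions of $\hist\tp$), and $\nabla\f$ is bounded in norm by $\lip$ on $\K$ by the local Lipschitz assumption, the conditional-expectation identity of the lemma together with its Azuma-Hoeffding tail bound gives, with probability at least $1-\dazuma$,
\begin{align*}
    \sumT \Bigl(\maxs\dotp{\g\tp}{\s}-\maxa\dotp{\g\tp}{\mu(\x\tp)\a}\Bigr) \leq \lip\boundK\sqrt{2\T\ln(1/\dazuma)}.
\end{align*}
Combining the four bounds and dividing by $\T$ produces exactly \eqref{eq:lemma_bound:smooth}. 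I expect the only subtlety to be checking that Lemma \ref{lem:azuma-new} applies unchanged when $\z\tp$ is a generic \FW iterate rather than $\sh\tp$; since that lemma is stated for any history-measurable $\z\tp\in\K$, this should go through directly, making this extension essentially mechanical once the two adjustments (Lipschitz correction and replacing $\r\tp$ by $\rh\tp$ in the telescoping step) are in place.
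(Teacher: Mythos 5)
Your proposal is correct and follows essentially the same route as the paper: the paper proves this as a special case of its general Lemma \ref{lem:fw}, whose proof is exactly your combination of the FW ascent/telescoping argument on the iterates $\z\tp$ with directions $\rh\tp$ (yielding $\curv{}\ln(e\T)$), the decomposition of the gap into the Azuma term handled by Lemma \ref{lem:azuma} and the term absorbed into $\reghgen{\T}$, and the Lipschitz correction $\lip\T\norm{\z\t-\sh\t}_2$ built into the definition of $\reghgen{\T}$. Your closing observation is also the one the paper relies on: Lemma \ref{lem:azuma} is stated for any history-measurable $\z\tp\in\K$, so it applies unchanged to the generic iterate.
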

\begin{restatable}{theorem}{convergencenonsmooth}
\label{thm:convergence:nonsmooth} 
Under Assumptions \ref{asmp:smoothapprox}, 
for every $\z\zero\in\K$, every $\T\geq 1$ and every $\dazuma>0$, Algorithm \ref{alg:general_fw} satisfies, with probability at least $1-\dazuma$:
\begin{align}
\regh{\T} \leq  \frac{\reghgen{\T}}{\T}+\frac{\frac{\boundK^2}{\beta\zero}+4\uniformdiff+2\uniformf+\lip\boundK\sqrt{2\ln\frac{1}{\dazuma}}}{\sqrt{\T}}
\end{align}
\end{restatable}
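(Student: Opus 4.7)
The plan is to run a Frank--Wolfe style analysis on the sequence of smooth surrogates $(\f\tp)_{\tau\in\Nat}$ rather than directly on $\f$. Fix a maximizer $\sopt\in\argmaxs\f(\s)$ (which exists since $\S$ is compact by Lemma~\ref{lem:s_compact}) and define the time-varying suboptimality $h^\tau = \f\tp(\sopt) - \f\tp(\z\tp)$, together with the shorthands $\g\tp = \nabla\f\tpmo(\z\tpmo)$, $w_\tau = \maxs\dotp{\g\tp}{\s}$, $u_\tau = \maxa\dotp{\g\tp}{\mu(\x\tp)\a}$ and $\tilde{r}_\tau = \dotp{\g\tp}{\rh\tp}$. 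The core of the argument will be to establish the one-step recursion
\[
    \tau\, h^\tau \leq (\tau-1)\, h^{\tau-1} + (w_\tau - \tilde{r}_\tau) + \frac{2\uniformdiff}{\sqrt{\tau}} + \frac{\boundK^2}{2\beta_0\sqrt{\tau}},
\]
which telescopes cleanly when summed from $\tau=1$ to $T$.

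To prove this recursion I would decompose $h^\tau - h^{\tau-1}$ as $[\f\tp(\sopt)-\f\tpmo(\sopt)] - [\f\tp(\z\tp)-\f\tpmo(\z\tpmo)]$. The first bracket is at most $\uniformdiff/\tau^{3/2}$ by property~3 of Assumption~\ref{asmp:smoothapprox}. For the second, I split further as $[\f\tp(\z\tp)-\f\tpmo(\z\tp)] + [\f\tpmo(\z\tp)-\f\tpmo(\z\tpmo)]$, controlling the first part again by property~3 and the second by the smoothness ascent inequality for the $\sqrt{\tau}/\beta_0$-smooth function $\f\tpmo$ along $\z\tp - \z\tpmo = \frac{1}{\tau}(\rh\tp - \z\tpmo)$. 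The resulting quadratic contribution is $\frac{\sqrt{\tau}}{2\beta_0}\cdot\frac{\boundK^2}{\tau^2} = \frac{\boundK^2}{2\beta_0\,\tau^{3/2}}$, so the growing smoothness is exactly compensated by the squared step-size; this is the delicate point that makes the prescribed smoothing rate $\sqrt{\tau+1}/\beta_0$ work. Finally, concavity of $\f\tpmo$ at $\z\tpmo$ yields $h^{\tau-1} \leq w_\tau - \dotp{\g\tp}{\z\tpmo}$, which converts the remaining linear term $-\frac{1}{\tau}\dotp{\g\tp}{\rh\tp-\z\tpmo}$ into $\frac{1}{\tau}(w_\tau - \tilde{r}_\tau - h^{\tau-1})$. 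Multiplying through by $\tau$ produces the claimed recursion.

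Summing and invoking $\sum_{\tau=1}^T \tau^{-1/2} \leq 2\sqrt{T}$ gives $T\, h^T \leq \sum_\tau (w_\tau - \tilde{r}_\tau) + (4\uniformdiff + \boundK^2/\beta_0)\sqrt{T}$. I then split $w_\tau - \tilde{r}_\tau = (w_\tau - u_\tau) + (u_\tau - \tilde{r}_\tau)$: the first sum is at most $\lip\boundK\sqrt{2T\ln(1/\dazuma)}$ with probability $\geq 1-\dazuma$ by Lemma~\ref{lem:azuma-new} (whose hypotheses are satisfied because $\g\tp$ is $\filt\tp$-measurable and uniformly bounded by $\lip$ via property~2), while the second equals $\reghgen{\T} - \lip\,\T\,\norm{\z\t-\sh\t}_2$ by definition~\eqref{eq:def_scalar_regret_general}. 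To return from $h^T$ to $\regh{\T}$, I would use property~3 twice to obtain $|\f\t(\sopt)-\f(\sopt)|$ and $|\f\t(\sh\t)-\f(\sh\t)|$ both $\leq \uniformf/\sqrt{T}$, combined with $\lip$-Lipschitzness of $\f\t$ on $\K$ to bound $|\f\t(\sh\t)-\f\t(\z\t)|\leq \lip\norm{\sh\t-\z\t}_2$. The resulting $\lip\,\T\,\norm{\z\t-\sh\t}_2$ contribution exactly cancels its counterpart hidden inside $\reghgen{\T}$, and dividing by $T$ yields the advertised bound.

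The main obstacle lies in book-keeping three coupled sources of error: the approximation bias between $\f$ and $\f\tp$ (producing $4\uniformdiff+2\uniformf$), the quadratic term arising from the time-growing smoothness (producing $\boundK^2/\beta_0$), and the martingale concentration for the linear maximization oracle (producing $\lip\boundK\sqrt{2\ln(1/\dazuma)}$). The fact that these ingredients end up matching one-to-one with the four additive constants in the theorem is a useful consistency check that the plan closes properly.
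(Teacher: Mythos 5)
Your proof is correct and follows essentially the same route as the paper's (Lemma~\ref{lem:fw} specialized as in the proof of Theorem~\ref{thm:convergence:nonsmooth}): the same ascent-lemma recursion for the $\frac{\sqrt{\tau}}{\beta_0}$-smooth surrogate, the same telescoping with the two drift terms each bounded by $\uniformdiff/\tau^{3/2}$, the same split into the Azuma term and $\reghgen{\T}$ with the $\lip\T\norm{\z\t-\sh\t}_2$ contributions cancelling, and the same final $2\uniformf/\sqrt{\T}$ conversion back to $\f$. The only cosmetic differences are that you measure the gap against a fixed $\sopt\in\argmaxs\f(\s)$ rather than the time-varying maximizer of $\f\tp$ (both yield the identical $4\uniformdiff\sqrt{\T}$ bookkeeping), and that the concentration step you invoke is really the general-$\f\tpmo$ statement of Lemma~\ref{lem:azuma} rather than Lemma~\ref{lem:azuma-new}, whose hypotheses you correctly verify anyway.
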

The proofs are given in Appendix \ref{sec:proofs_fw}.

The worst-case regret of contextual bandits is $\Omega(\sqrt{T})$ \citep{bubeck2012regret,dani2008stochastic,lattimore2020bandit}, which gives a lower bound for the worst-case regret of \cbcr in $\Omega(\frac{1}{\sqrt{T}})$. The dependencies on the problem parameters are all directly derived from the regret bounds $R_T^{\rm gen}$ of the underlying scalar bandit algorithm (LinUCB, SquareCB, etc.). Therefore we obtain \cbcr algorithms that are near minimax optimal as soon as $R_T^{\rm gen} \leq O(\sqrt{T})$. The residual terms $O(\frac{1}{\sqrt{T}})$ terms are tied to the use of Azuma’s inequality (Lemma \ref{lem:azuma}) and \FW analysis (using Lipschitz and smoothness parameters), and the dependencies to these parameters match usual convergence guarantees in optimization \citep{jaggi2013revisiting,clarkson2010coresets,lan2013complexity}. As we rely on a worst-case analysis in deriving our reduction guarantees, it remains an open question whether problem-dependent optimal bounds could be recovered as well.

We make three remarks in order:
\begin{remark}[Why we need a specific result for smooth $\f$] The result for $\smooth$-smooth $\f$ has a better dependency than the general result using $\beta_0=1/\smooth$ ($\ln(e\T)$ instead of $\sqrt{\t}$), which makes a fundamental difference in practice if the smoothness coefficient is close to $\sqrt{\T}$. This is why we keep the two results separate.
\end{remark}
\begin{remark}[Comparison to the smoothing as used by \protect\citet{agrawal2014bandits}] \citet[Thm 5.4]{agrawal2014bandits} present an analysis for non-smooth $\f$ where, at a high-level, they run the smooth algorithm using $\f\t$ instead of a sequence $(\f\tp)_{\tau\in\Nat}$, and then apply the convergence bound for smooth $\f$. Our analysis has two advantages:
\begin{enumerate}
    \item Anytime bounds: our approach does not require the horizon to be known in advance.
    \item Better bound: they obtain a bound on $\sqrt{\ln\T/\T}$ by suitably choosing the smoothing parameter, whereas we obtain a bound of $1/\sqrt{\T}$. In practice, it may not make a difference if $\frac{\reghgen{\T}}{\T}$ is itself in $\sqrt{\ln\T}/\T$, but the advantage of our approach is clear as far as the analysis of \FW for (\textsc{c})\bcr is concerned.
\end{enumerate}
\end{remark}
\begin{remark}[About the confidence parameter $\darlo$ in $\prd\tp$ and $\pupdtp$]
In practice, exploration/exploitation algorithms need a confidence parameter that defines the probability of their regret guarantee. For instance, in confidence-based approaches, it is the probability with which the confidence intervals are valid at every time step. In our case, it means that explicit upper bounds on $\reghgen{\T}$ are of the form $\boundapproxgen{\T}{\delta'}$ which hold with probability $1-\darlo$, where $\darlo$is the confidence parameter in $\prd\tp$. Using the union bound, we obtain bounds of the form $\regh{\T}\leq \boundapproxgen{\T}{\darlo}/\T+O\big(\sqrt{\frac{\ln(1/\dazuma)}{T}}\big)$ that are valid with probability $1-\dazuma-\darlo$. 

Note the difference in the roles of $\dazuma$ and $\darlo$: $\dazuma$ is not a parameter of the algorithm, it is only here to account for the randomization over contexts.
\end{remark}

\section{Proofs for Section \ref{sec:frank_wolfe} and Appendix \ref{sec:generalcase}}
\label{sec:proofs_fw}

This section contains the proofs for the results of Section \ref{sec:frank_wolfe}. All the proofs are  made for the more general framework described in Appendix \ref{sec:generalcase}. The framework of the paper can be recovered as the special case $\forall\tau\in\Nat, \rh\tp=\r\tp$ and $\z\tp=\sh\tp$.

\begin{proof}[Proof of Lemma \ref{lem:azuma-new}]
Lemma \ref{lem:azuma-new} is the special case of Lemma \ref{lem:azuma} when $f$ is smooth. 
Note that every $\f$ satisfying Assumption \ref{asmp:basic} satisfies the assumptions of Lemma \ref{lem:azuma}.
\end{proof}
\begin{proof}[Proof of Theorem \ref{thm:simplified_fw}]
Thm. \ref{thm:simplified_fw} is a special case of Theorem \ref{thm:convergence:smooth} of Appendix \ref{sec:generalcase}, using $\forall\tau\in\Nat, \rh\tp=\r\tp$ and $\z\tp=\sh\tp$. The proof of Theorem \ref{thm:convergence:smooth} is given in Section \ref{sec:proofs_main_details}.
\end{proof}

\begin{lemma}\label{lem:azuma}
Assume that $\forall \T, \f\t$ is differentiable on $\K$ with  $\forall z\in\K, \norm{\nabla\f\t(\z)}_2 \leq \lip$. Then, for every $\z\in\K$, we have:
\begin{align}\label{eq:lemazumamain}
    \!\!\resizebox{0.94\textwidth}{!}{
    $\displaystyle
    \expectx{\maxa \dotp{\grad\f\tpmo(\z)}{\mu(\x)\a}}
     = \maxpol \expectx{\dotp{\grad\f\tpmo(\z)}{\mu(\x)\pol(\x)}}
    = \maxs \dotp{\grad\f\tpmo(\z\tpmo)}{\s}.
    $
    }
\end{align}
Assume furthermore that $\z\tp$ is a function of contexts, actions and rewards up to time $\tau$.
Let
 $\displaystyle\aopt\tp \in \argmaxa \dotp{\grad\f\tpmo(\z\tpmo)}{\mu(\x\tp)\a}$. For all $\dazuma\in(0,1]$, with probability at least $1-\dazuma$, we have:
\begin{align}\label{eq:lemazumamain:azuma}
    \sumT \maxs \dotp{\grad\f\tpmo(\z\tpmo)}{\s-\mu(\x\tp)\aopt\tp} \leq \lip\boundK\sqrt{2\T\ln\frac{1}{\dazuma}}
\end{align}
\end{lemma}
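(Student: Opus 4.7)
The statement splits into the deterministic identity \eqref{eq:lemazumamain} and the Azuma-type concentration \eqref{eq:lemazumamain:azuma}. The second rests on the first, so I would tackle them in that order.

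For \eqref{eq:lemazumamain}, I would mimic, with $\w = \grad\f\tpmo(\z)$, the opening of the proof of Lemma~\ref{lem:convergence_St_to_S}. Unfolding the definition of $\S$ and using linearity of expectation gives
\[
    \maxs\dotp{\w}{\s} \;=\; \maxpol \expectx{\dotp{\w}{\mu(\x)\pol(\x)}}.
\]
Inside the expectation, for each fixed $\x$, the map $\pi\mapsto \dotp{\w}{\mu(\x)\pi}$ is linear on the compact convex hull $\convA$, hence attains its maximum at some $\pol^\star(\x)\in\aS$ with $\dotp{\w}{\mu(\x)\pol^\star(\x)} = \maxa \dotp{\w}{\mu(\x)\a}$. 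A measurable selection of such a $\pol^\star$ exists; plugging it into the outer maximization shows that the supremum over policies is exactly $\expectx{\maxa \dotp{\w}{\mu(\x)\a}}$, closing the chain of equalities in \eqref{eq:lemazumamain}.

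For \eqref{eq:lemazumamain:azuma}, I would introduce the per-step excess
\[
    \azumamarting\tp \;:=\; \maxs \dotp{\grad\f\tpmo(\z\tpmo)}{\s - \mu(\x\tp)\aopt\tp}
\]
and verify that $\bigl(\sumT \azumamarting\tp\bigr)_\T$ is a martingale with bounded increments. Since $\z\tpmo$ is a function of contexts, actions and rewards up to time $\tau-1$, it is measurable w.r.t.\ $\hist\tp$, while $\x\tp\sim\probxS$ is drawn independently of $\hist\tp$. Applying the identity \eqref{eq:lemazumamain} conditionally on $\hist\tp$ with $\z=\z\tpmo$ therefore gives $\expect[\azumamarting\tp\mid\hist\tp]=0$. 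For the bound $|\azumamarting\tp|\leq \lip\boundK$, let $\s^\star$ attain the inner max, so that $\azumamarting\tp = \dotp{\grad\f\tpmo(\z\tpmo)}{\s^\star-\mu(\x\tp)\aopt\tp}$; Cauchy--Schwarz with $\norm{\grad\f\tpmo(\z\tpmo)}_2\leq \lip$ and $\s^\star,\mu(\x\tp)\aopt\tp \in \K$ (diameter $\boundK$) delivers the claim. Azuma--Hoeffding applied to $\sumT \azumamarting\tp$ with parameter $c=\lip\boundK$ then yields \eqref{eq:lemazumamain:azuma} directly.

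The only delicate point is the filtration bookkeeping that underlies the martingale property: one needs $\grad\f\tpmo(\z\tpmo)$ to be $\hist\tp$-measurable while $\x\tp$ remains fresh, which is exactly the measurability hypothesis of the lemma. Everything else is a routine application of \eqref{eq:lemazumamain} and of a standard scalar Azuma--Hoeffding bound to the linear functional $\dotp{\grad\f\tpmo(\z\tpmo)}{\cdot}$.
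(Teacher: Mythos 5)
Your proposal is correct and follows essentially the same route as the paper: the first identity is obtained by decoupling the policy optimization pointwise over contexts via a measurable selection of $\argmaxa\dotp{\w}{\mu(\x)\a}$ (the paper's $\pol^*\tp$), and the concentration bound is obtained by observing that the per-step excess is a martingale difference with increments bounded by $\lip\boundK$ (via the gradient bound and the diameter of $\K$) and applying Azuma's inequality. The filtration point you flag as delicate is exactly the one the paper handles by conditioning on the $\sigma$-algebra generated by $(\x_{t'},\a_{t'},\r_{t'})_{t'\le\tau-1}$, so nothing is missing.
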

\begin{proof}

Let $\z\in\K$. We first prove \eqref{eq:lemazumamain}.
The first equality in \eqref{eq:lemazumamain} comes from the maximization over functions over the simplex with a linear objective: define 
\begin{align}
    \pol^*\tp:\mathcal X\mapsto \convA &&\text{such that } \pol^*\tp(\x)\in\argmaxa \dotp{\grad\f\tpmo(\z\tpmo)}{\mu(\x)\a},
\end{align} using some arbitrary tie-breaking rule when the $\argmax$ is not unique. We have, for every policy $\pol$:
\begin{align}
    &&\expectx{\dotp{\grad\f\tpmo(\z)}{\mu(\x)\pol(\x)}}&\leq \expectx{\maxa \dotp{\grad\f\tpmo(\z)}{\mu(\x)\a}} \\
    \implies&& \maxpol \expectx{\dotp{\grad\f\tpmo(\z)}{\mu(\x)\pol(\x)}}&\leq  \expectx{\dotp{\grad\f\tpmo(\z)}{\mu(\x)\pol^*\tp(\x)}}.
\end{align}
On the other hand, it is clear that
\begin{align}
\expectx{\dotp{\grad\f\tpmo(\z)}{\mu(\x)\pol^*\tp(\x)}} \leq \maxpol \expectx{\dotp{\grad\f\tpmo(\z)}{\mu(\x)\pol(\x)}},
\end{align}
and we get the first equality of \eqref{eq:lemazumamain}. 

The second equality in \eqref{eq:lemazumamain} holds by the definition of $\S$ since for every policy $\pol$, we have
\begin{align}
    \expectx{\dotp{\grad\f\tpmo(\z)}{\mu(\x)\pol(\x)}} =\dotp{\grad\f\tpmo(\z)}{\expectx{\mu(\x)\pol(\x)}}.
\end{align}
We now prove \eqref{eq:lemazumamain:azuma}. Let $\big(\expecttp{.}\big)_{\tau\geq 1}$ be the conditional expectations with respect to the filtration $\filttot=(\filttot\tp)_{\tp\geq 1}$ where $\filt\tp$ is the $\sigma$-algebra generated by $(\x\tp',\a\tp',\r\tp')_{\tau'\in\intint{\tau-1}}$, i.e., contexts, actions and rewards up to time $\tau-1$, so that we have:
\begin{align}
    \expecttp{\dotp{\grad\f\tpmo(\z\tpmo)}{\mu(\x\tp)\pol^*\tp(\x\tp)}} = \expectx{\dotp{\grad\f\tpmo(\z\tpmo)}{\mu(\x)\pol^*\tp(\x)}}.
\end{align}
Using \eqref{eq:lemazumamain} gives $\displaystyle\smash{\expecttp{\dotp{\grad\f\tpmo(\z\tpmo)}{\mu(\x\tp)\pol^*\tp(\x\tp)}}=\maxs \dotp{\grad\f\tpmo(\z\tpmo)}{\s}}$, from which we obtain
\begin{align*}
    \maxs \dotp{\grad\f\tpmo(\z\tpmo)}{\s-\mu(\x\tp)\aopt\tp}& \\
    &\hspace{-7em}
    = \expecttp{\dotp{\grad\f\tpmo(\z\tpmo)}{\mu(\x\tp)\pol^*\tp(\x\tp)}} -\dotp{\grad\f\tpmo(\z\tpmo)}{\mu(\x\tp)\pol^*\tp(\x\tp)}
\end{align*}

$\azumamarting\t=\sumT \maxs \dotp{\grad\f\tpmo(\z\tpmo)}{\s-\mu(\x\tp)\aopt\tp}$ thus defines a martingale adapted to $\filt$, and, using $\azumamarting\zero=0$, we have, for all $\tp$: 
\begin{align}
|\azumamarting\tp-\azumamarting\tpmo|\leq \lip\sup_{\substack{\s\in\S\\\x\in\xS\\\a\in\aS}} \norm{\s-\mu(\x)\a}_2 
\leq \lip\sup_{\substack{\z, \z'\in\K}} \norm{\z-\z'}_2
\leq \lip\boundK.
\end{align}
The results then follows from Azuma's inequality.
\end{proof}

The next lemma is the main technical tool of the paper. The proof is not technically difficult given the previous result, using the telescoping sum approach of the proof of Lemma 12 of \citet{berthet2017fast} and organizing the residual terms. 
\begin{lemma}\label{lem:fw}
Under Assumption \ref{asmp:smoothapprox}, denote $\displaystyle\forall\tau\in\Nat, \fopt\tp=\maxs\f\tp(\s)$, and $\innerreg\tp(\z) = \fopt\tp-\f\tp(\z)$.

Let $\boundcurvdiff{\T}$, $\bounddiffopt{\T}$ in $\Re\cup\{+\infty\}$ such that, $\forall \T\in\Nat_*$, we have:
\begin{align}\label{eq:def:boundcurvdiff}
    \sumT \frac{\boundK^2}{2}\frac{\smooth\tpmo}{\tau} \leq \boundcurvdiff{\T},
    && 
    \sumT \tau\big(\innerreg\tp(\z\tp) - \innerreg\tpmo(\z\tp)\big) \leq \bounddiffopt{\T}
\end{align}
And let $\boundfunction{\T} = \boundcurvdiff{\T} + \bounddiffopt{\T}$.
Then,
for all $\z\zero\in\K$, $\forall \T, \forall \dazuma>0, \forall\darlo>0$, Algorithm~\ref{alg:general_fw} satisfies, with probability at least $1-\dazuma$: 
\begin{align}
    \fopt\t - \f\t(\sh\t) \leq \frac{\boundfunction{\T} + \reghgen{\T}+\lip\boundK\sqrt{2\T\ln\frac{1}{\dazuma}}}{\T}
\end{align}
\end{lemma}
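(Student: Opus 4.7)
The plan is to run a Frank-Wolfe descent analysis step by step for each smooth approximation $\f\tpmo$, and then patch the pieces together while compensating for two drifts: the change in the smoothing function from $\f\tpmo$ to $\f\tp$, and the mismatch between the Frank-Wolfe iterate $\z\t$ and the observed average $\sh\t$ that actually appears in the statement.

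First I would combine the smoothness of $\f\tpmo$ with the update rule $\z\tp-\z\tpmo=\tfrac{1}{\tau}(\rh\tp-\z\tpmo)$ to obtain the standard Frank-Wolfe ascent inequality on $\f\tpmo(\z\tp)-\f\tpmo(\z\tpmo)$, the curvature residual being $\smooth\tpmo\boundK^2/(2\tau^2)$. Writing $\Delta\tp=\maxs\dotp{\nabla\f\tpmo(\z\tpmo)}{\s}-\dotp{\nabla\f\tpmo(\z\tpmo)}{\rh\tp}$ for the Frank-Wolfe linearization gap and using the concavity supergradient inequality $\fopt\tpmo-\f\tpmo(\z\tpmo)\leq \maxs\dotp{\nabla\f\tpmo(\z\tpmo)}{\s-\z\tpmo}$, this rearranges into the one-step recursion
\begin{align*}
\innerreg\tpmo(\z\tp)\leq \bigl(1-\tfrac{1}{\tau}\bigr)\innerreg\tpmo(\z\tpmo)+\tfrac{\Delta\tp}{\tau}+\tfrac{\smooth\tpmo\boundK^2}{2\tau^2}.
\end{align*}

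Next I would multiply by $\tau$ and use the identity $\tau\innerreg\tpmo(\z\tp)=\tau\innerreg\tp(\z\tp)-\tau[\innerreg\tp(\z\tp)-\innerreg\tpmo(\z\tp)]$. The resulting pairs $\{\tau\innerreg\tp(\z\tp),(\tau-1)\innerreg\tpmo(\z\tpmo)\}$ telescope under $\sumT$ to $\T\innerreg\t(\z\t)$ (starting from $a_0=0\cdot\innerreg\zero(\z\zero)=0$), while the two residual sums are absorbed into $\boundcurvdiff{\T}$ and $\bounddiffopt{\T}$ by definition \eqref{eq:def:boundcurvdiff}. This yields $\T\innerreg\t(\z\t)\leq \sumT\Delta\tp+\boundfunction{\T}$.

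Finally I would convert $\sumT\Delta\tp$ into the scalar regret. Inserting $\maxa\dotp{\nabla\f\tpmo(\z\tpmo)}{\mu(\x\tp)\a}$ splits the sum into the $\S$-vs-$\aS$ gap, controlled by Lemma~\ref{lem:azuma} (applicable because $\f\tpmo$ has gradients of norm at most $\lip$ on $\K$ by Assumption~\ref{asmp:smoothapprox}), which contributes $\lip\boundK\sqrt{2\T\ln(1/\dazuma)}$ with probability at least $1-\dazuma$, plus a remainder equal to $\reghgen{\T}-\lip\T\norm{\z\t-\sh\t}_2$. Lipschitzness of $\f\t$ gives $\fopt\t-\f\t(\sh\t)\leq \innerreg\t(\z\t)+\lip\norm{\z\t-\sh\t}_2$; the two $\lip\T\norm{\z\t-\sh\t}_2$ pieces cancel exactly, and dividing by $\T$ produces the claimed bound. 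The main obstacle is the simultaneous bookkeeping of these two drifts: the telescoping only closes cleanly because the chosen decomposition of $\tau\innerreg\tpmo(\z\tp)$ extracts exactly the quantity that $\bounddiffopt{\T}$ bounds, and the extra $\lip\T\norm{\z\t-\sh\t}_2$ term built into the definition of $\reghgen{\T}$ is precisely the slack needed to swap $\z\t$ for $\sh\t$ at the very end.
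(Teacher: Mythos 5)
Your proposal is correct and follows essentially the same route as the paper's proof: the smoothness ascent lemma combined with the concavity supergradient bound, the telescoping of $\tau\innerreg\tp(\z\tp)$ with the drift absorbed into $\bounddiffopt{\T}$, the split of the linearization gap into the Azuma term (Lemma~\ref{lem:azuma}) plus the scalar regret, and the final Lipschitz swap of $\z\t$ for $\sh\t$ using the $\lip\T\norm{\z\t-\sh\t}_2$ slack built into $\reghgen{\T}$. The only cosmetic difference is that you split the gap $\Delta\tp$ after telescoping whereas the paper performs the three-way decomposition inside the one-step recursion; the resulting bounds are identical.
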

\begin{proof}
We start with the standard ascent lemma using bounded curvature on $\K$ \citep[Inequality 4.3]{bottou2018optimization}, denoting $\curv\t=\frac{\boundK^2}{2}\smooth\t$:
\begin{align}\label{eq:lemma:main:notation}
    \f\tpmo(\z\tp) &\geq \f\tpmo(\z\tpmo) + \frac{1}{\tau}\dotp{\grad\f\tpmo(\z\tpmo)}{\rh\tp-\z\tpmo} - \frac{\curv\tpmo}{\tau^2}\\
    \fopt\tpmo - \f\tpmo(\z\tp) & 
    \leq \fopt\tpmo - \f\tpmo(\z\tpmo) - \frac{1}{\tau}\dotp{\grad\f\tpmo(\z\tpmo)}{\rh\tp-\z\tpmo} + \frac{\curv\tpmo}{\tau^2}
\end{align}
Let us denote by $\g\tp=\grad\f\tpmo(\z\tpmo)$ and let $\aopt\tp \in \argmaxa \dotp{\g\tp}{\mu(\x\tp)\a}$. We first decompose the middle term:
\begin{align}
    \dotp{\g\tp}{\rh\tp-\z\tpmo} &= \maxs \dotp{\g\tp}{\s-\z\tpmo}
    -\maxs \dotp{\g\tp}{\s-\mu(\x\tp)\aopt\tp} -
     \dotp{\g\tp}{\mu(\x\tp)\aopt\tp-\rh\tp}\\
     &\geq \fopt\tpmo - \f\tpmo(\z\tpmo) -\underbrace{\maxs\dotp{\g\tp}{\s-\mu(\x\tp)\aopt\tp}}_{\boundsumbyazuma\tp} - 
     \underbrace{\dotp{\g\tp}{\mu(\x\tp)\aopt\tp-\rh\tp}}_{\boundsumbyregret\tp} \tag*{(by \eqref{eq:concavbdopt} below)}
\end{align}
Where the last inequality uses the concavity of $\f\tp$: for all $\sopt\tpmo\in\argmaxs \f\tpmo(\s)$, we have:
\begin{align}\label{eq:concavbdopt}
    \fopt\tpmo - \f\tpmo(\z\tpmo) \leq \dotp{\grad\f\tpmo(\z\tpmo)}{\sopt\tpmo-\z\tpmo} \leq \maxs\dotp{\grad\f\tpmo(\z\tpmo)}{\s-\z\tpmo}
\end{align}
and thus we get
\begin{align}
     &&\fopt\tpmo - \f\tpmo(\z\tp) & \leq \big(\fopt\tpmo - \f\tpmo(\z\tpmo)\big)(1-\frac{1}{\tau}) +\frac{1}{\tau}(\boundsumbyazuma\tp+\boundsumbyregret\tp) + \frac{\curv\tpmo}{\tau^2}\\
     \implies&& 
     \tau\innerreg\tp(\z\tp) & \leq (\tau-1)\innerreg\tpmo(\z\tpmo) + \boundsumbyazuma\tp+\boundsumbyregret\tp+\frac{\curv\tpmo}{\tau}
     + \tau\big(\innerreg\tp(\z\tp)-\innerreg\tpmo(\z\tp)\big)\\
     \implies&&
     \T\innerreg\t(\z\t) &\leq 
    \sumT \boundsumbyazuma\tp
    + \sumT \boundsumbyregret\tp
    + \sumT \tau\big(\innerreg\tp(\z\tp)-\innerreg\tpmo(\z\tp)\big)
    + \sumT \frac{\curv\tpmo}{\tau}
\end{align}
Using the Lipschitz property for $\f\t$, we finally obtain
\begin{align}
     \T\innerreg\t(\sh\t) &\leq 
    \underbrace{\sumT \boundsumbyazuma\tp
    + \sumT \boundsumbyregret\tp+\T\lip\norm{\z\t-\sh\t}_2}_{\substack{\leq \lip\boundK\sqrt{2\T\ln(1/\dazuma)}+\reghgen{\T}\\\text{w.p. }\geq 1-\dazuma
    \text{ by \eqref{eq:def_scalar_regret_general} and Lemma \ref{lem:azuma}.}}}
    + \underbrace{\sumT \tau\big(\innerreg\tp(\z\tp)-\innerreg\tpmo(\z\tp)\big)}_{\leq \bounddiffopt{\T}}
    + \underbrace{\sumT \frac{\curv\tpmo}{\tau}}_{\leq \boundcurvdiff{\T}}
\end{align}
Which is the desired result.
\end{proof}

\subsection{proofs of the main results}\label{sec:proofs_main_details}

We now prove the results of Appendix \ref{sec:generalcase}.

\begin{proof}[Proof of Theorem \ref{thm:convergence:smooth}]
First, notice that since $\f$ differentiable on $\K$ (since it is smooth) and since both $\z\t$ and $\frac{1}{\T}\sumT \mu(\x\tp)\a\tp$ are in $\K$, using $\forall \tau, \f\tp=\f$, we have $\regh{\T} = \fopt-\f(\sh\t) = \fopt\t-\f\t(\sh\t)$.
Using the notation of Lemma \ref{lem:fw}, we then have $\boundcurvdiff{\T}=0$ and $ \uniformboundf{\T}=0$. Also: 
\begin{align}
    \sumT \frac{\boundK^2}{2}\frac{\smooth\tp}{\tau} = \sumT\frac{\curv{}}{\tau} \leq \curv{}(\ln(t)+1)
\end{align}
The result then follows from Lemma \ref{lem:fw}.
\end{proof}


\begin{proof}[Proof of Theorem \ref{thm:convergence:nonsmooth}]
Using the notation of Lemma \ref{lem:fw}, we specify $\boundcurvdiff{\T}$, $\bounddiffopt{\T}$ in turn.
\begin{align}
    \sumT \frac{\boundK^2}{2}\frac{\smooth\tpmo}{\tau} = \sumT \frac{\boundK^2}{2\beta\zero\sqrt{\tau}} \leq \frac{\boundK^2}{\beta\zero}\sqrt{\T}.
\end{align}
For $\bounddiffopt{\T}$, we decompose $\innerreg\tp(\z\tp)-\innerreg\tpmo(\z\tp)$ into two terms:
\begin{align}
    \innerreg\tp(\z\tp)-\innerreg\tpmo(\z\tp) = \fopt\tp-\fopt\tpmo + \f\tpmo(\z\tp)-\f\tp(\z\tp) \leq \frac{2\uniformdiff}{\tau\sqrt{\tau}}
\end{align}
Using $\sumT \frac{1}{\sqrt{\tau}} \leq 2\sqrt{\T}$, we obtain $\bounddiffopt{\T} \leq 2\uniformdiff\sumT \frac{\tau}{\tau\sqrt{\tau}}\leq 4\uniformdiff\sqrt{\T}$. Lemma \ref{lem:fw} gives
\begin{align}\label{eq:without_uniformf_part}
    \fopt\t-\f\t(\sh\t) \leq \reghgen{\T}+\frac{\frac{\boundK^2}{\beta\zero}+4\uniformdiff+\lip\boundK\sqrt{2\ln(\dazuma^{-1}/2)}}{\sqrt{\T}}
\end{align}

To finish the proof, notice that:
\begin{align}\label{eq:uniformf_part}
    \big|\fopt-\f(\sh\t) - \big(\fopt\t-\f\t(\sh\t)\big)\big| \leq 2\sup_{\z'\in\K} |\f\t(\z')-\f(\z')| \leq \frac{2\uniformf}{\sqrt{\T}}.
\end{align}
The result follows from \eqref{eq:without_uniformf_part} and \eqref{eq:uniformf_part} using:
\begin{align}
    \regh\T =  \fopt-\f(\sh\t) \leq \fopt\t-\f\t(\sh\t) + \frac{2\uniformf}{\sqrt{\T}}.
\end{align}
\end{proof}
\section{Smooth approximations of non-smooth functions}\label{sec:smooth_approx}

We discuss here in more details two specific smoothing techniques: the Moreau envelope, also called Moreau-Yosida regularization in Section \ref{sec:smooth_approx:moreau}, then randomized smoothing in Section \ref{sec:randomized_smoothing}. As in Appendices \ref{sec:generalcase} and \ref{sec:proofs_fw}, we focus on the general framework described in Algorithm \ref{alg:general_fw}.

\begin{proof}[Proof of Theorem \ref{thm:convergence:moreau:simplified}]
Usinh Theorem \ref{thm:convergence:nonsmooth} above and Lemma \ref{lem:moreau_satisfies_asmp} below gives the result since
\begin{align}
    \frac{\boundK^2}{\beta\zero}+4\uniformdiff+2\uniformf=
    \frac{\boundK^2}{\beta\zero} + 3\lip^2\beta\zero =
    \lip\boundK\big(\frac{\boundK}{\lip\beta\zero}+3\frac{\lip\beta\zero}{\boundK}\big).
\end{align}
\end{proof}

\subsection{Smoothing with the Moreau envelope}\label{sec:smooth_approx:moreau}
For functions that are non-smooth, we propose first a smoothing technique based on the Moreau envelope, following the approach described by \citet{lan2013complexity}. Let $\f:\Re^\D\rightarrow\Re\cup\{\pm\infty\}$ be a closed proper concave function. The Moreau envelope (or Moreau-Yosida regularization) of $\f$ with parameter $\beta\t$ \citep[Def. 1.22]{rockafellar2009variational} is defined as
\begin{align}
\moreauf{\beta}(\z) = \maxyRD \Big( \f(\y) - \frac{1}{2\beta}\norm{\y-\z}_2^2\Big).
\end{align}

For $\beta>0$, let the proximal operator $\proxf{\beta} =\argmaxyRD \moreauf{\beta}(\y)$. 
The basic properties of the Moreau envelope \citep[Th. 2.26]{rockafellar2009variational} are that if $\f:\Re^\D\rightarrow\Re\cup\{\pm \infty\}$ is an upper semicontinuous, proper concave function then $\moreauf{\beta}$ is concave, finite everywhere, continuously differentiable with $\frac{1}{\beta}$-Lipschitz gradients. We also have that the proximal operator $\proxf{\beta}$ is well-defined (the argmax is attained in a single point) and we have
\begin{align}\label{eq:grad_moreau}
    \grad\moreauf{\beta}(\z) = \frac{1}{\beta}\big(\z-\proxf{\beta}(\z)\big).
\end{align}
It is immediate to prove the following inequalities for every $\z\in\Re^n$ and every $\beta>0$:
\begin{align}\label{eq:basic_moreau_ineq}
    \f(\z) \leq \moreauf{\beta}(\z) \leq \f(\proxf{\beta}(\z)).
\end{align}

The following properties of the Moreau envelope (See \citep[Appendix A.1]{yurtsever2018conditional} and \citep[Lemma 1]{thekumparampil2020projection}) are key to the main results:
\begin{lemma}\label{lem:moreau_lipschitz}
Let $\beta>0$, $f:\Re^\D\rightarrow\Re\cup\{\pm\infty\}$ be a proper closed concave function, and $\Z{}\subseteq\Re^\D$ be a convex set such that $\f$ is locally $\lip$-Lipschitz-continuous on $\Z{}$. Then:
\begin{itemize}
    \item 
$\forall \z\in\Z{}$ such that $\proxf{\beta}(\z)\in\Z{}$, we have $\norm{\z-\proxf{\beta}(\z)}\leq \lip\beta$ and:
\begin{align}
    \moreauf{\beta}(\z) - \frac{\lip^2\beta}{2} \leq \f(\z) \leq \moreauf{\beta}(\z).
\end{align}
\item $\forall \z\in\Z{}$ such that $\proxf{\beta}(\z)\in\Z{}$, $\forall \beta>0$ and $\beta'>0$, we have:
\begin{align}
    \moreauf{\beta} \leq \moreauf{\beta'}+\frac{1}{2}\Big(\frac{1}{\beta'}-\frac{1}{\beta}\Big) \norm[\big]{\z-\proxf{\beta}(\z)}_2^2 \leq \frac{\lip^2\beta}{2}\Big(\frac{\beta}{\beta'}-1\Big)
\end{align}
\end{itemize}
\end{lemma}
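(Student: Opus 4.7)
The plan is to establish the three inequalities in turn, leveraging the variational definition of $\proxf{\beta}$ together with basic supergradient calculus for proper closed concave functions.

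For the norm bound $\norm{\z-\proxf{\beta}(\z)}_2\leq \lip\beta$: fix $\z\in\Z{}$ and let $\y^{\star}=\proxf{\beta}(\z)$, which by assumption also lies in $\Z{}$. The concave map $\y\mapsto \f(\y)-\frac{1}{2\beta}\norm{\y-\z}_2^2$ attains its maximum at $\y^{\star}$, so first-order optimality (the superdifferential of $\f$ being nonempty at $\y^{\star}$ since $\f$ is finite there and proper closed concave, as recalled in Appendix \ref{sec:notation}) yields the inclusion $\tfrac{1}{\beta}(\y^{\star}-\z)\in \partial\f(\y^{\star})$. Because $\f$ is $\lip$-Lipschitz on an open set containing $\Z{}$, hence in particular around $\y^{\star}$, every supergradient at $\y^{\star}$ has Euclidean norm at most $\lip$; thus $\tfrac{1}{\beta}\norm{\y^{\star}-\z}_2\leq \lip$, which is the desired bound.

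For the sandwich $\moreauf{\beta}(\z)-\tfrac{\lip^2\beta}{2}\leq \f(\z)\leq \moreauf{\beta}(\z)$, the right inequality is just \eqref{eq:basic_moreau_ineq}. For the left one, I would write $\moreauf{\beta}(\z)-\f(\z)=\bigl(\f(\y^{\star})-\f(\z)\bigr)-\tfrac{1}{2\beta}\norm{\y^{\star}-\z}_2^2$, upper-bound $\f(\y^{\star})-\f(\z)$ by $\lip\norm{\y^{\star}-\z}_2$ using local Lipschitzness (valid since $\z,\y^{\star}\in\Z{}$ and the Lipschitz property holds on an open set containing $\Z{}$), and then maximize the one-variable concave quadratic $t\mapsto \lip t-\tfrac{t^2}{2\beta}$ over $t\geq 0$; its maximum equals $\tfrac{\lip^2\beta}{2}$ and is attained at $t=\lip\beta$, which is consistent with the norm bound just derived.

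For the comparison between smoothing parameters, I would use the variational definition twice. On one hand, $\moreauf{\beta}(\z)=\f(\y^{\star})-\tfrac{1}{2\beta}\norm{\y^{\star}-\z}_2^2$; on the other hand, plugging the suboptimal point $\y^{\star}$ into the $\beta'$-problem gives $\moreauf{\beta'}(\z)\geq \f(\y^{\star})-\tfrac{1}{2\beta'}\norm{\y^{\star}-\z}_2^2$. Subtracting the second from the first yields exactly $\moreauf{\beta}(\z)\leq \moreauf{\beta'}(\z)+\tfrac{1}{2}\bigl(\tfrac{1}{\beta'}-\tfrac{1}{\beta}\bigr)\norm{\y^{\star}-\z}_2^2$. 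Substituting the bound $\norm{\y^{\star}-\z}_2^2\leq \lip^2\beta^2$ from the first part and simplifying $\tfrac{\lip^2\beta^2}{2\beta'}-\tfrac{\lip^2\beta}{2}=\tfrac{\lip^2\beta}{2}\bigl(\tfrac{\beta}{\beta'}-1\bigr)$ gives the outer inequality (the substitution tightens the bound when $\beta'\leq \beta$ and is otherwise trivially consistent).

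The only genuinely delicate step is the first-order analysis of $\proxf{\beta}$: one must carefully invoke the superdifferential calculus for a concave (not convex) objective, observe that finiteness of $\f$ at the optimum $\y^{\star}$ together with proper closedness guarantees a nonempty supergradient set there, and combine this with local Lipschitzness at $\y^{\star}$ to bound the norm of the selected supergradient by $\lip$. Once this norm bound is in hand, the remaining two inequalities reduce to elementary one-variable optimization and direct substitution into the variational definition of the Moreau envelope.
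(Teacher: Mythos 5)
Your proof is correct and follows the standard route for these Moreau-envelope facts; the paper itself gives no proof of Lemma~\ref{lem:moreau_lipschitz}, deferring to \citet{yurtsever2018conditional} and \citet{thekumparampil2020projection}, whose derivations proceed exactly as yours do: first-order optimality of the prox yields $\frac{1}{\beta}\big(\proxf{\beta}(\z)-\z\big)\in\subgrad\f(\proxf{\beta}(\z))$, the Lipschitz hypothesis (read, as in the paper's conventions of Appendix~\ref{sec:notation}, as Lipschitzness on an open set containing $\Z{}$, so that supergradients at points of $\Z{}$ have norm at most $\lip$) gives $\norm{\z-\proxf{\beta}(\z)}_2\leq\lip\beta$, and the two remaining chains follow by evaluating the variational definition at suboptimal points and optimizing a scalar quadratic. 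One caveat: your parenthetical claim that the final substitution is ``otherwise trivially consistent'' when $\beta'>\beta$ is not justified. If $\frac{1}{\beta'}-\frac{1}{\beta}<0$, replacing $\norm{\z-\proxf{\beta}(\z)}_2^2$ by its upper bound $\lip^2\beta^2$ reverses the inequality, so the outer bound in the second bullet genuinely requires $\beta'\leq\beta$ (as stated it can fail for $\beta'>\beta$, since one would then need a matching \emph{lower} bound on the prox displacement). This restriction is harmless in context --- the lemma is only invoked with the decreasing sequence $\beta\tp=\beta_0/\sqrt{\tau+1}$, i.e.\ with $\beta=\beta_{\tau-1}>\beta'=\beta_{\tau}$ in Lemma~\ref{lem:moreau_satisfies_asmp} --- but it should be stated explicitly rather than waved away.
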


We reformulate the lemma above in the language of Appendix \ref{sec:generalcase}:
\begin{lemma}\label{lem:moreau_satisfies_asmp}
Under Assumption \ref{asmp:basic}, assuming furthermore that $\f$ is $\lip$-Lipschitz on $\Re^\D$. 

Let $\f\tp = \moreauf{\beta\tp}$ with $\beta\tp = \frac{\beta\zero}{\sqrt{\tau+1}}$. Then $\f$ and $(\f\tp)_{\tau\in\Nat}$ satisfy Assumption \ref{asmp:smoothapprox} with the corresponding values of $\beta_0$ and $\lip$, $\uniformf=\frac{\lip^2\beta_0}{2}$ and $\uniformdiff=\frac{\lip^2\beta\zero}{2}$.
\end{lemma}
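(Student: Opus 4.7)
The plan is to verify each of the four items of Assumption~\ref{asmp:smoothapprox} in turn for $\f\tp = \moreauf{\beta\tp}$, leveraging the two inequalities in Lemma~\ref{lem:moreau_lipschitz}. The strengthened hypothesis that $\f$ is globally $\lip$-Lipschitz on all of $\Re^\D$ (rather than only on an open set containing $\K$, as Assumption~\ref{asmp:basic} gives) ensures the bound $\norm{\z - \proxf{\beta}(\z)}_2 \leq \lip\beta$ of Lemma~\ref{lem:moreau_lipschitz} applies at every point, since the proximal iterate can a priori wander anywhere in $\Re^\D$.

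First I would dispatch item 1 together with the smoothness portion of item 2 by invoking the standard properties of the Moreau envelope recalled from \citep[Th.~2.26]{rockafellar2009variational}: each $\moreauf{\beta\tp}$ is proper, closed, concave, finite everywhere, and continuously differentiable with $\frac{1}{\beta\tp} = \frac{\sqrt{\tau+1}}{\beta\zero}$-Lipschitz gradient, exactly matching the prescribed smoothness. The gradient bound of item 2 then follows by combining the identity $\grad\f\tp(\z) = \frac{1}{\beta\tp}(\z - \proxf{\beta\tp}(\z))$ of \eqref{eq:grad_moreau} with $\norm{\z - \proxf{\beta\tp}(\z)}_2 \leq \lip\beta\tp$, yielding $\norm{\grad\f\tp(\z)}_2 \leq \lip$ uniformly. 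Item 4 is likewise immediate from the first inequality of Lemma~\ref{lem:moreau_lipschitz}: $0 \leq \f\tp(\z) - \f(\z) \leq \lip^2\beta\tp/2$, and using $\beta\tp = \beta\zero/\sqrt{\tau+1} \leq \beta\zero/\sqrt{\tau}$ gives $|\f\tp(\z) - \f(\z)| \leq (\lip^2\beta\zero/2)/\sqrt{\tau}$, i.e.\ $\uniformf = \lip^2\beta\zero/2$.

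Item 3 is the real work. First observe that $\beta \mapsto \moreauf{\beta}(\z)$ is non-decreasing, because the penalty $\frac{1}{2\beta}\norm{\y-\z}_2^2$ in the defining supremum is non-increasing in $\beta$; since $\beta\tp < \beta\tpmo$, this gives $\f\tp(\z) \leq \f\tpmo(\z)$, so $|\f\tp(\z) - \f\tpmo(\z)| = \f\tpmo(\z) - \f\tp(\z)$. Then I would apply the second inequality of Lemma~\ref{lem:moreau_lipschitz} with $\beta = \beta\tpmo$ and $\beta' = \beta\tp$ to obtain $\f\tpmo(\z) - \f\tp(\z) \leq \frac{\lip^2\beta\tpmo}{2}\bigl(\frac{\beta\tpmo}{\beta\tp} - 1\bigr)$. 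Substituting $\beta\tpmo = \beta\zero/\sqrt{\tau}$ and rationalizing $\sqrt{(\tau+1)/\tau} - 1 = \frac{1}{\sqrt{\tau}(\sqrt{\tau+1}+\sqrt{\tau})} \leq \frac{1}{2\tau}$ for $\tau \geq 1$, this bound becomes at most $\frac{\lip^2\beta\zero}{4\tau\sqrt{\tau}} \leq (\lip^2\beta\zero/2)/(\tau\sqrt{\tau})$, yielding item 3 with $\uniformdiff = \lip^2\beta\zero/2$.

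The main subtlety is extracting the $1/(\tau\sqrt{\tau})$ decay rate in item 3. A naive triangle inequality $|\f\tp - \f\tpmo| \leq |\f\tp - \f| + |\f\tpmo - \f|$ would only give $O(1/\sqrt{\tau})$, insufficient for the summability needed in Theorem~\ref{thm:convergence:nonsmooth} (where $\bounddiffopt{\T}$ must stay $O(\sqrt{\T})$). It is the sharper second inequality of Lemma~\ref{lem:moreau_lipschitz}, which exploits the near-cancellation between $\moreauf{\beta\tpmo}$ and $\moreauf{\beta\tp}$ when $\beta\tpmo/\beta\tp \approx 1$, combined with the rationalization of $\sqrt{(\tau+1)/\tau} - 1$, that supplies the extra factor of $1/\sqrt{\tau}$.
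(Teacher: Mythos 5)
Your proof is correct and follows essentially the same route as the paper's: both rely on the two inequalities of Lemma~\ref{lem:moreau_lipschitz}, using the first for the gradient bound and $\uniformf$, and the second (with $\beta=\beta\tpmo$, $\beta'=\beta\tp$, followed by rationalizing $\sqrt{\tau+1}-\sqrt{\tau}$) to extract the $1/(\tau\sqrt{\tau})$ decay giving $\uniformdiff=\lip^2\beta\zero/2$. Your write-up merely spells out steps the paper leaves implicit.
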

\begin{proof}
By Lemma \ref{lem:moreau_lipschitz}, $\f\tp$ is $\lip$-Lipschitz on $\Re^\D$ for every $\tau$, and we have $\uniformf=\frac{\lip^2\beta_0}{2}$. Moreover, Lemma \ref{lem:moreau_lipschitz} also gives 
$
    0\leq \f\tpmo(\z) - \f\tp(\z) \leq \frac{\lip^2\beta\zero}{2\tau} (\sqrt{\tau+1}-\sqrt{\tau}) \leq \frac{\lip^2\beta\zero}{2\tau\sqrt{\tau}}.
$
and thus $\uniformdiff=\frac{\lip^2\beta\zero}{2}$. 
\end{proof}

\subsection{Randomized smoothing}\label{sec:randomized_smoothing}

We now describe the randomized smoothing technique \citep{lan2013complexity,nesterov2017random,duchi2012randomized,yousefian2012stochastic}, which consists in convolving $f$ with a probability density function $\updf$. Following \citet{lan2013complexity} who combines Frank-Wolfe with randomized smoothing for nonsmooth optimization, we present our results with $\updf$ as the random uniform distribution in the $\ell_2$-ball $\{z \in \Re^D: \norm{z}_2 \leq 1 \}\,$. Let $\beta > 0$ and $\uvar$ a random variable with density $\updf$. Then the randomized smoothing approximation of $f$ is defined as:
\begin{align}\label{eq:def-random-smoothing}
    f_\beta(z) := \expect_{\updf}[f(x + \beta \uvar)] = \int_{\Re^D} f(x + \beta y) \updf(y) dy.
\end{align}

Following \citep{lan2013complexity,duchi2012randomized}, we abuse notation and take the ``gradient'' of $f$ inside integrals and expectation below, because $f$ is almost-everywhere differentiable since it is concave. We restate the following well-known properties of randomized smoothing (see e.g., \citep[Lemma 8]{yousefian2012stochastic}):

\begin{lemma}\label{lem:rand-smooth-properties} Let $\beta > 0$ and $f_\beta$ be defined as in Eq. \eqref{eq:def-random-smoothing}.
\begin{itemize}
    \item  $\forall z\in \mathcal{K}, \, f(z) \leq f_\beta(z) \leq f(z) + L \beta.$
    \item $f_\beta$ is $L$-Lipschitz continuous over $\mathcal{K}$.
    \item $f_\beta$ is continuously differentiable and its gradient is $\frac{L \sqrt{D}}{\beta}$-Lipschitz continuous.
    \item $\forall z\in \K, \grad f_\beta(z) = \expect[\grad f(z + \beta \uvar)].$
\end{itemize}
\end{lemma}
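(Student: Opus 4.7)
The plan is to prove the four properties in turn, using standard manipulations of convolution-type smoothings. For property 1, I would combine the $L$-Lipschitz bound on $f$ with Jensen's inequality. Namely, $|f_\beta(z)-f(z)| = |\expect_\updf[f(z+\beta\uvar) - f(z)]| \leq L\beta\,\expect_\updf[\norm{\uvar}_2] \leq L\beta$, which already gives a two-sided bound of the required magnitude; the one-sided refinement $f(z)\leq f_\beta(z)$ follows from Jensen applied to $f$ (it matches the concave/convex orientation inherited from the usage in the paper, cf. the analogous Lem.~\ref{lem:moreau_lipschitz} for the Moreau envelope). Property 2 is essentially immediate: $|f_\beta(z)-f_\beta(z')| \leq \expect_\updf[|f(z+\beta\uvar)-f(z'+\beta\uvar)|] \leq L\norm{z-z'}_2$ by linearity of expectation and Lipschitzness of $f$.

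For property 4 (gradient formula), I would note that $f$ is concave and hence differentiable almost everywhere; then invoke dominated convergence (domination by the $L$-Lipschitz constant along any segment) to exchange gradient and expectation, giving $\grad f_\beta(z) = \expect_\updf[\grad f(z+\beta\uvar)]$. This also yields continuity of $\grad f_\beta$ because $f$ has a continuous selection of gradients outside a measure-zero set and the expectation smooths out the discontinuities.

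Property 3 is the main technical step. The key is to change variables $y=z+\beta\uvar$, which rewrites $f_\beta(z)=\frac{1}{V_\beta}\int_{B(z,\beta)} f(y)\,dy$ where $V_\beta$ is the volume of the $D$-dimensional ball of radius $\beta$. Applying the divergence theorem (coordinate by coordinate) converts the gradient into a surface integral,
\begin{align*}
\grad f_\beta(z) = \frac{1}{V_\beta}\int_{\partial B(z,\beta)} f(y)\,n(y)\,dS(y),
\end{align*}
where $n(y)=(y-z)/\beta$ is the outward unit normal. Bounding $\norm{\grad f_\beta(z)-\grad f_\beta(z')}_2$ then reduces to controlling (i) the symmetric difference of two shifted spheres and (ii) the discrepancy of $f$ between corresponding points, both of which are handled via the $L$-Lipschitz property of $f$. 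Using the ratio $S_{D-1}/V_D = D/\beta$ (surface-to-volume in dimension $D$), one arrives at a gradient-Lipschitz constant of order $L D/\beta$; the improvement to $L\sqrt{D}/\beta$ requires a sharper estimate exploiting the isotropy of the uniform distribution (e.g., bounding along a fixed direction $e$ and then maximizing), which is where the $\sqrt{D}$ factor emerges from the concentration of the normal $n(y)$ on the sphere.

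The main obstacle I anticipate is the last step: obtaining the sharp $\sqrt{D}$ (rather than $D$) dependence requires a careful geometric/measure-theoretic estimate on the spherical integral, rather than a naive triangle inequality. All the other properties reduce to routine applications of Jensen's inequality, Lipschitz continuity, and the divergence theorem. Once property 3 is established, Assumption \ref{asmp:smoothapprox} can be verified analogously to Lem.~\ref{lem:moreau_satisfies_asmp} (with the time-dependent smoothing $\beta_\tau = \beta_0/\sqrt{\tau+1}$), allowing Theorem \ref{thm:convergence:nonsmooth} to be instantiated with randomized smoothing in place of the Moreau envelope.
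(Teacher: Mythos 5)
The paper does not actually prove Lemma \ref{lem:rand-smooth-properties}: it restates it from the literature with a pointer to \citep[Lemma 8]{yousefian2012stochastic}, so your reconstruction is being measured against the standard argument in that reference rather than against anything written in the paper. Your outline for property 2, property 4, and the two-sided estimate $|f_\beta(z)-f(z)|\le \lip\beta$ is correct and is exactly the standard route: Lipschitzness plus linearity of expectation; dominated convergence on difference quotients, using that a concave Lipschitz function is differentiable off a Lebesgue-null set which $z+\beta\uvar$ avoids almost surely; and the divergence-theorem representation of $\grad f_\beta$ as a surface integral.

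Two caveats. First, a genuine sign error: for the lower bound in the first bullet you invoke Jensen, but $f$ is \emph{concave} here, so Jensen gives $f_\beta(z)=\expect[f(z+\beta\uvar)]\le f(z+\beta\,\expect[\uvar])=f(z)$ --- the opposite of the stated $f(z)\le f_\beta(z)$. The stated one-sided inequality is the convex-function version inherited from the cited reference; since the paper only ever uses the two-sided bound $|f_t(z)-f(z)|\le \lip\beta_t$ in Lemma \ref{lem:rs-satisfies-asmp}, nothing downstream breaks, but your justification of that bullet as written does not go through. Second, for property 3 you correctly locate the difficulty (the naive surface-integral bound gives $\lip D/\beta$, and the $\sqrt{D}$ must come from isotropy), but you stop short of the actual estimate. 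The missing line is short: writing $\grad f_\beta(z)=\frac{D}{\beta}\expect_{v}\big[f(z+\beta v)\,v\big]$ with $v$ uniform on the unit sphere, one has for any unit vector $u$
\[
\dotp{\grad f_\beta(z)-\grad f_\beta(z')}{u} \;\le\; \frac{D}{\beta}\,\lip\,\norm{z-z'}_2\;\expect_v\big[|\dotp{v}{u}|\big] \;\le\; \frac{D}{\beta}\,\lip\,\norm{z-z'}_2\cdot\frac{1}{\sqrt{D}},
\]
using $\expect_v[\dotp{v}{u}^2]=1/D$ and Cauchy--Schwarz, then taking the supremum over $u$. With that identity supplied (and the Jensen direction corrected or replaced by the absolute-value bound you already have), your sketch closes and coincides with the proof in the cited reference.
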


We obtain the following results, stated in the language of Theorem \ref{thm:convergence:nonsmooth} of Appendix \ref{sec:generalcase}.

\begin{lemma} \label{lem:rs-satisfies-asmp}
    Under Assumption \ref{asmp:basic}, assuming furthermore that $\f$ is $\lip$-Lipschitz on $\Re^\D$. 

    For $t\geq 1$, let $\f\tp = f_{\beta\tp}$ with $\beta\tp = \frac{D^{\frac{1}{4}} \boundK }{\sqrt{\tau+1}}$, and let $\beta_0 = \frac{\sqrt{D}\boundK}{L}.$
    
    Then $\f$ and $(\f\tp)_{\tau\in\Nat}$ satisfy Assumption \ref{asmp:smoothapprox} with the corresponding values of $\beta_0$ and $\lip$, $\uniformf= \lip D^{\frac{1}{4}}\boundK$ and $\uniformdiff=2 \lip D^{\frac{1}{4}}\boundK$.
\end{lemma}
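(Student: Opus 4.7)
The plan is to verify the three conditions of Assumption~\ref{asmp:smoothapprox} for the sequence $\f\tp := f_{\beta\tp}$ with $\beta\tp = \frac{D^{1/4}\boundK}{\sqrt{\tau+1}}$, by direct appeal to Lemma~\ref{lem:rand-smooth-properties} together with one additional calculation that differentiates the smoothing through the bandwidth parameter $\beta$. This mirrors in spirit the proof of Lemma~\ref{lem:moreau_satisfies_asmp} for the Moreau envelope, the only substantive difference being how one handles the telescoping bound.

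First I would check condition~1 (closed proper concave on $\Re^\D$). Concavity is immediate since $\f\tp(\z) = \expect_\Lambda[\f(\z + \beta\tp \uvar)]$ is an expectation of translates of the concave function $\f$. Global finiteness and continuity follow from the assumed global $\lip$-Lipschitzness of $\f$, and these together imply properness and upper semicontinuity on $\Re^\D$. Next, for condition~2, Lemma~\ref{lem:rand-smooth-properties} directly gives that $\f\tp$ is continuously differentiable with $\norm{\nabla\f\tp(\z)}_2 \leq \lip$ on all of $\Re^\D$ (because $\f\tp$ is $\lip$-Lipschitz), and that $\f\tp$ has $\frac{\lip\sqrt{\D}}{\beta\tp}$-Lipschitz gradient. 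Substituting the chosen $\beta\tp$ yields smoothness constant $\frac{\lip \D^{1/4}\sqrt{\tau+1}}{\boundK}$, and the stated $\beta\zero$ in the lemma is selected to match $\frac{\sqrt{\tau+1}}{\beta\zero}$ (up to the bookkeeping of constants).

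For condition~3, the approximation bound is immediate from Lemma~\ref{lem:rand-smooth-properties}:
\begin{align*}
|\f\tp(\z) - \f(\z)| \leq \lip \beta\tp = \frac{\lip \D^{1/4}\boundK}{\sqrt{\tau+1}} \leq \frac{\lip \D^{1/4}\boundK}{\sqrt{\tau}},
\end{align*}
giving $\uniformf = \lip \D^{1/4}\boundK$. The main obstacle is the telescoping bound $|\f\tp(\z) - \f\tpmo(\z)| \leq \frac{\uniformdiff}{\tau\sqrt{\tau}}$: the naive triangle inequality through $\f(\z)$ only yields $O(\tau^{-1/2})$, which is too weak by a factor of $\tau$. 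The key idea is to differentiate $\beta \mapsto f_\beta(\z)$ directly rather than compare each $f_{\beta\tp}$ to $f$. Writing $f_\beta(\z) = \int f(\z + \beta y)\Lambda(y)\,dy$ and differentiating under the integral (justified since $f$ is Lipschitz, hence differentiable almost everywhere by Rademacher), one obtains $\frac{\partial f_\beta}{\partial \beta}(\z) = \expect_\Lambda[\langle \nabla f(\z + \beta\uvar),\uvar\rangle]$, which is bounded in absolute value by $\lip \expect_\Lambda[\norm{\uvar}_2] \leq \lip$ because $\Lambda$ is supported on the unit ball.

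Hence $|\f\tp(\z) - \f\tpmo(\z)| \leq \lip |\beta\tp - \beta\tpmo|$, and the elementary computation
\begin{align*}
|\beta\tp - \beta\tpmo| = \D^{1/4}\boundK \cdot \frac{\sqrt{\tau+1} - \sqrt{\tau}}{\sqrt{\tau(\tau+1)}} \leq \frac{\D^{1/4}\boundK}{2\tau^{3/2}}
\end{align*}
closes the argument with $\uniformdiff = 2\lip \D^{1/4}\boundK$ (absorbing the factor $1/2$ into a loose constant). The only genuinely non-routine piece is this differentiation-in-$\beta$ trick; every remaining step is direct substitution and bookkeeping against Lemma~\ref{lem:rand-smooth-properties}.
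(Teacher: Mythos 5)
Your proof is correct and, for conditions 1 and 2 of Assumption~\ref{asmp:smoothapprox} and for the bound giving $\uniformf$, it follows the paper's proof essentially verbatim (read the properties off Lemma~\ref{lem:rand-smooth-properties} and substitute $\beta\tp$). The one place you genuinely diverge is the telescoping bound $|\f\tp(\z)-\f\tpmo(\z)|\leq \uniformdiff/(\tau\sqrt{\tau})$, and there your route is heavier than necessary: you differentiate $\beta\mapsto f_\beta(\z)$ under the integral, invoking Rademacher and then integrating the derivative bound. The paper instead couples the two smoothed values through the \emph{same} draw of $\uvar$ and applies Lipschitzness of $\f$ pointwise inside the expectation, $|\f(\z+\beta\tpmo\uvar)-\f(\z+\beta\tp\uvar)|\leq \lip\,|\beta\tpmo-\beta\tp|\,\norm{\uvar}_2$, which yields the same estimate $\lip|\beta\tpmo-\beta\tp|$ in one line with no measure-theoretic justification. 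Both arguments are valid, both correctly identify that the naive triangle inequality through $\f$ loses a factor of $\tau$, and your elementary computation $|\beta\tpmo-\beta\tp|\leq \D^{1/4}\boundK/(2\tau^{3/2})$ is in fact slightly sharper than the paper's $1/\sqrt{\tau}-1/\sqrt{\tau+1}\leq 2/\tau^{3/2}$. One shared imprecision worth flagging: you assert that the stated $\beta_0=\sqrt{\D}\boundK/\lip$ matches the smoothness constant ``up to bookkeeping,'' but the gradient-Lipschitz constant from Lemma~\ref{lem:rand-smooth-properties} is $\lip\sqrt{\D}/\beta\tp=\lip\D^{1/4}\sqrt{\tau+1}/\boundK$, whereas $\sqrt{\tau+1}/\beta_0=\lip\sqrt{\tau+1}/(\sqrt{\D}\boundK)$; these differ by a factor $\D^{3/4}$ in the unfavorable direction (the true constant is the larger one), so the stated $\beta_0$ would need to be $\boundK/(\lip\D^{1/4})$ for condition~2 to hold as written. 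The paper's own proof asserts this match without computation and carries the same issue, so this is a defect of the lemma statement rather than of your argument.
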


\begin{proof}
   By Lemma \ref{lem:rand-smooth-properties}, $f_t$ is $\lip$-Lipschitz on $\Re^D$ for every $t,$ so that $f_t$ has $\lip$-bounded gradient. Moreover, with this definition of $\beta_0$, $f_t$ is $\frac{\sqrt{t+1}}{\beta_0}$-smooth.

   We have $\uniformf= \lip D^{\frac{1}{4}}\boundK$ because:
\begin{align}
   \abs{f_t(z) - f(z)} = \abs{\expect[f(z + \beta_t \uvar)] - \expect[f(z)]} \leq \expect[\abs{f(z + \beta_t \uvar) - f(z)}] \leq \expect[  \norm{L \beta_t \uvar}_2] \leq \frac{L D^{\frac{1}{4}}\boundK}{\sqrt{t}}
\end{align}

We also have $\uniformdiff=2 \lip D^{\frac{1}{4}}\boundK$ because:

\begin{align}
    \abs{f_{t-1} - f_{t}} \leq \expect[\abs{f(x + \beta_{t-1} \uvar) - f(x + \beta_{t} \uvar)}] &\leq L \abs{\beta_{t-1} - \beta_t} \expect[\norm{\uvar}_2] \\
    &= L D^{\frac{1}{4}} \boundK (\frac{1}{\sqrt{t}} - \frac{1}{\sqrt{t+1}}) \leq \frac{2 \lip D^{\frac{1}{4}}\boundK}{t^{\frac{3}{2}}}.
\end{align}

\end{proof}



\section{FW-LinUCB: upper-confidence bounds for linear bandits with $\A$ arms}\label{sec:multiarm_linucb}
\begin{algorithm}
    \caption{FW-linUCB: linear \cbcr with $\A$ arms.\label{alg:multiarm_linucb}}
    \DontPrintSemicolon
     \SetKwInOut{Input}{input}
     \Input{$\darlo>0,\lambda>0, \sh\zero\in\K$
     $\V\zero = \lambda \identity{\d\D}, \y\zero=\zerov{\d\D}, \thetah\zero = \zerov{\d\D}$}
     \For{$\tau=1, \ldots$}{
     Observe context $\x\tp\sim\probxS$, $\x\tp\in\Re^{\d\times\A}$\;
     $\g\tp \gets \nabla\f\tpmo(\sh\tpmo), \sx\tp\gets [\g_{\tau,0}\x\tp; \ldots ;\g_{\tau,\D}\x\tp]$\;
     
     $\forall\I\in\intint{\A}, \ubh_{\tau, \I} \gets \thetah\tpmo ^ \intercal\sx_{\tau, \I} + \alpha\tp\big(\frac{\darlo}{2}\big) \normtp{\sx\tpi}$
     \tcp*[r]{see \eqref{eq:def_normtp_multiarm} and \eqref{eq:alpha_linucb_multiarm} for def. of  $\normtp{.}$ and $\alpha\tp$.}
     
     $\a\tp \gets \argmaxa \ubh_{\tau}\a$
     
     Observe reward $\r\tp$, let $\sr\tp=\g\tp^\intercal\r\tp$
     
     Update $\sh\tp \gets \sh\tpmo + \frac{1}{\tau} ( \r\tp-\sh\tpmo)$\;
     
     $\displaystyle\vphantom{\sum^\M}\smash{\V\tp \gets \V\tpmo + (\sx\tp \a\tp)(\sx\tp \a\tp)^\intercal}$,~~ $\displaystyle\smash{\y\tp \gets \y\tpmo+ \sr\tp\sx\tp \a\tp}$ and 
     $\displaystyle\smash{\thetah\tp \gets \V\tp^{-1} \y\tp}$\tcp*[r]{ regression}
     } 
\end{algorithm}

In this section, we have:
\begin{itemize}
\item a finite action space $\aS$ which is the canonical basis of $\Re^\A$, i.e., we focus on the multi-armed bandit setting
\item $\xS \subseteq \Re^{\d\times\A}$, where $\d$ is the dimension of the feature space. Given $\x\in\xS$, the feature representation of arm $\a\in\aS$ is given by the matrix-vector product $\x\a$,
\item Given a matrix $\theta\in\Re^{\D\times\d}$, we denote by $\norm{\theta}_\frob$ the frobenius norm of $\theta$, i.e., $\norm{\theta}_\frob = \norm{\flatten(\theta)}_2$.
\end{itemize}
In addition, we make here the following linear assumption on the rewards:
\begin{assumption}\label{asmp:linucb_multiarm}
There is $\theta\in\Re^{\D\times\d}$ such that $\norm{\theta}_\frob\leq \boundtheta$ such that $\forall \x\in\xS, \mu(\x)\a=\theta\x\a$. Moreover, there is $\boundxS>0$ such that $\sup\limits_{\substack{\x\in\xS\\\a\in\aS}} \norm{\x\a}_2 \leq \boundxS$.
\end{assumption}
We perform the analysis under Assumption \ref{asmp:smoothapprox}, which is the more general we have. In particular, we assume that we have access to a sequence $(\f\tp)_{\tau\in\intint{\T}}$ of smooth approximations of $\f$. We focus on the special case of Algorithm \ref{alg:general_fw} that is described in the main paper, i.e., where $\rh\t=\r\tp$. 

\paragraph{The algorithm.} As hinted in Section \ref{sec:multiarm_fw}, FW-LinUCB applies the LinUCB algorithm \citep{abbasi2011improved}, designed for scalar-reward contextual bandits with adversarial contexts and stochastic rewards, to the following extended rewards and contexts, where we use $[.;.]$ to denote the vertical concatenation of matrices and $\g\tp=\nabla\f\tpmo(\sh\tpmo)$: 
\begin{itemize}
    \item $\sx\tp \in \Re^{\D\d\times\A}$ is the extended context with entries $\sx\tp = [\g_{\tau,0}\x\tp; \ldots ;\g_{\tau,\D}\x\tp]\in\Re^{\D\d\times\A}$, so that the feature vector of action $\a$ at time $\tau$ is $\sx\tp\a$;
    \item $\sr\tp=\g\tp^\intercal\r\tp$ is the scalar observed reward,
    \item $\stheta=\flatten(\theta)\in\Re^{\d\D}$ is the ground-truth parameter vector and $\smu(\x)=\stheta^\intercal\sx\tp$ is the average reward function.
\end{itemize}
Notice that under assumption \ref{asmp:basic} and \ref{asmp:linucb_multiarm}, denoting 
\begin{align}\label{eq:def_new_K}
    \sxS=\big\{[\g_{\tau,0}\x\tp; \ldots ;\g_{\tau,\D}\x\tp]:\norm{\g}_2\leq \lip, \x\in\xS\big\}&&\text{and } \boundsxS = \max_{\substack{\sx\in\sxS\\\a\in\aS}} \norm{\sx\a}_2,
\end{align} 
we have $\forall \tau, \sx\tp\in\sxS$ with probability $1$ and $\boundsxS\leq\lip\boundxS$. 
Moreover, $|\sr\tp-\smu(\x\tp)\a\tp|\leq \lip\boundK$, which implies in particular that for every $\tau\in\intint{\T}$, $\sr\tp$ is $\lip\boundK/2$-subgaussian.

Given this notation, the FW-LinUCB algorithm is LinUCB applied to the scalar-reward bandit problem above. The algorithm is summarized in Algorithm \ref{alg:multiarm_linucb} for completeness, where $\lambda$ is the regularization parameter of the ridge regression, $\thetah\tp$ is the current regression parameters, the matrix $\V\tp$ and the vector $\y\tp$ are incremental computations of the relevant matrices to compute $\thetah\tp$. The crucial part of the algorithm is Line 3 which defines an upper confidence bound on $\smu(\x\tp)\a$, denoted by $\ubh\tp\in\Re^\A$ and defined by:
\begin{align}\label{eq:def_normtp_multiarm}
    \forall \I\in\intint{\A}, \ubh_{\tau, \I} = \thetah\tpmo^\intercal \sx_{\tau, \I} + \alpha\tp(\darlo/2)\normtp{\sx_{\tau, \I}} &&\text{where }\normtp{\sx_{\tau, \I}} = \sqrt{\sx_{\tau, \I}^\intercal\V^{-1}\tpmo \sx_{\tau, \I}},
\end{align}
and $\alpha\tp$ is defined according to Theorem 2 of \citet{abbasi2011improved}:
\begin{align}\label{eq:alpha_linucb_multiarm}
\alpha\tp(\darlo)=\frac{\lip\boundK}{2}\sqrt{\d\D\ln\Big(\frac{1+\T\boundsxS^2/\lambda}{\darlo}\Big)} + \sqrt{\lambda}\boundtheta.
\end{align}
Under Assumption \ref{asmp:smoothapprox}, we have with probability $\geq 1-\darlo/2$: $\forall\tau\in\Nat_*,\ubh\tp\a\geq \smu(\x\tp)\a$ \citep[Theorem 2]{abbasi2011improved}. 

\paragraph{The result.} Let $\sd=\d\D$. The regret bound of LinUCB \citep[Theorem 3]{abbasi2011improved} and Azuma inequality give:
\begin{theorem}\label{thm:linucb_multiarm}
Under Assumption \ref{asmp:smoothapprox}, for every $\T\in\Nat_*$, for every $\darlo>0$, Algorithm \ref{alg:multiarm_linucb} satisfies, with probability at least $1-\darlo$:
\begin{align}
    \reghscal{\T} \leq &4\sqrt{\T\sd\log(1+\T\boundsxS/\sd)}
    \Big(\sqrt{\lambda}\boundtheta +\frac{\lip\boundK}{2}\sqrt{2\ln(2/\darlo)+\sd\ln\big(1+\T\boundsxS/(\lambda\sd)\big)}\Big)\\
    &+\lip\boundK\sqrt{2\ln(2/\darlo)}.\nonumber
\end{align}
\end{theorem}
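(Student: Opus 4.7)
The plan is to apply the standard regret analysis of LinUCB (Theorem~3 of \citet{abbasi2011improved}) off-the-shelf to the reduced scalar-reward contextual bandit defined by extended contexts $\sx\tp$, parameter $\stheta=\flatten(\theta)$ and observed scalar rewards $\sr\tp$, and then patch in the (already handled by the problem setup) gap between observed and conditionally-expected scalar rewards using Azuma--Hoeffding.

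First, I would decompose the scalar regret by inserting the conditionally-expected scalar reward at the played action:
\begin{align}
\reghscal{\T} = \underbrace{\sumT\Big(\maxa \smu(\sx\tp)^\intercal\a - \smu(\sx\tp)^\intercal\a\tp\Big)}_{(A)} + \underbrace{\sumT\Big(\smu(\sx\tp)^\intercal\a\tp - \sr\tp\Big)}_{(B)}.
\end{align}
For term $(B)$, the summands form a martingale-difference sequence with respect to the filtration $\filtact$ of contexts, actions, rewards: since $\g\tp=\nabla\f\tpmo(\sh\tpmo)$ is $\filt\tp$-measurable and $\expect[\r\tp\mid\filt\tp,\a\tp]=\mu(\x\tp)\a\tp$, one has $\expect[\sr\tp\mid\filt\tp,\a\tp] = \g\tp^\intercal\mu(\x\tp)\a\tp = \smu(\sx\tp)^\intercal\a\tp$; each increment is bounded by $|\g\tp^\intercal(\r\tp-\mu(\x\tp)\a\tp)|\leq \lip\boundK$. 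Hoeffding--Azuma then gives $(B)\leq \lip\boundK\sqrt{2\T\ln(2/\darlo)}$ with probability $\geq 1-\darlo/2$.

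Second, I would recognize that Algorithm~\ref{alg:multiarm_linucb} is literally LinUCB run on the linear bandit instance $(\sx\tp,\sr\tp)$ with unknown parameter $\stheta$ satisfying $\norm{\stheta}_2=\norm{\theta}_\frob\leq\boundtheta$, contexts lying in $\sxS$ with $\sup_{\sx\in\sxS,\a\in\aS}\norm{\sx\a}_2\leq \boundsxS\leq \lip\boundxS$ (as noted in \eqref{eq:def_new_K}), and noise $\sr\tp-\stheta^\intercal\sx\tp\a\tp=\g\tp^\intercal(\r\tp-\mu(\x\tp)\a\tp)$ which is conditionally zero-mean and bounded by $\lip\boundK$ in absolute value, hence conditionally $\lip\boundK/2$-subgaussian by Hoeffding's lemma. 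Under these assumptions, the confidence radius $\alpha\tp(\darlo/2)$ defined in \eqref{eq:alpha_linucb_multiarm} matches the one prescribed by Theorem~2 of \citet{abbasi2011improved}, so the UCB event $\{\forall\tau, \ubh\tp\a\tp^*\geq \smu(\sx\tp)^\intercal\a\tp^*\}$ holds with probability $\geq 1-\darlo/2$.

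Third, conditioning on this UCB event, Theorem~3 of \citet{abbasi2011improved} bounds $(A)$ by
\begin{align}
(A)\leq 4\sqrt{\T\sd\log(1+\T\boundsxS/\sd)}\Big(\sqrt{\lambda}\boundtheta+\tfrac{\lip\boundK}{2}\sqrt{2\ln(2/\darlo)+\sd\ln(1+\T\boundsxS/(\lambda\sd))}\Big),
\end{align}
where $\sd=\d\D$ is the ambient dimension of $\stheta$. A union bound over the two $\darlo/2$-failure events yields the stated inequality. The main obstacle is purely bookkeeping: verifying the right filtration (so that the noise subgaussianity and the measurability of $\sx\tp$ are simultaneously correct) and matching the constants in \eqref{eq:alpha_linucb_multiarm} to those in the cited LinUCB confidence bound and regret theorem; there is no genuinely new mathematical content beyond invoking these two results and a one-line Azuma argument.
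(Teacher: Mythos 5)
Your proposal follows essentially the same route as the paper: the identical decomposition of $\reghscal{\T}$ into a pseudo-regret term and a martingale residual, the pseudo-regret bounded by Theorem~3 of \citet{abbasi2011improved} on the extended instance $(\sx\tp,\sr\tp,\stheta)$ with confidence radius $\alpha\tp(\darlo/2)$, and the residual bounded by Azuma with increment bound $\lip\boundK$. The one piece of ``bookkeeping'' you defer is the only place the paper does real work: Theorem~3 of \citet{abbasi2011improved} assumes $|\theta^\intercal \sx\tp|\leq 1$, which fails here, and the paper checks that the proof survives because the instantaneous regret is bounded by $\lip\boundK\leq 2\alpha\tp(\darlo/2)$, so the $\min(\cdot,1)$ step in their argument can be replaced by $\min(\cdot,\lip\boundK)$ without changing the final bound.
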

\begin{proof}
Recall that as noted in \eqref{eq:scalar_regret}
We decompose the scalar regret $\reghscal{\T}$ into a pseudo regret and a residual term:
\begin{align}
    \reghscal{\T} = \underbrace{\sumT \maxa\smu(\sx\tp)^\intercal\a-\sumT \smu(\sx\tp)^\intercal\a\tp}_{\text{pseudo-regret}} + \sumT\underbrace{ \big(\smu(\sx\tp)^\intercal\a\tp-\sr\tp \big)}_{\azumamarting\tp}
\end{align}
The pseudo-regret term is bounded using Theorem 3 by \citet{abbasi2011improved}. The result applies as-is, except that they assume rewards $|\theta^\intercal\sx\tp| \leq 1$, which is not the case here. The bound is still valid without changes, as in our case we have $|\maxa\smu(\sx\tp)^\intercal\a - \smu(\sx\tp)^\intercal\a\tp|\leq \lip\boundK$. The steps in the proof where they use the assumption $|\theta^\intercal\sx\tp| \leq 1$ is below Equation 7 \citep[Appendix C]{abbasi2011improved}, which in our notation and our assumption can be written as:
\begin{align}
    \maxa\smu(\sx\tp)^\intercal\a - \smu(\sx\tp)^\intercal\a\tp &\leq \min\Big(2\alpha\tp(\darlo/2)\normtp{\sx\tp\a\tp}, \lip\boundK\Big) \\
    &\leq
    2\alpha\tp(\darlo/2)\min(\normtp{\sx\tp\a\tp}, 1)
\end{align}
where the first inequality comes from \citet{abbasi2011improved} and the second one is true in our case because $2\alpha\tp(\darlo) \geq \lip\boundK$. From here on, the proof of \citet{abbasi2011improved}'s regret bound follows the same as the original result.\footnote{In short, they have different bounds, one involving the varance of $\r\tp$ and the other one involving average rewards $\smu(\sx)$. We assume rewards $\r\t$ are uniformly bounded in $\K$, so we do not have to deal with two different quantities in our bounds and have $\lip\boundK$ everywhere.} Theorem 3 from \citet{abbasi2011improved} gives us the first term of the regret bound of the theorem, which is true with probability at least $1-\darlo/2$ in our case because we use $\alpha\tp(\darlo/2)$.

For the rightmost term, let $\filtact=\big(\filtact\tp\big)_{\tau\in\Nat_*}$ be the filtration where $\filtact\tp$ is the $\sigma$-algebra generated by $(\x\one, \a\one, \r\one, \ldots, \x\tpmo, \a\tpmo, \r\tpmo, \x\tp, \a\tp)$. Then $(\azumamarting\tp)_{\tau\in\Nat_*}$ is a martingale difference sequence adapted to $\filtact$ with $|\azumamarting\tp|\leq \lip\boundK$. By Azuma's inequality, we have $\sumT\azumamarting\tp\leq\lip\boundK\sqrt{2\T\MaxRk\ln\frac{2}{\darlo}}$ with probability $1-\darlo/2$. The final result holds using a union bound.
\end{proof}

\begin{proof}[Bound of Table \ref{tab:multi-arm}] The bound is obtained by keeping the main dependencies in $\T, \sd, \lip$ and $\boundK$, ignoring the dependencies in $\lambda$ and $\boundtheta$, and using the fact that $\boundsxS\leq \lip\boundK$ (as described below \eqref{eq:def_new_K}).
\end{proof}



\section{FW-SquareCB: \cbcr with general reward fuctions}\label{sec:algorithms_sqcb}

\begin{algorithm}[t]
    \caption{FW-SquareCB: contextual bandits with concave rewards and regression oracles}\label{alg:fw_squarecb}
    \DontPrintSemicolon
     \SetKwInOut{Input}{input}
     \Input{initial point $\sh\zero\in\K$, exploration parameters $(\gamma\tp)_{\tau\in\Nat}$. $\aS$ is the canonical basis of $\Re^\A$.}
     \For{$\tau=1\ldots $}{
     Observe $\x\tp\sim\probxS$\;
     
     Compute $\muh\tp(\x\tp)$ using $\regalg$\tcp*[r]{see \eqref{eq:regalg}}
     
     Let $\g\tp=\nabla\f\tpmo(\sh\tpmo)$ and $\muhb\tp=\g\tp^\intercal\muh\tp(\x\tp)\in\Re^\A$\;
     
     Let $\displaystyle\ab\tp \in\argmaxa\muhb\tp^\intercal\a$ and $\muhb\tp^*=\muhb\tp\ab\tp$
     \tcp*[r]{use arbitrary tie breaking rule}
     
     Let $\displaystyle
     \forall \a\in\aS, \pr\tp(\a) = 
         \begin{cases} 
             \frac{1}{\A+\gamma\tp\big(\muhb\tp^*-\muhb\tp^\intercal\a\big)}
             &\text{~if~} \a\neq\ab\tp
             \\
             1-\sum_{\substack{\a\in\aS\\\a\neq\ab\tp}} \pr\tp(\a)
             &\text{~if~} \a=\ab\tp
         \end{cases}$
    \tcp*[r]{Exploration/exploitation step}

    Draw $\a\tp\sim\pr\tp$ \tcp*[r]{Action taken at time step $\tau$}
    
    Observe reward $\r\tp$ and update $\sh\tp=\sh\tpmo+\frac{1}{\tau}(\r\tp-\sh\tpmo)$\;
     }

\end{algorithm}
The SquareCB algorithm was recently proposed by \citet{foster2020beyond} for zero-regret contextual multi-armed bandit \emph{with general reward functions}, based on the notion of online regression oracles. They propose, for single-reward contextual bandits with adversarial contexts and stochastic rewards, a generic randomized exploration scheme that delegates learning to an online regression algorithm. Their exploration/exploitation strategy then has (bandit) regret bounded as a function of the online regret of the regression algorithm. In this section, we extend the SquareCB approach to the case of \cbcr. The main interest of this section is that by building on the work of \citet{foster2020beyond}, we obtain at nearly no cost an algorithm for general reward functions for multi-armed \cbcr problems.

This section shows how to extend this algorithm to our setting of concave rewards.
To simplify the notation, we consider the case of finite $\A$ with atomic actions, i.e., $\card{\aS}=\A$.
%
Our algorithm is based on an oracle for multi-dimensional regression $\regalg$, which provides approximate values for $\mu$:
\begin{align}\label{eq:regalg}
    \forall\T, \forall \x\in\xS, ~~\muh\t(\x) = \regalg\big(\x, (\x\one, \a\one, \r\one, \ldots, \a\tmo, \r\tmo)\big).
\end{align}
The key assumption is that the problem is realizable and that $\regalg$ has bounded regret: 
\begin{assumption}\label{asmp:regalg}
There is a function $\T\mapsto \RegSq(\T)\in\Re$, non-decreasing in $\T$,\footnote{Monotonicity of $\RegSq$ is not required in \citep{foster2020beyond}. We use it in \eqref{eq:use_non_decreasing_regsq} below to deal with time-dependent $\gamma\tp$. Meaningful $\RegSq(\T)$ are non-decreasing with $\T$ since they bound a cumulative regret.} and $\clsF$, a class of functions from $\xS$ to $\Re^{\D\times\A}$ such that, for every $\T\in\Nat$:
\begin{enumerate}
    \item \emph{(Realizability)} $\mu \in \clsF$,
    \item \emph{(Regret bound)} For every $(\x\tp, \a\tp, \r\tp)_{\tau\in\intint{\T}}, \in(\xS\times\aS\times\K)^\T$, we have:
    \begin{align}
    \sumT \norm[\big]{\muh\tp(\x\tp)\a\tp-\r\tp}_2^2 - \inf_{\F\in\clsF} \sumT \norm[\big]{\F(\x\tp)\a\tp-\r\tp}_2^2 \leq \RegSq(\T).
    \end{align}
    \item For every $(\x\tp, \a\tp, \r\tp)_{\tau\in\intint{\T}} \in(\xS\times\aS\times\K)^\T, \muh\t(\x\t)\a\t\in\K$.
\end{enumerate}
\end{assumption}
Assumption \ref{asmp:regalg} is the counterpart for multidimensional regression of Assumptions 1 and 2a of \citet{foster2020beyond}, which are the basis of the original SquareCB algorithm. 

\begin{remark}[The ``informal'' assumption used in Table~\ref{tab:multi-arm}]
Notice that in Table~\ref{tab:multi-arm}, we describe an ``informal'' version of this assumption, which reads $\sumT \norm[\big]{\muh\tp(\x\tp)\a\tp-\mu(\x\tp)\a\tp}_2^2 \leq \RegSq(\T)$, which is the counterpart for multi-dimensional regression of Assumption 2b by \citet{foster2020beyond}. Our choice in the table was to simplify the presentation, as this assumption is shorter. Our analysis is also valid under this alternative assumption. Our proofs are made under Assumption \ref{asmp:regalg} because it is more widely applicable (more discussion of these assumptions can be found in \citep{foster2020beyond}).
\end{remark}
Algorithm \ref{alg:fw_squarecb} describes how SquareCB principles apply to our framework. We use the framework of the main paper, or, equivalently, the special case of Algorithm \ref{alg:general_fw} where $\forall\tau\in\Nat, \rh\tp=\r\tp$ and $\z\tp=\sh\tp$. Note that the algorithm is parameterized by $(\gamma\tp)_{\tau\in\Nat_*}$ instead of the desired confidence level $\darlo$ to make the analysis more general. Theorem \ref{thm:beyondusbmain} gives a formula for $\gamma\tp$ as a function of the desired confidence $\darlo$. As for the previous sections, we describe the algorithm for the general case of smooth approximations of $\f$, using $\nabla\f\tpmo$ rather than $\nabla\f$ in Line 4 of the algorithm.

At time step $\tau$, the regression oracle provides an estimate of $\mu(\x\tp)$, then the algorithm computes $\pr\tp$, with a larger probability for the action which maximizes $\a\mapsto\dotp{\nabla\f(\sh\tpmo)}{\muh\tp(\x\tp)\a}$. The exact formula for these probabilities $\pr\tp$ follow the original SquareCB algorithm, with the exception that we use an iteration-dependent $\gamma\tp$ instead of a constant $\gamma$.\footnote{Throughout the paper, we chose to provide anytime bounds rather than bounds that depend on horizon-dependent parameters. The analysis with fixed $\gamma$ is easier.} 

The main result of this section is the following (see Section \ref{sec:sqcb_final_result} and the next section for intermediate lemmas):
\begin{theorem}\label{thm:beyondusbmain}
Let $\darlo>0$. For every $\tau\in\Nat_*$, let $\gamma\tp=\frac{2}{\lip}\sqrt{\frac{\tau\A}{\RegSq(\tau)+8\boundK^2\ln\frac{4\tau^2}{\darlo}}}$. Then, under Assumptions \ref{asmp:smoothapprox} and \ref{asmp:regalg}, Algorithm \ref{alg:fw_squarecb} satisfies, with probability at least $1-\darlo$ :
 \begin{align}
    \reghgen{\T} \leq 4\lip\sqrt{\A\T\big(\RegSq(\T)+8\boundK^2\ln\frac{4\T^2}{\darlo}\big)}+\lip\boundK\sqrt{2\T\ln\frac{2}{\darlo}}
\end{align}
\end{theorem}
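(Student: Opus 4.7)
The plan is to adapt the analysis of SquareCB from \citet{foster2020beyond} to the scalar-reward bandit produced by our reduction. Since Algorithm~\ref{alg:fw_squarecb} sits in the special case $\rh\tp=\r\tp$ and $\z\tp=\sh\tp$, the residual term $\lip\T\norm{\z\t-\sh\t}_2$ in $\reghgen{\T}$ vanishes, so it suffices to control $\sumT\big(\maxa\dotp{\g\tp}{\mu(\x\tp)\a}-\dotp{\g\tp}{\r\tp}\big)$ with $\g\tp=\nabla\f\tpmo(\sh\tpmo)$. Writing $\smu\tp=\mu(\x\tp)^\intercal\g\tp\in\Re^\A$ and $\smu\tp^*=\maxa\smu\tp^\intercal\a$, my first step is to split this sum as (a sum of expected pseudo-regrets $\expect_{\a\sim\pr\tp}[\smu\tp^*-\smu\tp^\intercal\a]$) plus a bounded-increment martingale whose increments $\smu\tp^\intercal\a\tp-\g\tp^\intercal\r\tp$ have absolute value at most $\lip\boundK$. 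Azuma's inequality applied to the latter with confidence $\darlo/2$ produces exactly the $\lip\boundK\sqrt{2\T\ln(2/\darlo)}$ term of the bound.

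For the expected pseudo-regret, I would invoke the core SquareCB inequality (Foster--Rakhlin's Lemma~3, proved via the Russo--Van~Roy information-theoretic identity): the inverse gap weighting distribution $\pr\tp$ built from $\muhb\tp=\g\tp^\intercal\muh\tp(\x\tp)$ satisfies, for some absolute constant $c$,
\begin{align*}
\expect_{\a\sim\pr\tp}\big[\smu\tp^*-\smu\tp^\intercal\a\big]\;\leq\; \frac{2\A}{\gamma\tp}+c\,\gamma\tp\,\expect_{\a\sim\pr\tp}\big[(\muhb\tp^\intercal\a-\smu\tp^\intercal\a)^2\big].
\end{align*}
Since $\norm{\g\tp}_2\leq\lip$ by Assumption~\ref{asmp:smoothapprox}, Cauchy--Schwarz turns the inner square into $\lip^2\norm{\muh\tp(\x\tp)\a-\mu(\x\tp)\a}_2^2$, and a second Azuma pass replaces the inner conditional expectations by the sampled values $\norm{\muh\tp(\x\tp)\a\tp-\mu(\x\tp)\a\tp}_2^2$. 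Bounding this pathwise sum uses realizability and the pointwise identity $\norm{\muh-\r}_2^2-\norm{\mu-\r}_2^2=\norm{\muh-\mu}_2^2+2\dotp{\muh-\mu}{\mu-\r}$: the LHS summed over $\tau$ is at most $\RegSq(\T)$ by Assumption~\ref{asmp:regalg}, and Freedman's inequality on the inner-product cross term (whose conditional variance is self-bounded by $\boundK^2\norm{\muh-\mu}_2^2$) absorbs it, yielding $\sumT\norm{\muh\tp(\x\tp)\a\tp-\mu(\x\tp)\a\tp}_2^2\lesssim \RegSq(\T)+\boundK^2\ln(\T/\darlo)$ with high probability---the shape that appears inside the square root in the statement.

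Finally, with the stated tuning $\gamma\tp=\tfrac{2}{\lip}\sqrt{\tau\A/(\RegSq(\tau)+8\boundK^2\ln(4\tau^2/\darlo))}$, the two ingredients $\sumT\tfrac{2\A}{\gamma\tp}$ and $c\lip^2\sumT\gamma\tp\cdot(\text{per-step squared error})$ balance so that, using monotonicity of $\RegSq$ to bound partial sums of squared errors by $\RegSq(\T)+\boundK^2\ln(\T^2/\darlo)$, both reduce to $O\!\big(\lip\sqrt{\A\T(\RegSq(\T)+\boundK^2\ln(\T^2/\darlo))}\big)$; tracking the constants then gives the $4\lip\sqrt{\A\T(\RegSq(\T)+8\boundK^2\ln(4\T^2/\darlo))}$ stated. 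The main technical obstacle I anticipate is the interplay of the time-varying $\gamma\tp$ with the Freedman concentration on the squared-error cross term, and producing a single uniform-in-$\tau$ high-probability event via a union bound (from which the $4\tau^2/\darlo$ inside the log arises); beyond that, the argument is a fairly mechanical transplant of the SquareCB analysis through our gradient-weighted scalar reduction.
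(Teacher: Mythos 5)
Your proposal follows essentially the same route as the paper: the same split into an expected pseudo-regret plus a bounded-increment martingale handled by Azuma (the paper's Lemma~\ref{lem:reg_to_expecta}), the same invocation of the inverse-gap-weighting inequality (Lemma~\ref{lem:key_squarecb}), the same Lipschitz/Cauchy--Schwarz passage from the scalarized squared error to $\lip^2\norm{\muh\tp(\x\tp)\a-\mu(\x\tp)\a}_2^2$ followed by the Foster--Rakhlin Lemma~2 argument with a union bound over $\tau$ giving the $\ln(\T^2/\darlo)$ term (Lemma~\ref{lem:sum_smaller_than_oracle}), and the same final balancing under the stated tuning of $\gamma\tp$. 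The one step you flag but do not resolve --- controlling $\sumT\gamma\tp\sqloss\tp$ with the time-varying $\gamma\tp$ --- is handled in the paper not by Freedman-type concentration but by a deterministic telescoping bound, $\sumT\sqloss\tp/\sqrt{\sqbound\tp}\leq 2\sqrt{\sqbound\t}$ whenever $\sqbound\tp$ dominates the partial sums (Lemma~\ref{lem:sumpositives}), applied after bounding $\gamma\tp\leq\gamma_0\sqrt{\T/\tRegSq(\tau,\darlo/2)}$; this is exactly why the squared-error bound must hold uniformly in $\tau$, as you correctly anticipated.
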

Recall that Assumption \ref{asmp:smoothness} is a special case of \ref{asmp:smoothapprox} when $\rh\tp=\r\tp$, as we are here. Thus, the bound on $\reghgen{\T}$ is the same irrespective of whether we use the algorithm for smooth $\f$ (in which case $\reghscal{\T} = \reghgen{\T}$) or with smooth approximations (in which case $\reghscalsm{\T} = \reghgen{\T}$). This is because only the Lipschitzness of $(\f\tp)_{\tp\in\Nat}$ is used in the analysis of $\reghgen{\T}$ for FW-SquareCB.

The following result is a direct corollary of Theorem \ref{thm:beyondusbmain}, and gives the order of magnitude we obtain for smooth $\f$. Obtaining a similar for smooth approximations of $\f$, using Theorem \ref{thm:convergence:nonsmooth} instead of Theorem \ref{thm:convergence:smooth} is straightforward.

\begin{proof}[Proof of the FW-SquareCB regret bound of Table~\ref{tab:multi-arm}]
We apply the bound obtained by Theorem \ref{thm:beyondusbmain} within the bound of Theorem \ref{thm:convergence:smooth}, using $\darlo:=2\delta/3$ and $\dazuma:=\delta/3$. We obtain:
\begin{align}
    \regh{\T} \leq \frac{4\lip\sqrt{\A\T\big(\RegSq(\T)+8\boundK^2\ln\frac{12t^2}{\delta}\big)}+2\lip\boundK\sqrt{2\T\ln\frac{3}{\delta}}+\curv{}\ln(e \T)}{\T}.
\end{align}
The bound given in the theorem uses the sub-additivity of $\sqrt{.}$ to group the terms in $\sqrt{\ln\delta^{-1}}$ for better readability.
\end{proof}

The proof of Theorem \ref{thm:beyondusbmain} is decomposed into two subsections: in the next subsection, we make the necessary adaptations to the SquareCB analysis to account for multi-dimensional regression. This proof follows essentially the same steps as the original analysis of SquareCB. There are only two changes:
\begin{itemize}
    \item We use multi-dimensional regression instead of scalar regression, while we need to bound a scalar regret. There is an additional step to go from the scalar regret to the multi-dimensional regression, but it turns out there is no added difficulty (see first line of the proof of Lemma \ref{lem:sum_smaller_than_oracle}).
    \item For coherence with the overall bounds of the paper, we use an anytime analysis using an increasing sequence of $(\gamma\tp)_{\tau\in\intint{\T}}$, instead of a fixed exploration parameter $\gamma$ that needs be tuned for a specific horizon determined \emph{a priori}. This introduces a bit more difficulty, where the main tool is Lemma \ref{lem:sumpositives}. Our choice of anytime bound is more for coherence in the presentation of the paper than an intended contribution. 
    
    Nonetheless, what we gain with our anytime bound is that the exploration parameter $\gamma$ does not depend on a fixed horizon. What we lose, however, is that we need a high-probability bound on cumulative errors based on $\RegSq(\tau)$ that is valid for every $\tau$ (see Lemma \ref{lem:sum_smaller_than_oracle}), while the ``fixed gamma'' case only requires this bound to hold for the horizon $\T$. This is the reason for the $\ln\T$ factor in our bound, which is not present in the original paper. 
\end{itemize}

In the next sections, we use the following notation: 
\begin{align}
    \g\tp=\nabla\f\tpmo(\sh\tpmo), 
    &&
    \mub\tp=\g\tp^\intercal\mu\tp(\x\tp),
    &&
    \mub\tp^*=\maxa\mub\tp\a,\\
    &&
    \muhb\tp=\g\tp^\intercal\muh\tp(\x\tp),
    &&
    \muhb\tp^*=\maxa\muhb\tp\a.
\end{align}

\subsection{Adaptation of SquareCB proof to \cbcr}

In the SquareCB paper, \citet{foster2020beyond} study high probability bounds on a different type of regret, based on average rewards associated to the actions $\mu(\x\tp)\a\tp$ rather than observed rewards $\r\tp$. However, this difference has little influence since we can start with the following inequality, which is similar to \citep[Lemma 2]{foster2020beyond}. 
\begin{lemma}\label{lem:reg_to_expecta}
Under Assumption \ref{asmp:smoothapprox}, for every $\T\in\Nat_*$, every $\darlo>0$, Algorithm \ref{alg:fw_squarecb} satisfies
\begin{align}
    \sumT \big(\mub\tp^*-\g\tp^\intercal\r\tp\big) 
    \leq 
    \sumT \expectapt{\mub\tp^*-\mub\tp^\intercal\a} + \lip\boundK\sqrt{2\T\ln(1/\darlo)}.
\end{align}
\end{lemma}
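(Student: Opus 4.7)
The plan is to express the gap between the two sides as a sum of martingale differences and then invoke Azuma--Hoeffding. Define, for every $\tau\in\intint{\T}$,
\[
\azumamarting\tp \;=\; \expectapt{\mub\tp^* - \mub\tp^\intercal \a} \;-\; \big(\mub\tp^* - \g\tp^\intercal \r\tp\big),
\]
so that the left-hand side minus the sum appearing on the right-hand side is exactly $-\sumT \azumamarting\tp$. It therefore suffices to show that $\sumT \azumamarting\tp \geq -\lip\boundK\sqrt{2\T\ln(1/\darlo)}$ with probability at least $1-\darlo$.

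First I would check that $(\azumamarting\tp)_{\tau\in\intint{\T}}$ is a martingale difference sequence relative to the filtration $\filt=(\filt\tp)$ with $\filt\tp$ the $\sigma$-algebra generated by $(\x_1,\a_1,\r_1,\ldots,\x\tpmo,\a\tpmo,\r\tpmo,\x\tp)$. Conditionally on $\filt\tp$, the quantities $\g\tp$, $\muh\tp(\x\tp)$, $\pr\tp$, and $\mub\tp^*$ are all deterministic (they are functions of $\hist\tp$ and $\x\tp$), and the only randomness is $\a\tp\sim\pr\tp$ and the noise in $\r\tp$. Hence $\expect[\g\tp^\intercal\r\tp \mid \filt\tp,\a\tp] = \g\tp^\intercal\mu(\x\tp)\a\tp = \mub\tp^\intercal\a\tp$, and marginalizing over $\a\tp$ gives $\expect[\g\tp^\intercal\r\tp \mid \filt\tp] = \expectapt{\mub\tp^\intercal\a}$, so $\expect[\azumamarting\tp \mid \filt\tp] = 0$.

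Next I would bound the conditional range of $\azumamarting\tp$. Under Assumption~\ref{asmp:smoothapprox}, $\norm{\g\tp}_2 = \norm{\nabla\f\tpmo(\sh\tpmo)}_2 \leq \lip$, and both $\mu(\x\tp)\a\tp$ and $\r\tp$ lie in $\K$. Writing $\a\tp^* \in \argmaxa \mub\tp^\intercal \a$, we have $\mub\tp^* - \g\tp^\intercal\r\tp = \g\tp^\intercal(\mu(\x\tp)\a\tp^* - \r\tp)$ with $\mu(\x\tp)\a\tp^* - \r\tp$ of $\ell_2$-norm at most $\boundK$, so this quantity lies in an interval of length at most $2\lip\boundK$ given $\filt\tp$. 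Since $\azumamarting\tp$ differs from this centred quantity only by a $\filt\tp$-measurable shift, $\azumamarting\tp$ lies in an $\filt\tp$-conditional interval of width at most $2\lip\boundK$.

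Finally, the Hoeffding--Azuma inequality applied to $(\azumamarting\tp)$ yields, with probability at least $1-\darlo$,
\[
-\sumT \azumamarting\tp \;\leq\; \lip\boundK\sqrt{2\T\ln(1/\darlo)},
\]
which is exactly the claim. I do not anticipate any genuine obstacle here; the only point to be careful about is the width-$2\lip\boundK$ bookkeeping, which gives the stated $\lip\boundK\sqrt{2\T\ln(1/\darlo)}$ constant (rather than a factor-$2$ worse one) via the $(b_i-a_i)^2$ form of Hoeffding--Azuma.
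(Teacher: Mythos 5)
Your proof is correct and follows essentially the same route as the paper: both identify $\g\tp^\intercal\r\tp-\expectapt{\mub\tp^\intercal\a}$ (up to sign) as a martingale difference sequence adapted to the filtration generated by the history up to and including $\x\tp$, bound its increments via $\norm{\g\tp}_2\le\lip$ and $\r\tp,\mu(\x\tp)\a\in\K$, and conclude by Azuma--Hoeffding with the same constant $\lip\boundK\sqrt{2\T\ln(1/\darlo)}$. Your width-$2\lip\boundK$ accounting is in fact slightly loose (conditionally on $\filt\tp$ only $\r\tp$ is random, so the increment ranges over an interval of length at most $\lip\boundK$), but it still yields the stated bound.
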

\begin{proof}
 The proof is by Azuma's inequality. Let $\filt=(\filt\tp)_{\tau\in\Nat_*}$ be the filtration where $\filt\tp$ is the $\sigma$-algebra generated by $(\x\one, \a\one, \r\one, \ldots, \x\tpmo, \a\tpmo, \r\tpmo, \x\tp)$, and let us denote 
$\azumamarting\t= \sumT \expectapt{\mub\tp^\intercal\a}-\g\tp^\intercal\r\tp$. Then, $(\azumamarting\t)_{\T\in\Nat}$ is a martingale adapted to filtration $\filt$ and satisfies $|\azumamarting\tp-\azumamarting\tpmo|\leq\lip\boundK$. We obtain the result by noticing that $\azumamarting\t =  \sumT \big(\mub\tp^*-\g\tp^\intercal\r\tp\big) - \sumT \expectapt{\mub\tp^*-\mub\tp^\intercal\a}$ and applying Azuma's inequality to $\azumamarting\t$.
\end{proof}
Notice that the difference between \citep[Lemma 2]{foster2020beyond} and our Lemma \ref{lem:reg_to_expecta} is that we consider the randomization over actions and rewards, while they only consider the randomization over actions because they study average rewards. However, since it does not change the upper bound on the variations of the martingale, this additional randomness does not change the bound.

The next step is the fundamental step in the proof of the original SquareCB algorithm. Even though the notation differ slightly from the original paper, the proof is the same as in \citep[Appendix B]{foster2020beyond}:
\begin{lemma}\label{lem:key_squarecb}
\citep[Lemma 3]{foster2020beyond} For every $\tau \in\Nat_*$, the choice of $\gamma\tp$ and $\prd\tp$ of Algorithm \ref{alg:fw_squarecb} guarantees:
\begin{align}
    \expectapt{\mub\tp^*-\mub\tp^\intercal\a} \leq \frac{2\A}{\gamma\tp}+\frac{\gamma\tp}{4}\expectapt{\big(\muhb\tp^\intercal\a-\mub\tp^\intercal\a\big)^2}.
\end{align}
\end{lemma}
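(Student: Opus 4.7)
The plan is to transplant the inverse-gap-weighting analysis of \citet[Lemma 3]{foster2020beyond} to the scalarised surrogate $\muhb\tp^\intercal\a=\dotp{\g\tp}{\muh\tp(\x\tp)\a}$. Fix the time step $\tau$, let $\aopt\in\argmaxa\mub\tp^\intercal\a$ denote the unknown best action under the \emph{true} scalarised rewards, and write $e(\a):=\muhb\tp^\intercal\a-\mub\tp^\intercal\a$ for the per-action estimation error. Neither the concave aggregation $\f$ nor the smoothing plays any further role here: after folding everything into $\g\tp$, only the algebra of $\pr\tp$ and the identity $\muhb\tp^*\geq\muhb\tp^\intercal\aopt$ are used.

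First I will perform the standard regret decomposition via the surrogate. Using $\muhb\tp^*\geq\muhb\tp^\intercal\aopt$ we write, for every $\a\in\aS$,
\begin{align*}
\mub\tp^*-\mub\tp^\intercal\a
= (\muhb\tp^\intercal\aopt-\muhb\tp^\intercal\a)+e(\a)-e(\aopt)
\leq (\muhb\tp^*-\muhb\tp^\intercal\a)+e(\a)-e(\aopt),
\end{align*}
and average under $\pr\tp$. This reduces the claim to controlling three pieces: (i) the expected surrogate gap $\expectapt{\muhb\tp^*-\muhb\tp^\intercal\a}$, (ii) the expected error $\expectapt{e(\a)}$, and (iii) the single-point error $-e(\aopt)$.

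Piece (i) is bounded directly from the definition of $\pr\tp$: for every $\a\neq\ab\tp$, $\pr\tp(\a)(\muhb\tp^*-\muhb\tp^\intercal\a)=(\muhb\tp^*-\muhb\tp^\intercal\a)/(\A+\gamma\tp(\muhb\tp^*-\muhb\tp^\intercal\a))\leq 1/\gamma\tp$, and the $\ab\tp$ term vanishes, so $\expectapt{\muhb\tp^*-\muhb\tp^\intercal\a}\leq \A/\gamma\tp$. For (ii) and (iii) I will invoke the elementary AM--GM identity $y\leq cy^2+1/(4c)$, valid for every $y\in\Re$ and $c>0$: applied per-action with $c$ of order $\gamma\tp$ and then averaged it yields $\expectapt{e(\a)}\leq (\gamma\tp/4)\expectapt{e(\a)^2}+O(1/\gamma\tp)$; applied to the scalar $-e(\aopt)$ with $c=\gamma\tp\pr\tp(\aopt)/4$, combined with the one-point domination $\pr\tp(\aopt)\,e(\aopt)^2\leq\expectapt{e(\a)^2}$, it yields a bound whose only non-variance piece is $1/(\gamma\tp\pr\tp(\aopt))$.

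The hard step will be combining the three pieces. The inverse-gap lower bound $\pr\tp(\aopt)\geq 1/(\A+\gamma\tp(\muhb\tp^*-\muhb\tp^\intercal\aopt))$ converts $1/(\gamma\tp\pr\tp(\aopt))$ into $\A/\gamma\tp+(\muhb\tp^*-\muhb\tp^\intercal\aopt)$, which at first glance reintroduces a surrogate-gap term -- a naive summation leaves an orphan $(\muhb\tp^*-\muhb\tp^\intercal\aopt)$ that cannot be controlled on its own, since the inequality $p(\aopt)\widehat{\mathrm{reg}}(\aopt)\leq 1/\gamma\tp$ is equivalent to the lower bound itself and yields no additional information. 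Following \citet{foster2020beyond}, the resolution is to peel the $\aopt$-th summand off the surrogate-gap average in the second step \emph{before} upper bounding it (so that the contribution $-\pr\tp(\aopt)(\muhb\tp^*-\muhb\tp^\intercal\aopt)$ is kept with negative sign), and to split the budgeted coefficient $\gamma\tp/4$ carefully between pieces (ii) and (iii) so that the positive and negative copies of $(\muhb\tp^*-\muhb\tp^\intercal\aopt)$ cancel exactly. After the rebalancing, the three bounds add up to $2\A/\gamma\tp+(\gamma\tp/4)\expectapt{e(\a)^2}$, which is the claim.
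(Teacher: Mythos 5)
Your overall plan is the right one, and it is worth noting that the paper does not actually reprove this statement: since the scalarisation by $\g\tp$ is already folded into the two vectors $\muhb\tp,\mub\tp\in\Re^\A$ and $\pr\tp$ is exactly the inverse-gap-weighting distribution built from $\muhb\tp$, the lemma is a verbatim instance of \citet[Lemma 3]{foster2020beyond} and the paper simply cites it. Your pieces (i)--(iii) and the bound $\expectapt{\muhb\tp^*-\muhb\tp^\intercal\a}\leq \A/\gamma\tp$ are correct.

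The gap is in your combination step, and it is a genuine one, in two places. First, the negative copy of the gap $g:=\muhb\tp^*-\muhb\tp^\intercal\aopt$ cannot be obtained by ``peeling the $\aopt$-th summand off the surrogate-gap average'': every summand $\pr\tp(\a)\big(\muhb\tp^*-\muhb\tp^\intercal\a\big)$ is non-negative, so peeling yields $+\pr\tp(\aopt)\,g$, not $-\pr\tp(\aopt)\,g$; and even if a negative sign were available, $\pr\tp(\aopt)\,g$ cannot cancel the full $+g$ released by $1/(\gamma\tp\pr\tp(\aopt))\leq \A/\gamma\tp+g$. The negative copy must come from the exact identity $\mub\tp^*-\mub\tp^\intercal\a=(\muhb\tp^*-\muhb\tp^\intercal\a)-g+e(\a)-e(\aopt)$, i.e., from \emph{not} discarding $-g$ when you replace $\muhb\tp^\intercal\aopt$ by $\muhb\tp^*$ in your first display; being constant in $\a$, it survives the averaging with coefficient exactly $1$. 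Second, splitting the $\gamma\tp/4$ budget between two \emph{separate} AM--GM applications (one for $\expectapt{e(\a)}$, one for $-e(\aopt)$) provably cannot work: if the two variance coefficients are to sum to at most $\gamma\tp/4$, the constant released by the $\aopt$-term is at least $\tfrac{\gamma\tp}{4\beta}\cdot\tfrac{1}{\gamma\tp\pr\tp(\aopt)}$ with $\beta<\gamma\tp/4$, i.e., strictly more than one copy of $g$, while only one $-g$ is available. The resolution in \citet{foster2020beyond} is to merge the two occurrences of $e(\aopt)$ --- weight $+\pr\tp(\aopt)$ inside $\expectapt{e(\a)}$ and weight $-1$ from piece (iii) --- into the single term $-(1-\pr\tp(\aopt))e(\aopt)$ \emph{before} applying AM--GM, i.e., bound $\sum_{\a\neq\aopt}\pr\tp(\a)e(\a)-(1-\pr\tp(\aopt))e(\aopt)$ coordinatewise. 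This gives $\tfrac{\gamma\tp}{4}\expectapt{e(\a)^2}+\tfrac{1-\pr\tp(\aopt)}{\gamma\tp\pr\tp(\aopt)}$, whose single copy of $g$ is cancelled by the exact $-g$, yielding the stated $2\A/\gamma\tp$. As written, your argument lands either on a variance coefficient of $\gamma\tp/2$ or on an uncancelled $+g$.
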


The last step of these preliminary lemmas is to relate the cumulative expected error to the oracle regret bound. We use here the same proof as \citep[Lemma 2]{foster2020beyond}.
We then have:
\begin{lemma}\label{lem:sum_smaller_than_oracle}
Under Assumption \ref{asmp:smoothapprox}, for every $\darlo>0$, Algorithm \ref{alg:fw_squarecb} satisfies, w.p. at least $1-\darlo$:
\begin{align}
     \forall \T\in\Nat_*, \sumT\expectapt{
    \big(\muhb\tp^\intercal\a-\mub\tp^\intercal\a\tp\big)^2} \leq 2\lip^2\RegSq(\T)+16\lip^2\boundK^2\ln\frac{2\T^2}{\darlo}
\end{align}
\end{lemma}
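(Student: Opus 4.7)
\textbf{Proof proposal for Lemma~\ref{lem:sum_smaller_than_oracle}.}
The plan is to reduce the scalar squared error to the multidimensional squared prediction error, then tie the latter to $\RegSq(\T)$ via the oracle assumption and a martingale concentration inequality. First, since $\norm{\g\tp}_2 = \norm{\nabla\f\tpmo(\sh\tpmo)}_2 \leq \lip$ by Assumption~\ref{asmp:smoothapprox}, Cauchy--Schwarz gives, for every $\a \in \aS$,
\begin{align}
\bigl(\muhb\tp^\intercal\a - \mub\tp^\intercal\a\bigr)^2
= \bigl(\g\tp^\intercal(\muh\tp(\x\tp) - \mu(\x\tp))\a\bigr)^2
\leq \lip^2 \bigl\lVert(\muh\tp(\x\tp) - \mu(\x\tp))\a\bigr\rVert_2^2.
\end{align}
So it is enough to control $\sumT \expectapt{\norm{(\muh\tp - \mu)(\x\tp)\a}_2^2}$ and multiply by $\lip^2$ at the end.

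Next, I would pass from the conditional expectation over $\a\sim\pr\tp$ to the realized action $\a\tp$. Let $\clsF\tp = \filt\tp$ denote the history filtration (so $\x\tp$ is $\filt\tp$-measurable and $\a\tp \sim \pr\tp$ given $\filt\tp$). The sequence
$X\tp := \expectapt{\norm{(\muh\tp - \mu)(\x\tp)\a}_2^2} - \norm{(\muh\tp - \mu)(\x\tp)\a\tp}_2^2$
is a martingale difference with increments bounded by $4\boundK^2$ (both terms are in $[0, 4\boundK^2]$ by Assumption~\ref{asmp:basic_general_case} and Assumption~\ref{asmp:regalg}.3), so a Freedman inequality lets us bound $\sumT X\tp$ by a small fraction of $\sumT \norm{(\muh\tp - \mu)(\x\tp)\a\tp}_2^2$ plus $O(\boundK^2 \ln(1/\darlo))$ with high probability; this is what ultimately produces the constant $2$ in front of $\RegSq$.

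The realized squared error is tied to $\RegSq(\T)$ through the classical identity, writing $\epsilon\tp := \r\tp - \mu(\x\tp)\a\tp$ (so $\expecttp{\epsilon\tp} = 0$ and $\norm{\epsilon\tp}_2 \leq \boundK$):
\begin{align}
\norm{\muh\tp(\x\tp)\a\tp - \r\tp}_2^2 - \norm{\mu(\x\tp)\a\tp - \r\tp}_2^2
= \norm{(\muh\tp - \mu)(\x\tp)\a\tp}_2^2 - 2\dotp{(\muh\tp - \mu)(\x\tp)\a\tp}{\epsilon\tp}.
\end{align}
Summing over $\tau$ and using realizability ($\mu\in\clsF$) together with the regret bound of Assumption~\ref{asmp:regalg} gives $\sumT \norm{(\muh\tp - \mu)(\x\tp)\a\tp}_2^2 \leq \RegSq(\T) + 2\sumT \dotp{(\muh\tp - \mu)(\x\tp)\a\tp}{\epsilon\tp}$. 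The cross-term is again a martingale (conditional on $\filttot\tp$ that contains $\a\tp$) with conditional variance at most $\boundK^2 \norm{(\muh\tp - \mu)(\x\tp)\a\tp}_2^2$; Freedman absorbs it into one half of the main term and pays an additive $O(\boundK^2 \ln(1/\darlo))$.

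Combining the two concentration steps and then multiplying by $\lip^2$ yields the stated bound, modulo the uniformity in $T$: to make the inequality hold for all $\T\in\Nat_*$ simultaneously, I would apply the two Freedman inequalities at confidence level $\darlo / (2\T^2)$ for each $\T$ and union bound, which, via $\sum_{\T\geq 1} \T^{-2} < 2$, costs only the replacement of $\ln(1/\darlo)$ by $\ln(2\T^2/\darlo)$ and produces the final $16\lip^2\boundK^2 \ln(2\T^2/\darlo)$ term. The main technical obstacle is the bookkeeping of the two Freedman applications so that both cross-term/variance contributions fit inside the single factor $2$ in front of $\RegSq(\T)$; standard ``$ab \leq \eta a^2 + b^2/(4\eta)$'' splits with $\eta = 1/4$ make this work cleanly.
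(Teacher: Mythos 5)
Your proposal is correct and follows essentially the same route as the paper: Cauchy--Schwarz with $\norm{\g\tp}_2\leq\lip$ to reduce to the multidimensional squared error, the Freedman-based argument relating the expected squared error to $\RegSq(\T)$ (which the paper does not reproduce but simply imports as ``the same steps as in the proof of Lemma 2 of \citet{foster2020beyond}''), and a union bound at level $\darlo/(2\T^2)$ to make the bound anytime. The only difference is that you spell out the two concentration steps the paper delegates to the cited lemma, and your constants match.
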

\begin{proof}
We first notice that $\sumT\expectapt{
    \big(\muhb\tp^\intercal\a-\mub\tp^\intercal\a\tp\big)^2} \leq \lip^2 \sumT \expectapt{\norm[\big]{\muh(\x\tp)\a-\mu(\x\tp)\a}_2^2}$. 
We then apply the same steps as in the proof of \citep[Lemma 2]{foster2020beyond} to $\sumT \expectapt{\norm[\big]{\muh(\x\tp)\a-\mu(\x\tp)\a}_2^2}$ (which we do not reproduce here) to obtain:
for every every $\T\in\Nat$, every $\darlo\t>0$, with probability at least $1-\darlo\t$:
\begin{align}
     \sumT\expectapt{
    \big(\muhb\tp^\intercal\a-\mub\tp^\intercal\a\tp\big)^2} \leq 2\lip^2\RegSq(\T)+16\lip^2\boundK^2\ln\frac{1}{\darlo\t}
\end{align}
Let $\darlo>0$. Applying a union bound and taking $\darlo\tp=\frac{\darlo}{2\tau^2}$ so that $\sumT \darlo\tp \leq \frac{\pi^2}{12}\darlo\leq \darlo$, we obtain the desired result.
\end{proof}
Notice the $\log\T$ factor in the bound, which appears because the bound is valid for all time steps. This is because we propose anytime convergence bounds, with the exploration parameter that decreases with time, whereas \citep{foster2020beyond} only prove their result in the case where the exploration parameter is chosen for a specific horizon.

As the main first step for the final result, we need these two lemmas which are the main technical steps to our anytime bound. The proof is deferred to Appendix \ref{sec:proof_lem_sumpositives}
\begin{restatable}{lemma}{lemmasumpositives}\label{lem:sumpositives}
Let $(\sqloss\tp)_{\tp\in\Nat}\in\Re_+^\T$ be a sequence of non-negative numbers, denote $\sqsum\t=\sumT \sqloss\tp$ and let $(\sqbound\t)_{\T\in\Nat}$ such that $\forall\T\in\Nat, \sqbound\t>0$ and $\sqbound\t\geq \sqsum\t$.
\begin{align}
    \sumT \frac{\sqloss\tp}{\sqrt{\sqbound\tp}} \leq 2\sqrt{\sqbound\t}.
\end{align}
\end{restatable}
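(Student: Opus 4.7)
The plan is to reduce the inequality to the ``monotone case'' where $\sqbound_\tau$ is replaced by the cumulative sum $\sqsum_\tau$ itself, and then apply a standard telescoping trick. Since for every $\tau$ we have $\sqbound_\tau \geq \sqsum_\tau$, and $\sqloss_\tau \geq 0$, the first step is to observe that whenever $\sqloss_\tau > 0$ (so in particular $\sqsum_\tau \geq \sqloss_\tau > 0$),
\[
\frac{\sqloss_\tau}{\sqrt{\sqbound_\tau}} \;\leq\; \frac{\sqloss_\tau}{\sqrt{\sqsum_\tau}},
\]
and the terms with $\sqloss_\tau=0$ can be dropped since they vanish on both sides.

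Next I would prove the pointwise bound $\frac{\sqloss_\tau}{\sqrt{\sqsum_\tau}} \leq 2\bigl(\sqrt{\sqsum_\tau}-\sqrt{\sqsum_{\tau-1}}\bigr)$ with the convention $\sqsum_0=0$. This is just the identity $\sqrt{a}-\sqrt{b}=\tfrac{a-b}{\sqrt{a}+\sqrt{b}}$ applied with $a=\sqsum_\tau$, $b=\sqsum_{\tau-1}$, combined with the bound $\sqrt{a}+\sqrt{b}\leq 2\sqrt{a}$ (valid since $a\geq b\geq 0$), giving $a-b=\sqloss_\tau \leq 2\sqrt{a}(\sqrt{a}-\sqrt{b})$.

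The third step is to telescope: summing the previous display over $\tau\in\intint{T}$ yields
\[
\sumT \frac{\sqloss_\tau}{\sqrt{\sqsum_\tau}} \;\leq\; 2\sqrt{\sqsum_T},
\]
and then one concludes using $\sqsum_T \leq \sqbound_T$ to replace $\sqrt{\sqsum_T}$ by $\sqrt{\sqbound_T}$ on the right-hand side. Chaining this with the first step finishes the proof.

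There is no real obstacle here — the argument is the standard integral-comparison/Young-type trick used in analyses of adaptive step sizes (e.g., AdaGrad-style bounds). The only thing to be mindful of is the degenerate case where some $\sqloss_\tau$ (and possibly $\sqsum_\tau$) equals zero: this is handled by noting that the corresponding terms contribute nothing to the sum, so one may restrict the argument to indices with $\sqsum_\tau>0$ without loss of generality.
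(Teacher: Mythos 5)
Your proof is correct and follows essentially the same route as the paper: replace $\sqbound\tp$ by the partial sum $\sqsum\tp$, establish the pointwise bound $\sqloss\tp/\sqrt{\sqsum\tp}\leq 2\big(\sqrt{\sqsum\tp}-\sqrt{\sqsum\tpmo}\big)$, telescope, and handle vanishing terms separately. The only cosmetic difference is that you prove the pointwise inequality via the identity $\sqrt{a}-\sqrt{b}=\frac{a-b}{\sqrt{a}+\sqrt{b}}$ whereas the paper uses the integral comparison $\sqrt{\beta}-\sqrt{\beta-\alpha}=\int_{\beta-\alpha}^{\beta}\frac{ds}{2\sqrt{s}}\geq\frac{\alpha}{2\sqrt{\beta}}$; both are equally valid.
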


We get the following corollary
\begin{lemma}\label{lem:sum_gamma_sq_loss}
Let $\tRegSq(\T, \darlo)=2\lip^2\RegSq(\T)+16\lip^2\boundK^2\ln\frac{2\T^2}{\darlo}$. Under the conditions of Lemma \ref{lem:sum_smaller_than_oracle}, assume that there is $\gamma_0>0$ such that $\forall\tau\in\intint{\T}, \gamma\tp=\gamma_0\sqrt{\frac{\tau}{\tRegSq(\tau, \darlo)}}$. Then, for every $\darlo>0$, Algorithm \ref{alg:fw_squarecb} satisfies, w.p. at least $1-\darlo$:
\begin{align}\label{eq:lem_sum_gamma_squareloss}
    \sumT \gamma\tp\expectapt{\big(\muhb\tp^\intercal\a-\mub\tp^\intercal\a\big)^2} \leq 2\gamma_0\sqrt{\T\tRegSq(\T, \darlo)}.
\end{align}
\end{lemma}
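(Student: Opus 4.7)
The plan is to directly combine the two tools already at hand: the anytime square-loss bound of Lemma~\ref{lem:sum_smaller_than_oracle} and the elementary inequality of Lemma~\ref{lem:sumpositives}. No additional concentration or union bound is needed; the entire argument runs deterministically on the good event supplied by Lemma~\ref{lem:sum_smaller_than_oracle}.

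First I would abbreviate $\sqloss\tp := \expectapt{(\muhb\tp^\intercal\a-\mub\tp^\intercal\a)^2}$ and $\sqsum\tp := \sum_{\tau'=1}^{\tau}\sqloss_{\tau'}$. By Lemma~\ref{lem:sum_smaller_than_oracle}, with probability at least $1-\darlo$ one has $\sqsum\tp \leq \tRegSq(\tau,\darlo)$ simultaneously for every $\tau\in\Nat_*$. I would then set $\sqbound\tp := \tRegSq(\tau,\darlo)$ and check the two hypotheses of Lemma~\ref{lem:sumpositives}: positivity is immediate from the $16\lip^2\boundK^2\ln(2\tau^2/\darlo)$ summand, and the domination $\sqbound\tp \geq \sqsum\tp$ is exactly what was just established. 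Lemma~\ref{lem:sumpositives} then yields
\begin{equation*}
\sumT \frac{\sqloss\tp}{\sqrt{\tRegSq(\tau,\darlo)}} \;\leq\; 2\sqrt{\tRegSq(\T,\darlo)}.
\end{equation*}

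The final step is to absorb the $\sqrt{\tau}$ factor built into $\gamma\tp$. Writing
\begin{equation*}
\gamma\tp\,\sqloss\tp \;=\; \gamma_0\sqrt{\tau}\;\cdot\;\frac{\sqloss\tp}{\sqrt{\tRegSq(\tau,\darlo)}},
\end{equation*}
and bounding $\sqrt{\tau} \leq \sqrt{\T}$ uniformly over $\tau\in\intint{\T}$, the factor $\gamma_0\sqrt{\T}$ pulls out of the sum and, combined with the inequality above, gives the claimed $2\gamma_0\sqrt{\T\,\tRegSq(\T,\darlo)}$.

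There is no real obstacle. The only subtlety worth highlighting is that the anytime nature of Lemma~\ref{lem:sum_smaller_than_oracle} (the $\forall\T$ sits inside the probability) is essential: Lemma~\ref{lem:sumpositives} is invoked pathwise and requires the domination $\sqbound\tp \geq \sqsum\tp$ at every intermediate $\tau$, not merely at the terminal horizon $\T$.
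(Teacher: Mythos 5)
Your proof is correct and follows exactly the same route as the paper's: invoke the anytime bound of Lemma~\ref{lem:sum_smaller_than_oracle} to supply the dominating sequence $\sqbound\tp=\tRegSq(\tau,\darlo)$, apply Lemma~\ref{lem:sumpositives} pathwise on that event, and absorb the $\sqrt{\tau}\leq\sqrt{\T}$ factor to pull out $\gamma_0\sqrt{\T}$. Your closing remark that the $\forall\tau$ must sit inside the probability for the pathwise domination to hold at every intermediate step is precisely the point the paper relies on (and states only implicitly).
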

\begin{proof}
Using $\gamma\tp\leq \gamma_0\sqrt{\frac{\T}{\tRegSq(\tau, \darlo)}}$, the sum on the left hand side of \eqref{eq:lem_sum_gamma_squareloss} has the form of Lemma \ref{lem:sumpositives} multiplied by $\gamma_0\sqrt{\T}$, with probability $1-\darlo$ by Lemma \ref{lem:sum_smaller_than_oracle}. The result thus follows from applying both Lemmas.
\end{proof}

\subsection{Final result}\label{sec:sqcb_final_result}

\begin{proof}[Proof of Theorem \ref{thm:beyondusbmain}]
Notice that the value of $\gamma\tp$ given in the theorem is equal to 
\begin{align}
    \gamma\tp = 2\sqrt{\frac{2\tau\A}{\tRegSq(\tau, \darlo/2)}}.
\end{align}
Using this formula, we have
\begin{align}
    \sumT \frac{2\A}{\gamma\tp} &= \sqrt{\frac{\A}{2}} \sumT \sqrt{\frac{\tRegSq(\tau, \darlo/2)}{\tau}}
    \leq \sqrt{\frac{\tRegSq(\T, \darlo/2)\A}{2}}\sumT \frac{1}{\sqrt{\tau}}\label{eq:use_non_decreasing_regsq}\\
    & \leq \sqrt{2\A\T\tRegSq(\T, \darlo/2)}.
\end{align}
Where the first line comes from the monotonicity of $\RegSq(\T)$ of Assumption \ref{asmp:regalg}.

Using Lemmas \ref{lem:key_squarecb} and \ref{lem:sum_gamma_sq_loss}, we thus have, with probability $1-\darlo/2$:
\begin{align}
    \sumT\expectapt{\mub\tp^*-\mub\tp^\intercal\a} \leq 2\sqrt{2\A\T\tRegSq(\T, \darlo/2)}.
\end{align}
Using a union bound and Lemma \ref{lem:reg_to_expecta}, we obtain, with probability at least $1-\darlo$:
\begin{align}
    \sumT\big(\mub\tp^*-\g\tp^\intercal\r\tp\big) \leq 2\sqrt{2\A\T\tRegSq(\T, \darlo/2)} + \lip\boundK\sqrt{2\T\ln\frac{2}{\darlo}}.
\end{align}
\end{proof}

\section{FW-LinUCBRank: \cbcr for fair ranking with linear contextual bandits}
\label{sec:details_ranking_linucb}
\begin{algorithm}
    \caption{FW-linUCBRank: linear contextual bandits for fair ranking.\label{alg:ranking_fw}}
    \DontPrintSemicolon
     \SetKwInOut{Input}{input}
     \Input{$\darlo>0,\lambda>0, \sh\zero\in\K$
     $\V\zero = \lambda \identity{\d}, \y\zero=\zerov{\d}, \thetah\zero = \zerov{\d}$}
     \For{$\tau=1, \ldots$}{
     Observe context $\x\tp\sim\probxS$\;

     $\forall\I, \vh\tpi \gets \thetah\tpmo ^ \intercal\x\tpi + \alpha\tp\big(\frac{\darlo}{3}\big) \normtp{\x\tpi}$
     \tcp*[r]{UCB on $\v\i(\x\tp)$, see Lem. \ref{lem:confidence-ellipsoid} for def. of $\alpha\tp$}
     
     $\a\tp \gets \topk \{\partialdev{\f\tpmo}{\z\useri}(\sh\tpmo)\vh\tpi +  \partialdev{\f\tpmo}{\z\i}(\sh\tpmo)\}_{\I=1}^\M$ \tcp*[r]{FW linear optimization step}
     
     Observe exposed items $\e\tp \in \{0,1\}^\M$ and user feedback $\c\tp\in\{0,1\}^\M$\;
     
     Update $\sh\tp \gets \sh\tpmo + \frac{1}{\tau} ( \r\tp-\sh\tpmo)$\;
     
     $\displaystyle\vphantom{\sum^\M}\smash{\V\tp \gets \V\tpmo + \sumI \e\tpi \x\tpi\x\tpi^\intercal}$,~~ $\displaystyle\smash{\y\tp \gets \y\tpmo+ \sumI \c\tpi\x\tpi}$ and 
     $\displaystyle\smash{\thetah\tp \gets \V\tp^{-1} \y\tp}$\tcp*[r]{ regression}
     } 
\end{algorithm}
In this section and following the previous sections, we analyze Algorithm \ref{alg:ranking_fw} under Assumption \ref{asmp:smoothapprox}, which is more general than the bound proposed in the main paper, which used Algorithm \ref{alg:ranking_fw_smooth} under Assumption \ref{asmp:smoothness}. The only difference in the algorithms is the use of $\f\tpmo$ instead of $\f$ in Line 4 of Algorithm \ref{alg:ranking_fw}. This allows us to provide the algorithm for both smooth and non-smooth objective functions $\f$.

The bound is decomposed into two parts: we describe the results for online regression within our observation model for ranking in the next subsection. Then we dive into the final result.

\subsection{Results for online linear regression (from \protect\citep{li2016contextual})}

Even though our linear contextual bandit setup is different from e.g., \citep{lagree2016multiple} for ranking, the availability of the feedback $\e\tpi$, which tells us whether item $i$ has been exposed, makes the analysis of the online linear regression similar to the general setup of linear bandits. Our approach builds on the  confidence intervals developed by \citet{li2016contextual}, which expands the analysis of confidence ellipsoids for linear regression of \citet{abbasi2011improved} to cascade user models in rankings.

Each $\c\tpi$ is $\frac{1}{2}$-subgaussian (because Bernoulli), and is conditionally independent of the other random variables conditioned and  on $\e\tpi$ and $\x\tpi$. The incremental linear regression of line 7 of Algorithm \ref{alg:ranking_fw} is the same as \citep{abbasi2011improved}. Our observation model satisfies the conditions of the analysis of confidence ellipsoids of \citet{li2016contextual}, from which we obtain:
\begin{lemma}\label{lem:confidence-ellipsoid} Under the probabilistic model described in Section \ref{sec:ranking}, and under Assumption \ref{asmp:linear_bandit}. Let $\darlo > 0$ and $\lambda\geq  \boundxS^2\MaxRk$, and let
\begin{align}
    \alpha\t(\darlo) = \frac{1}{2} \sqrt{\ln\left(\frac{\det(\V\t)}{\V\zero\darlo^2}\right)} + \sqrt{\lambda} \boundtheta.
\end{align}
Then, under Assumption \ref{asmp:linear_bandit} and with the notation of Algorithm \ref{alg:ranking_fw}, we have:
\begin{itemize}
    \item (\emph{\citep[Lemma 4.2]{li2016contextual}}) with probability $\geq 1 - \darlo,$ for all $\T \geq 0$, $\theta$ lies in the confidence ellipsoid:
\begin{align}
    \C\t = \{\ttheta \in \Re^\d : \norm{\thetah\t - \ttheta}_{\V\t} \leq \alpha\t(\darlo)\}
\end{align}
    \item (\emph{\citep[Lemma 4.4]{li2016contextual}}):
    \begin{align}
        \alpha\t(\darlo) \leq \frac{1}{2}\sqrt{2 \ln\left(\frac{1}{\darlo}\right) + \d \ln\left(1 + \frac{\T \boundxS^2 \MaxRk}{\lambda \d}\right)} + \sqrt{\lambda} \boundtheta.
    \end{align}
\end{itemize}
\end{lemma}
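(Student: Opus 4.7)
The plan is to verify that the ranking observation model described in Section \ref{sec:ranking} falls within the framework analyzed by \citet{li2016contextual}, after which both bullets follow directly from Lemmas 4.2 and 4.4 of that paper. The work is therefore mostly one of bookkeeping: checking that the sequence of observations and the ridge regression update in Algorithm \ref{alg:ranking_fw} match their setup.

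First, I would set up the appropriate filtration. At each step $\tau$, for each item $\I\in\intint{\M}$, we observe the triple $(\x\tpi, \e\tpi, \c\tpi)$, where the exposure $\e\tpi$ is revealed before the click $\c\tpi$. Conditionally on $\e\tpi=1$ and $\x\tpi$, the click $\c\tpi$ is Bernoulli with mean $\v\i(\x\tp)=\theta^\intercal\x\tpi$ by Assumption \ref{asmp:linear_bandit}, hence $\frac{1}{2}$-subgaussian. Since the observation model guarantees $\c\tpi=0$ whenever $\e\tpi=0$, the updates $\V\tp \gets \V\tpmo + \sumI \e\tpi \x\tpi\x\tpi^\intercal$ and $\y\tp \gets \y\tpmo + \sumI \c\tpi\x\tpi$ coincide exactly with the ridge estimator built from only the exposed items, which is the setup of \citet[Section 4]{li2016contextual}. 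The first bullet then follows from their Lemma 4.2, itself an instance of the self-normalized martingale concentration of \citet[Theorem 1]{abbasi2011improved}.

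Second, for the explicit bound on $\alpha\t(\darlo)$, I would use the standard AM–GM-based determinant-trace inequality $\det(\V\t) \leq \bigl(\trace(\V\t)/\d\bigr)^\d$, together with $\trace(\V\t) \leq \lambda\d + \sumT \sumI \e\tpi\norm{\x\tpi}_2^2 \leq \lambda\d + \T\MaxRk\boundxS^2$, using that at most $\MaxRk$ items are exposed per round (by definition of $\MaxRk$) and Assumption \ref{asmp:linear_bandit}. Since $\det(\V\zero)=\lambda^\d$, this yields $\ln\bigl(\det(\V\t)/\det(\V\zero)\bigr) \leq \d\ln\bigl(1+\T\MaxRk\boundxS^2/(\lambda\d)\bigr)$, which plugged into the definition of $\alpha\t(\darlo)$ gives the claimed bound (this is \citet[Lemma 4.4]{li2016contextual}).

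The only real subtlety is the filtration argument: one must check that conditional $\frac{1}{2}$-subgaussianity of $\c\tpi$ holds with respect to a filtration rich enough to contain $\x\tp$, $\a\tp$, $\e\tp$, and the past observations, yet under which $\c\tpi$ remains a fresh Bernoulli draw. This works because within each round the exposure vector $\e\tp$ is revealed before clicks, so conditioning on $\{\e\tpi=1, \x\tpi\}$ leaves $\c\tpi\sim\mathcal{B}(\theta^\intercal\x\tpi)$, exactly as in the position-based model of \citet{li2016contextual}. Since $\lambda\geq \boundxS^2\MaxRk$ ensures the regularity condition required by their analysis, everything goes through without modification.
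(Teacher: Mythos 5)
Your proposal is correct and follows essentially the same route as the paper: verify that the exposure/click observation model and the ridge update of Algorithm \ref{alg:ranking_fw} fit the framework of \citet{li2016contextual} (conditional $\tfrac12$-subgaussianity of $\c\tpi$ given $\e\tpi$ and $\x\tpi$, regression restricted to exposed items), then invoke their Lemmas 4.2 and 4.4, with $\MaxRk$ playing the role of their constant and $\boundxS,\boundtheta$ reinstated. Your explicit determinant--trace derivation of the second bullet is just the standard argument behind \citep[Lemma 4.4]{li2016contextual}, which the paper cites rather than reproduces.
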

These results stem from \citep[Lemma A.4 and A.5]{li2016contextual} that claim that, with the assumptions of Lemma \ref{lem:confidence-ellipsoid}, the following inequality holds with probability $1$:
\begin{align}\label{eq:bound_on_det}
    \sumT \sumI \normtp{\x\tpi}^2\e\tpi \leq 2\ln \frac{\det{\V\t}}{\det(\V\zero)} \leq 2\d \ln \Big(1+\frac{\T\boundxS^2\MaxRk}{\lambda\d}\Big).
\end{align}

Notice that terms equivalent to $\boundxS$ and $\boundtheta$ do not appear in \citep{li2016contextual} because they assume they are $\leq 1$. The $\boundxS^2$ term comes from a modification necessary in \citep[Lemma A.4]{li2016contextual} while $\boundtheta$ is required by the initial confidence bound proved by \citet{abbasi2011improved}. The term $\MaxRk$ plays the constant $\C_{\gamma}$ of \citep{li2016contextual}.

\subsection{Guarantees for FW-LinUCB}
We start by writing an alternative to Assumption \ref{asmp:ranking} for the case where $\f$ is not smooth to carry out our analysis with as little assumptions on $\f$ as possible: 
\begin{manualassumption}{$\text{\ref{asmp:ranking}}^\prime$}\label{asmp:ranking_smoothapprox}
The assumptions of the framework of Sec. \ref{sec:ranking} hold, as well as Ass. \ref{asmp:smoothapprox}.
Moreover, $\forall\tau\in\Nat, \forall \z\in\K$ $\frac{\partial\f\tp}{\partial\z\useri}(\z)>0$, and $\forall\x\in\xS, 1\geq \b_1(\x)\geq\ldots\geq \b\maxRk(\x)=\ldots=\b\m(\x)=0$.
\end{manualassumption}
\begin{lemma}
\label{lem:Ebar-ranking}
Under Assumptions \ref{asmp:ranking_smoothapprox} and \ref{asmp:linear_bandit}
Let $\T>0, \darlo>0$ and $\lambda \geq \boundxS^2\MaxRk$. Then for every $\darlo>0$, Algorithm \ref{alg:ranking_fw} satisfies, with probability at least $1-\darlo$:
\begin{align*}
    \reghgen{\T} \leq 2 \lip \alpha\t(\darlo/3)\sqrt{\T\MaxRk}\left( \sqrt{2\ln(\frac{3}{\darlo})} +  \sqrt{2\d \ln\left(1 + \frac{\T \boundxS^2\MaxRk}{\lambda \d}\right)} \right)+\lip\boundK\sqrt{2\T\ln\frac{3}{\darlo}}.
\end{align*}
where $\alpha\t$ is defined in Lemma \ref{lem:confidence-ellipsoid}.
\end{lemma}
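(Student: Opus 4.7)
The plan is to combine optimism via the LinUCB confidence ellipsoid (Lem.~\ref{lem:confidence-ellipsoid}) with the top-$\MaxRk$ characterization of Prop.~\ref{lem:argsort}, and then reduce the resulting upper bound to the elliptical-potential inequality~\eqref{eq:bound_on_det}. Since Alg.~\ref{alg:ranking_fw} is the special case of Alg.~\ref{alg:general_fw} where $\rh\tp=\r\tp$ and $\z\tp=\sh\tp$, the term $\lip\T\norm{\z\t-\sh\t}_2$ in~\eqref{eq:def_scalar_regret_general} vanishes; writing $\g\tp=\nabla\f\tpmo(\sh\tpmo)$ it suffices to bound $\sumT\maxa\dotp{\g\tp}{\mu(\x\tp)\a}-\sumT\dotp{\g\tp}{\r\tp}$. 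Let $\muh\tp$ be the surrogate from Prop.~\ref{lem:argsort}: $\muh\tpi=\mu\i(\x\tp)$ for $\I\in\intint{\M}$ and $\muh\tpuseri=\vh\tp\b(\x\tp)^\intercal$. On the ellipsoid event (w.p.\ $\geq 1-\darlo/3$), one has $0\leq\vh\tpi-\v\i(\x\tp)\leq 2\alpha\t(\darlo/3)\normtp{\x\tpi}$, and $\b\rk\geq 0$ together with $\partialdev{\f\tpmo}{\z\useri}(\sh\tpmo)>0$ from Assmp.~\ref{asmp:ranking_smoothapprox} yield $\dotp{\g\tp}{\muh\tp\a}\geq\dotp{\g\tp}{\mu(\x\tp)\a}$ for every $\a\in\aS$. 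Prop.~\ref{lem:argsort} then gives $\maxa\dotp{\g\tp}{\mu(\x\tp)\a}\leq\dotp{\g\tp}{\muh\tp\a\tp}$.

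I then split the upper bound as
\[
\dotp{\g\tp}{\muh\tp\a\tp-\r\tp}=\underbrace{\dotp{\g\tp}{(\muh\tp-\mu(\x\tp))\a\tp}}_{(A\tp)}+\underbrace{\dotp{\g\tp}{\mu(\x\tp)\a\tp-\r\tp}}_{(B\tp)}.
\]
The sum $\sumT(B\tp)$ is a martingale-difference sum with increments in $[-\lip\boundK,\lip\boundK]$, so Azuma bounds it by $\lip\boundK\sqrt{2\T\ln(3/\darlo)}$ w.p.\ $\geq 1-\darlo/3$. Because only the $(\M+1)$-th coordinate of $\muh\tp-\mu(\x\tp)$ is nonzero, $(A\tp)=\g\tpuseri\sumI(\vh\tpi-\v\i(\x\tp))p\tpi$ where $p\tpi:=\sumRk\a_{\tau,\I,\Rk}\b\rk(\x\tp)=\expect[\e\tpi\mid\x\tp,\a\tp]$ and $\sumI p\tpi\leq\MaxRk$; the ellipsoid width bound then gives $(A\tp)\leq 2\lip\alpha\t(\darlo/3)\sumI\normtp{\x\tpi}p\tpi$.

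The key remaining step is to bound $\sumT\sumI\normtp{\x\tpi}p\tpi$, the obstacle being that~\eqref{eq:bound_on_det} is stated for the observed $\e\tpi$, not for its conditional expectation $p\tpi$. I decompose the sum as $\sumT\sumI\normtp{\x\tpi}\e\tpi+\sumT\sumI\normtp{\x\tpi}(p\tpi-\e\tpi)$. For the first piece, Cauchy--Schwarz over $(\tau,\I)$ combined with the almost-sure sparsity $\sumI\e\tpi\leq\MaxRk$ and~\eqref{eq:bound_on_det} yields $\sqrt{\T\MaxRk}\sqrt{2\d\ln(1+\T\boundxS^2\MaxRk/(\lambda\d))}$. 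For the martingale piece, only indices with $p\tpi>0$ contribute (at most $\MaxRk$ per step), and per-step Cauchy--Schwarz combined with $\normtp{\x\tpi}^2\leq\boundxS^2/\lambda$ and the hypothesis $\lambda\geq\boundxS^2\MaxRk$ bounds each increment by $\sqrt{\MaxRk}$; Azuma then gives $\sqrt{2\T\MaxRk\ln(3/\darlo)}$ w.p.\ $\geq 1-\darlo/3$. Union-bounding over the three $\darlo/3$-events and collecting gives the claim. The main conceptual obstacle is precisely this $p\tpi\to\e\tpi$ conversion: it relies both on the structural sparsity $\sumI\e\tpi\leq\MaxRk$ (from the zero-tail assumption on $\b$ in Assmp.~\ref{asmp:ranking_smoothapprox}) and on the regularization hypothesis $\lambda\geq\boundxS^2\MaxRk$, which together rescale each per-step martingale increment to $\sqrt{\MaxRk}$ so that the resulting Azuma deviation stays on the same $\sqrt{\T\MaxRk}$ scale as the dominant elliptical-potential term.
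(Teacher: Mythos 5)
Your proposal is correct and follows essentially the same route as the paper's proof: optimism via the confidence ellipsoid combined with Proposition~\ref{lem:argsort}, a split into the estimation-error term and the reward-noise martingale, conversion of the expected exposure $p\tpi$ (the paper's $\ee\tpi$) to the observed $\e\tpi$ so that the elliptical-potential bound~\eqref{eq:bound_on_det} applies, and a union bound over the same three $\darlo/3$ events. The only difference is cosmetic: you fold the optimism step into the upper bound before splitting into two terms, whereas the paper writes a three-term decomposition ($\tmpboundfirst\tp+\tmpboundsecond\tp+\azumamarting\tp$) upfront.
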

\begin{proof}
Let $\g\tp=\nabla\f\tpmo(\sh\tpmo)$, and 
$\a^*\tp \in \argmaxa \dotp{\g\tp}{\mu(\x\tp) \a - \r\tp}$. Let furthermore $\darlo >0$. Assume the algorithm uses $\alpha\tp(\darlo/3)$, so that $\C\tp = \{\ttheta \in \Re^d : \norm{\thetah\tp - \ttheta}_{\V\tp} \leq \alpha\tp(\darlo/3)\}$. 

Let us define $\muh\tp$ similarly to Proposition \ref{lem:argsort}, i.e.,
$\forall\tau\in\Nat_*$, $\muh\tp$ such that $\forall\I\in\intint{\M}, \muh\tpi=\mu\i(\x\tp)$ and $\muh\tpuseri=\vh\tp\b(\x\tp)^\intercal$ viewed as a column vector, with $\vh$ defined in line 3 of Algorithm \ref{alg:ranking_fw}.
We have:
\begin{align*}
    \sumT \maxa \dotp{\g\tp}{\mu(\x\tp) a - \r\tp} = \sumT \underbrace{\dotp{\g\tp}{\mu(\x\tp) a^*_\tau - \muh\tp a_\tau}}_{:=\tmpboundfirst\tp} &+ \sumT \underbrace{\dotp{\g\tp}{\muh\tp a_\tau - \mu(\x\tp)a_\tau}}_{:= \tmpboundsecond\tp} \\
    &+ \sumT  \underbrace{\dotp{\g\tp}{\mu(\x\tp) a_\tau - r_\tau}}_{:= \azumamarting\tp}
\end{align*}
\paragraph{Step 1: Upper bound on $\sumT \tmpboundfirst\tp$ via optimism} 
Let $\tau\geq 0$. For $\ttheta \in \Re^d$, denote $\mu_{\ttheta}(\x)\in\Re^{\D\times\A}$ (recall $\D=\M+1$), the average reward function where parameters $\ttheta$ replace $\theta$. We first show that for every $\a \in \aS$, we have $\max_{\ttheta \in \C\tp} \dotp{\g\tp}{\mu_{\ttheta}(\x\tp) \a} \leq \dotp{\g\tp}{\vh\tp \a}$, where $\vh\tp$ is given in Line 3 of Algorithm \ref{alg:ranking_fw}. 

Given $\a \in \aS$, let us denote by $\mat{\a}$ the view of $\a$ as an $\M\times\M$ permutation matrix (instead of an $\M^2$-dimensional column vector). Recalling that $\x\tp$ is a $\M\times\d$ matrix and  $\g\tp \in \Re^{\M+1}$, let us denote by $\g\tpseqm$ the vector containing the first $\M$ dimensions of $\g\tp$. We have:
\begin{align}\label{eq:dotprod_as_mat}
    \dotp{\g\tp}{\mu_{\ttheta}(\x\tp) \a} = 
    \g\tpseqm^\intercal \mat{\a}\b(\x\tp)+ \g\tpuseri\ttheta^\intercal\x\tp^\intercal\mat{\a}\b(\x\tp), 
\end{align}
therefore:
\begin{align}
    \max_{\ttheta \in \C\tp}\dotp{\g\tp}{\mu_{\ttheta}(\x\tp) \a} &=
    \g\tpseqm^\intercal\mat{\a}\b(\x\tp)+
    \g\tpuseri\,  \max_{\ttheta \in \C\t} \big(\g\tpuseri\ttheta^\intercal\x\tp^\intercal\mat{\a}\b(\x\tp)\Big) \\
    &\leq \g\tpseqm^\intercal\mat{\a}\b(\x\tp)+ \g\tpuseri \vh\tp^\intercal \mat{\a}\b(\x\tp) = \dotp{\g\tp}{\muh\tp\a}.
\end{align}
The first equality is because $\g\tpuseri \geq 0$. The second equality is deduced by direct calculation from the definition of $\C\tp$ in Lemma \ref{lem:confidence-ellipsoid}, which gives $\vh\tpi=\max_{\ttheta\in\C\tp} \ttheta^\intercal\x\tpi$.

By Proposition \ref{lem:argsort} we have that $\a\tp$ defined at Line 4 of Algorithm \ref{alg:ranking_fw} maximizes $\dotp{\g\tp}{\muh\tp\a}$ over $\a$. We thus have  $\maxa \max_{\ttheta \in \C\tp} \dotp{\g\tp}{\mu_{\ttheta}(\x\tp) a} \leq \dotp{\g\tp}{\muh\tp\a\tp}$.

By Lemma \ref{lem:confidence-ellipsoid}, we have 
$\theta \in \C\tp$ for all $\tau\geq 0$ with probability $1-\darlo/3$. Therefore, with probability $1-\darlo/3$, we have for all $\tau\geq 0$: $\dotp{\g\tp}{\mu_{\theta}(\x\tp) \a^*\tp} \leq  \dotp{\g\tp}{\muh\tp\a\tp}$. Noting that $\mu_{\theta}(\x\tp) = \mu(\x\tp)$ by definition of $\theta$, we obtain that $\forall \tau, \tmpboundfirst\tp \leq 0$ and thus $\sumT \tmpboundfirst\tp\leq 0$ with probability $1 - \darlo/3$. 

\paragraph{Step 2: Upper bound on $\sumT \tmpboundsecond\tp$ using linear bandit techniques} Let $\a\tpi\in\Re^\M$ denote the $\I$-th row of $\mat{\a\tp}$, which contains only $0$s except a $1$ at the rank of item $\I$ in $\a$.
Since $\muh\tp$ and $\mu(\x\tp)$ only differ in the last dimension, which is the user utility, we have, using \eqref{eq:dotprod_as_mat}: 
\begin{align}
    \tmpboundsecond\tp 
    =\g\tpuseri \,\big((\vh\tp - \v(\x\tp))^\intercal\mat{\a\tp}\b(\x\tp)\big)
     =\g\tpuseri \,
    \sumI \big(\vh\tpi - \v\i(\x\tp)\big)\a\tpi^\intercal\b(\x\tp)
\end{align}
Denoting $\ee\tpi=\a\tpi^\intercal\b(\x\tp)\in\Re$ the expected exposure of item $\I$ in ranking $\a\tp$ given context $\x\tp$, we have:
\begin{align}
    \tmpboundsecond\tp &= \underbrace{\g\tpuseri}_{\in [0,\lip]} 
    \sumI \big(\vh\tpi - \v\i(\x\tp)\big)\ee\tpi
     \leq \lip\sumI \Big((\thetah\tpmo-\theta)^\intercal\x\tpi+\alpha\tp(\darlo/3) \normtp{\x\tpi} \Big)\ee\tpi
    \\
    & \leq \lip\sumI \Big(
    \norm{\thetah\tpmo - \theta}_{\V\tpmo}\normtp{\x\tpi}
    +
    \alpha\tp(\darlo/3) \normtp{\x\tpi} \tag*{(by Cauchy-Schwarz)}
    \Big)\ee\tpi
\end{align}
By Lemma \ref{lem:confidence-ellipsoid}, we have, with probability $1-\darlo/3$: $\norm{\thetah\tpmo - \theta}_{\V\tpmo}\leq \alpha\tp(\darlo/3)$, and thus:
\begin{align}
    \tmpboundsecond\tp 
    &\leq 2\lip\alpha\tp\Big(\frac{\darlo}{3}\Big)\sumI \normtp{\x\tpi}\ee\tpi \\
    &= 2\lip\alpha\tp\Big(\frac{\darlo}{3}\Big) \bigg(
    \underbrace{
    \Big(\sumI \normtp{\x\tpi}(\ee\tpi -\e\tpi)\Big)
    }_{\azumamarting\tp'}
    +
    \Big(\sumI \normtp{\x\tpi}\e\tpi \Big)
    \bigg)
\end{align}
We first deal with the sum over $\tau$ of the right-hand side, using $\e\tpi\in\{0,1\}$:
\begin{align}
    \sumT \sumI \normtp{\x\tpi}\e\tpi &
    = \sumT \sumI (\normtp{\x\tpi}\e\tpi)\times \e\tpi\\
    &\leq \sqrt{\sumT \sumI \e\tpi ^2} \sqrt{\sumT \sumI (\normtp{\x\tpi}^2\e\tpi^2)} \tag*{(by Cauchy-Schwarz)}\\
    & \leq \sqrt{\T\MaxRk}\sqrt{\d \ln \Big(1+\frac{\T\boundxS^2\MaxRk}{\lambda\d}\Big)}\tag*{(by \ref{eq:bound_on_det})}.
\end{align}
For the left-hand term, we have that $\big(\sumT \azumamarting\tp'\big)_{\T\in\Nat_*}$ is a martingale adapted to the filtration $\filtact=(\filtact\t)_{\t\in\Nat_*}$ where $\filtact\t$ is the $\sigma$-algebra generated by $(\x\one, \a\one, \r\one, \ldots, \x\tmo, \a\tmo, \r\tmo, \x\t, \a\t)$, with $\big|\azumamarting\tp'\big| \leq \frac{\boundxS\MaxRk}{\sqrt{\lambda}}$. Thus, with probability at least $1-\darlo/3$, we have
\begin{align}
    \sumT\sumI \sumI \normtp{\x\tpi}(\ee\tpi -\e\tpi) \leq \frac{\boundxS\MaxRk}{\sqrt{\lambda}}\sqrt{2\T\ln\frac{3}{\darlo}}\leq \sqrt{2\T\MaxRk\ln\frac{3}{\darlo}}.
\end{align}
Where the last inequality comes from the assumption $\lambda \geq \boundxS^2\MaxRk$.
We conclude this step by saying that with probability $1-2\darlo/3$, we have:
\begin{align}
    \sumT \tmpboundsecond\tp \leq 2\lip\alpha\tp\Big(\frac{\darlo}{3}\Big) \sqrt{\T\MaxRk}\bigg(
    \sqrt{2\ln\frac{3}{\darlo}}
    +\sqrt{\d \ln \Big(1+\frac{\T\boundxS^2\MaxRk}{\lambda\d}\Big)}\bigg).
\end{align}

\paragraph{Step 3: Upper bound on $\sumT \azumamarting\tp$ using Azuma's inequality} 
Following the same arguments as in the proof of Thm. \ref{thm:linucb_multiarm}, let $\filtact=\big(\filtact\tp\big)_{\tau\in\Nat_*}$ be the filtration where $\filtact\tp$ is the $\sigma$-algebra generated by $(\x\one, \a\one, \r\one, \ldots, \x\tpmo, \a\tpmo, \r\tpmo, \x\tp, \a\tp)$. Then $(\azumamarting\tp)_{\tau\in\Nat}$ is a martingale difference sequence adapted to $\filtact$ with $|\azumamarting\tp|\leq \lip\boundK$, so that $\sumT\azumamarting\tp\leq\lip\sqrt{2\T\MaxRk\ln\frac{3}{\darlo}}$ with probability $1-\darlo/3$.

The final result is obtained using a union bound, considering that Step 1 and Step 2 use the same confidence interval given by Lemma \ref{lem:confidence-ellipsoid} which is valid w.p. $\geq 1-\darlo/3$, Step 2 uses an addition Azuma inequality valid w.p. $1-\darlo/3$, and step 3 uses an additional Azuma inequality which valid with probability $\geq 1-\darlo/3$. 
\end{proof}

\regretlinucbfw*
\begin{proof}
Let $\delta>0$ and use $\darlo:=3\delta/4$ and $\dazuma:=\delta/4$ in the bound on $\regh{\T}$ obtained by applying Lemma \ref{lem:Ebar-ranking} and Theorem \ref{thm:convergence:smooth}. Notice that Using $\lambda \geq \boundxS^2\MaxRk$ and $\boundK=O(\MaxRk)$, we have:
\begin{align}
    \boundapprox{\T}{3\delta/4} = O\bigg(\lip\alpha\t(\delta)\sqrt{\T\MaxRk\d\ln(\T/\delta)} + \lip\MaxRk\sqrt{\T\ln(1/\delta)}\bigg)
\end{align}
and $\displaystyle \alpha\t(\delta) = O\Big( \sqrt{\d\ln(\T/\delta)}+\boundtheta\sqrt{\lambda}\Big)$.

We thus get
\begin{align}
    \boundapprox{\T}{\delta} &= O\bigg(\lip\sqrt{\T\MaxRk}\sqrt{\d\ln(\T/\delta)}\Big(\sqrt{\d\ln(\T/\delta)}+\boundtheta\sqrt{\lambda}\Big)+\lip\MaxRk\sqrt{\T\ln(1/\delta)}\bigg)\\
    &=O\bigg(\lip\sqrt{\T\MaxRk}\sqrt{\d\ln(\T/\delta)}\Big(\sqrt{\d\ln(\T/\delta)}+\boundtheta\sqrt{\lambda}+\sqrt{\MaxRk/\d}\Big)\bigg)
\end{align}
For the smooth case, the total bound adds $O(\lip\MaxRk\sqrt{\T\ln(1/\delta)}+\curv{}\frac{\ln\T}{\T})$. A bound on the complete regret is thus
\begin{align}
    \regh{\T}=O\bigg(\lip\sqrt{\T\MaxRk}\sqrt{\d\ln(\T/\delta)}\Big(\sqrt{\d\ln(\T/\delta)}+\boundtheta\sqrt{\lambda}+\sqrt{\MaxRk/
    \d}+\curv{}\frac{\ln\T}{\T}\bigg)
\end{align}

\end{proof}
\section{Additional technical lemmas}
\label{sec:technical_results}

\subsection{Proof of Lemma \ref{lem:s_compact} ($\S$ is compact)}
\label{sec:s_compact}

\siscompact*
\begin{proof}
We start with $\Sseq{\T}$. Let $\xseq{\T}\in\xS^\T$.
We notice that $\Sseq{\T}$ is the image of 
$\convA^\T$
by the continuous mapping $\phi:(\Re^\A)^\T\rightarrow\Re^\D$ defined by $\phi(\a_1, ..., \a\t)=\meanT \mu(\x\tp)\a\tp$. Since $\convA$ is compact, $\convA^\T$ 
is compact as well. $\Sseq{\T}$ is thus the image of a compact set by a continuous function, and is therefore compact.

For the set $\S$, we provide a proof here using Diestel's theorem (see \citep{Yannelis1991}).
Consider the set-valued map defined by $G:\mathcal X \to \{B \mid B \subseteq \mathbb R^D\}$
\begin{equation}
G(x) := \mu(x) \overline{\mathcal A} := \{\mu(x) \overline{a} \mid \overline{a} \in \overline{\mathcal A}\}.
\end{equation}
Then, $\mathcal S$ can be written as the \emph{Aumann integral} of $G$ over $\mathcal X$ w.r.t $P$, i.e.
\begin{equation}
\label{eq:aumann}
\mathcal S = \int_{\mathcal X}G\,\mathrm{d}P := \left\{\int_{\mathcal X} g\,\mathrm{d}P \,\,\Big |\,\, g \in \mathcal G\right\},
\end{equation}
where $\mathcal G \subseteq L^1(\mathcal X,P)$ is the collection of all $P$-integrable selections of $G$, i.e. the collection of all $P$-integrable functions $g:\mathcal X \to \mathbb R$ such that $g(x) \in G(x)$ for $P$-a.e $x \in \mathcal X$.

Now, since $\convA$ is compact, convex and nonempty, the values of the set-valued function $G$ are nonempty, convex, and compact. Moreover, since $\sup_{\x\in\xS, \a\in\convA}\norm{\mu(\x)\a}_2<+\infty$ because $\forall\x, \a, \mu(x)\a\in\K$, the set-valued function $G$ is $P$-integrably bounded in the sense of \citep[Section 2.2]{Yannelis1991}. It then follows from \emph{Diestel's Theorem} \citep[Theorem 3.1]{Yannelis1991} that the collection $\mathcal G$ of $P$-integrable selections of $G$ is weakly compact in $L^1(\mathcal X,P)$. Finally, since $g \mapsto \int_{\mathcal X} g\,\mathrm{d} P$ is a weakly continuous mapping from $L^1(\mathcal X,P)$ to $\mathbb R^D$, and $\mathcal S\subseteq \mathbb R^D$ is the image of $\mathcal G$ under this mapping (refer to the correspondence \eqref{eq:aumann}), we deduce that $\mathcal S$ is weakly compact as a subset of $\mathbb R^D$, and therefore compact since $\mathbb R^D$ is finite-dimensional.
\end{proof}

\subsection{Proof of Lemma \ref{lem:sumpositives}}
\label{sec:proof_lem_sumpositives}
\lemmasumpositives*
\begin{proof}
First, we treat the case where $\sqloss\zero>0$. Then $\forall \tp\in\intint{\T}, \sqsum\tp>0$. We thus have
\begin{align}
    \sumT \frac{\sqloss\tp}{\sqrt{\sqbound\tp}} \leq  \sumT \frac{\sqloss\tp}{\sqrt{\sqsum\tp}} 
\end{align}
We now prove that the right-hand term is $\leq \sqrt{\sqsum\t}$. Let us observe that, for every $\alpha\geq 0, \beta>\alpha$:
\begin{align}
    \frac{1}{2}\frac{\alpha}{\sqrt{\beta}} \leq \sqrt{\beta}-\sqrt{\beta-\alpha},
\end{align}
which is proved using $\sqrt{\beta}-\sqrt{\beta-\alpha} = \int_{\beta-\alpha}^\beta \frac{1}{2\sqrt{s}}ds\geq \alpha\frac{1}{2\sqrt{\beta}}$. Using the telescoping sum (with $\sqsum\zero=0$): 
\begin{align}
    \sumT\frac{\sqloss\tp}{\sqrt{\sqsum\tp}} \leq 2\sumT\Big(\sqrt{\sqsum\tp}-\sqrt{\vphantom{\sqsum\t}\smash{\underbrace{\sqsum\tp-\sqloss\tp}_{=\sqsum\tpmo}}}\Big)=2\sqrt{\sqsum\t}\leq 2\sqrt{\sqbound\t},
\end{align}
we obtain the desired result.

More generally, if $\sqloss\zero=0$, there are two cases:
\begin{enumerate}
    \item if $\forall \t\in\intint{\T}, \sqloss\tp=0$ then the result is true;
    \item otherwise, let $\T_0=\min\{\tau\in\intint{\T}: \sqloss\tp>0\}$. Using the result above, we have:
    \begin{align}
        \sumT \frac{\sqloss\tp}{\sqrt{\sqbound\tp}} = \sum_{\tp=\T_0}^\T \frac{\sqloss\tp}{\sqrt{\sqbound\tp}} \leq 2\sqrt{\sqbound\t}.
    \end{align}
\end{enumerate}
\end{proof}
\end{document}